\newcites{supp}{Supplementary References}
\newcommand{\ours}{GAR\xspace}
\newcommand{\ourss}{CIGAR\xspace}
\def\Figref#1{Fig.~\ref{#1}}
\def\Secref#1{Section~\ref{#1}}
\def\Eqref#1{Eq.~\eqref{#1}}
\def\1{\bm{1}}
\DeclareMathAlphabet{\mathsfit}{\encodingdefault}{\sfdefault}{m}{sl}
\SetMathAlphabet{\mathsfit}{bold}{\encodingdefault}{\sfdefault}{bx}{n}
\newcommand{\tens}[1]{\bm{\mathsfit{#1}}}
\def\tA{{\tens{A}}}
\def\tF{{\tens{F}}}
\def\tM{{\tens{M}}}
\def\tT{{\tens{T}}}
\def\tW{{\tens{W}}}
\def\tY{{\tens{Y}}}
\def\tZ{{\tens{Z}}}
\def\tzero{{\tens{0}}}
\newcommand{\etens}[1]{\mathsfit{#1}}
\def\etZ{{\etens{Z}}}
\newcommand{\laplace}{\mathrm{Laplace}} %
\newcommand{\E}{{\bf E}}
\newtheorem{theorem}{Theorem}[section]
\newtheorem{corollary}{Corollary}[theorem]
\renewcommand{\b}{{\bf b}}
\renewcommand{\c}{{\bf c}}
\newcommand{\f}{{\bf f}}
\renewcommand{\k}{{\bf k}}
\renewcommand{\r}{{\bf r}}
\renewcommand{\u}{{\bf u}}
\renewcommand{\v}{{\bf v}}
\newcommand{\x}{{\bf x}}
\newcommand{\y}{{\bf y}}
\newcommand{\A}{{\bf A}}
\newcommand{\B}{{\bf B}}
\newcommand{\C}{{\bf C}}
\newcommand{\I}{{\bf I}}
\newcommand{\K}{{\bf K}}
\renewcommand{\L}{{\bf L}}
\newcommand{\M}{{\bf M}}
\newcommand{\N}{\mathcal{N}}  %
\newcommand{\Ocal}{\mathcal{O}}
\newcommand{\Lcal}{\mathcal{L}}
\renewcommand{\P}{{\bf P}}
\newcommand{\Q}{{\bf Q}}
\renewcommand{\S}{{\bf S}}
\newcommand{\T}{{\bf T}}
\newcommand{\U}{{\bf U}}
\newcommand{\V}{{\bf V}}
\newcommand{\W}{{\bf W}}
\newcommand{\X}{{\bf X}}
\newcommand{\Y}{{\bf Y}}
\newcommand{\Ncal}{{\mathcal{N}}}
\newcommand{\bphi}{\boldsymbol{\phi}}
\newcommand{\btheta}{\boldsymbol{\theta}}
\newcommand{\bSigma}{\boldsymbol{\Sigma}}
\newcommand{\bGamma}{\mathbf{\Gamma}}
\newcommand{\ben}{\begin{enumerate}}
\newcommand{\een}{\end{enumerate}}
\newcommand{\ie}{{i.e.,}\xspace}
\newcommand{\eg}{{e.g.,}\xspace}
\newcommand{\cmt}[1]{}
\newcommand{\bk}{{\bf k}}
\newcommand{\bPsi}{\boldsymbol{\Psi}}
\newtheorem{lemma}{Lemma}
\newtheorem{Property}{Property}
\def\vecrm {\mathrm{vec}}
\def\vec#1{\mathrm{vec}\left({#1}\right)}
\def\cov {\mathrm{cov}}
\newcommand{\kron}{\otimes}
\newcommand{\0}{{\bf 0}}
\newcommand{\RR}{\mathbb{R}}
\newcommand{\TGP}{\mathcal{TGP}}
\title{GAR: A Revisit to the Classic Linear Autoregressive Model And Its Application To Multi-Fidelity Problems}
\title{GAR: Generalized Linear Autoregressive Model}
\title{GAR: Generalized Autoregression \\ for Multi-Fidelity Fusion}
\title{\LARGE{GAR: Generalized Autoregression for Multi-Fidelity Fusion}}
\title{\Large{GAR: Generalized Autoregression for Multi-Fidelity Fusion}}
\title{{GAR: Generalized Autoregression \\for Multi-Fidelity Fusion}}
\author{%
  Yuxin Wang \thanks{The authors contribute equally to this paper.}
    \\
  School of Mathematical Science\\
   Beihang University \\
  Beijing, China, 100191.  \\
  \texttt{WYXtt\_2011@163.com} \\
  \And
  Zheng Xing \footnote[1] \\
  \\
  Graphics\&Computing Department\\
  Rockchip Electronics Co., Ltd\\
  Fuzhou, China, 350003\\
  \texttt{zheng.xing@rock-chips.com}
  \AND
  Wei W. Xing \thanks{Corresponding author.} \\
  School of Integrated Circuit Science and Engineering, Beihang University, Beijing, China, 100191. \\
  \texttt{wayne.xingle@gmail.com} \\
}
\begin{document}
\maketitle

\begin{abstract}
In many scientific research and engineering applications where repeated simulations of complex systems are conducted, a surrogate is commonly adopted to quickly estimate the whole system.
To reduce the expensive cost of generating training examples, it has become a promising approach to combine the results of low-fidelity (fast but inaccurate) and high-fidelity (slow but accurate) simulations.
Despite the fast developments of multi-fidelity fusion techniques, most existing methods require particular data structures and do not scale well to high-dimensional output.
To resolve these issues, we generalize the classic autoregression (AR), which is wildly used due to its simplicity, robustness, accuracy, and tractability, and propose generalized autoregression (GAR) using tensor formulation and latent features.
\ours can deal with arbitrary dimensional outputs and arbitrary multi-fidelity data structure to satisfy the demand of multi-fidelity fusion for complex problems;
 it admits a fully tractable likelihood and posterior requiring no approximate inference and scales well to high-dimensional problems.
Furthermore, we prove the autokrigeability theorem based on \ours in the multi-fidelity case and develop CIGAR, a simplified GAR with the exact predictive mean accuracy with computation reduction by a factor of $d^3$, where $d$ is the dimensionality of the output.
The empirical assessment includes many canonical PDEs and real scientific examples and demonstrates that
the proposed method consistently outperforms the SOTA methods with a large margin (up to 6x improvement in RMSE) with only a couple high-fidelity training samples.

\end{abstract}

\section{Introduction}
\vspace*{-1em}
The design, optimization, and control of many systems in science and engineering can rely heavily on the numerical analysis of differential equations, which is generally computationally intense.
In this case, a data-driven surrogate model is used to approximate the system based on the input-output data of the numerical solver and to help improve convergence efficiency where repeated simulations are required, \eg in Bayesian optimization (BO) \citep{shahriari2016taking} and uncertainty analysis (UQ) \citep{psaros2022uncertainty}.

With the surrogate model in place, the remaining challenge is that executing high-fidelity numerical simulations to generate training data can still be very expensive.
To further reduce the computational burden, it is possible to combine low-fidelity results to make high-fidelity predictions \cite{kennedy2000predicting}.
More specifically, low-fidelity solvers are normally based on simplified PDEs (\eg reducing the levels of physical detail) or simple solver setup (\eg using a coarse mesh, a large time step, a lower order of an approximating basis, and a higher error tolerance). They provide fast but inaccurate solutions to the original problems whereas the high-fidelity solvers are accurate yet expensive.
The multi-fidelity fusion technique works similar to transfer learning to utilize many low-fidelity observations to improve the accuracy when using only a few high-fidelity samples.
In general, it involves constructions of surrogates for different fidelities and a cross-fidelity transition process.
Due to its efficiency, multi-fidelity method has attracted increasing attention in BO \citep{poloczek2017multi,song2019general}, UQ \citep{parussini2017multi,xing2020greedy}, and surrogate modeling \citep{wang2021multi,xing2021residual}. We refer to \citep{fernandezgodino2017review,peherstorfer2018survey} for a great review.

Despite the success of many state-of-the-art (SOTA) approaches, they normally assume that 
(1) the output dimension is the same and aligned across all fidelities, which generally does not hold for multi-fidelity simulation where the output are quantities at nodes that are naturally not aligned;
(2) the high-fidelity samples' corresponding inputs form a subset of the low-fidelity inputs;
and (3) the output dimension is small, which is not practical for scientific computing where dimension can be 1-million (for a $100\times100\times100$ spatial-temporal field).
These assumptions seriously hinder their applications for practical problems, \eg MRI imaging and solving PDEs in scientific computation.

To resolve these challenges, previous work either uses interpolation to align the dimension \citep{xing2021deep,xing2021residual} or relies on approximate inference with brutal simplification \citep{wang2021multi,li2020deep}, leading to inferior performance. 
We notice that the classic autoregression (AR), which is widely used due to its simplicity, robustness, accuracy, and tractability, consistently shows robust and top-tier performance for different datasets in the literature, despite its incapability for high-dimensional problems.
Thus, instead of proposing another ad-hoc model with pre-processing and simplification (leading to models that are difficult to tune and generalize poorly), 
we generalize AR with tensor algebra and latent features and propose generalized autoregression (GAR), which can deal with arbitrary high-dimensional problems without the subset multi-fidelity data structure.
\ours is a fully tractable Bayesian model with scalability to extremely high-dimensional outputs, without requiring any approximate inference.
The novelty of this work is as follows,
\begin{enumerate}
  \item Generalization of AR for arbitrary non-structured and high-dimensional outputs. With tensor algebra and latent features, \ours allows effective knowledge transfer in closed-form and is scalable to extreme high-dimensional problems.
  \item Generalization to non-subset multi-fidelity data for AR. To the best of our knowledge, we are the first to generalize the closed form solution of subset data to non-subset cases for AR and the proposed \ours.
  \item For the first time, we reveal the autokrigeability \citep{alvarez2011kernels} for the multi-fidelity fusion within an AR structure, based on which we derive
  conditional independent GAR (CIGAR), an efficient implementation of \ours with the exact accuracy in posterior mean predictions.
\end{enumerate}

\section{Backgronud}
\vspace*{-1em}
\subsection{Statement of the problem}\label{sec:Statement}

Given multi-fidelity data $D^i=\{\x^i_n,\y^i_n\}_{n=1}^{N^i}$, for $i=1,\dots,\tau$, where $\x \in \mathbb{R}^l$ denotes the system inputs (a vector of parameters that appear in the system of equations and/or in the initial-boundary conditions for a simulation); $\y^i \in \mathbb{R}^{d^i}$ denotes the corresponding outputs, where $d^i$ is the dimension for $i$ fidelity; $\tau$ is the total number of fidelities.
Generally speaking, higher-fidelity data are closer to the ground truth and are more expensive to obtain. Thus, we have fewer samples for the high fidelity, \ie $N^1 \geqslant  N^2 \geqslant \dots \geqslant N^\tau$. The dimensionality is not necessary the same or aligned across different fidelities.
In most work \eg \citep{legratiet2013bayesian,perdikaris2017nonlinear,peherstorfer2018survey}, the system inputs of higher-fidelity are chosen to be the subset of the lower-fidelity, \ie $\X^\tau \subset \dots \subset \X^2 \subset \X^1$. We call this the subset structure for a multi-fidelity dataset, as opposed to arbitrary data structures, which we will resolve in \Secref{sec gar non subset} with a closed-form solution and extend it to the classic AR.
Our goal is to estimate the function $\f^\tau:\RR^l \rightarrow \RR^{d^\tau}$ given the observation data across different fidelities $\{D^i\}_{i=1}^\tau$.

\subsection{Autoregression}
For the sake of clarity, we consider a two-fidelity case with superscript $h$ indicating high-fidelity and $l$ low-fidelity. 
Nevertheless, the formulation can be generalized to problems with multiple fidelities recursively.
Considering a simple scalar output for all fidelities,
AR \citep{kennedy2000predicting} assumes
\begin{equation}
  \label{eq:ar}
  f^{h}(\x) = \rho f^l(\x) + f^r(\x),
\end{equation} 
where $\rho$ is a factor transferring knowledge from the low fidelity in a linear fashion, whereas $f^r(\x)$ tries to capture the residual information.
If we assume a zero mean Gaussian process (GP) prior \citep{rasmussen2006gaussian} (see Appendix \ref{appe gp} for a brief description) for $f^l(\x)$ and $f^r(\x)$, \ie $f^l(\x)\sim\Ncal(0,k^l(\x,\x'))$ and $f^r(\x)\sim\Ncal(0,k^r(\x,\x'))$, the high-fidelity function also follows a GP. 
This gives an elegant joint GP for the joint observations $\y=[{\y^l};{\y^h}]^T$, 
\begin{equation}
  \label{eq:lar joint prob}
\left(
\begin{array}{c}
\y^l\\
\y^h\\
\end{array}
\right) \sim
\Ncal \left(
  \0,
\begin{array}{cc}
  {\K^l(\X^l,\X^l)} & {\rho\K^l(\X^l,\X^h)} \\
  \rho\K^l(\X^h,\X^l) & \rho^2\K^l(\X^h,\X^h)+\K^r(\X^h,\X^h)
\end{array}
\right)
\end{equation}

where $\y^{l} \in \mathbb{R}^{N_l \times 1} $ is the low-fidelity observations corresponding to input $\X^{l} \in \mathbb{R}^{N_l \times L} $ and $\y^h \in \mathbb{R}^{N_h \times 1} $ is the high-fidelity observations; 
$[\K^l(\X^l,\X^l)]_{ij}=k^l(\x_i,\x_j)$ is the covariance matrix of the low-fidelity inputs $ \x_i,\x_j \in \X^l$; 
$[\K^r(\X^l,\X^l)]_{ij}=k^r(\x_i,\x_j)$ is for the high-fidelity inputs $ \x_i,\x_j \in \X^h$;
$[\K^l(\X^l,\X^h)]_{ij}=k^l(\x_i,\x_j)$ is the cross-fidelity covariance matrix of the low-fidelity inputs $ \x_i \in \X^l$ and high-fidelity inputs $\x_j \in \X^h$, and $\K^l(\X^h,\X^l)=(\K^l(\X^l,\X^h))^T$.
One immediate advantage of AR is that the joint Gaussian form allows not only joint training for all low- and high-fidelity data but also predictions for any given new inputs by a conditional Gaussian (as the posterior is derived in a standard GP \citep{rasmussen2006gaussian}). Furthermore, \citet{legratiet2013bayesian} derive Lemma \ref{lemma1} to reduce the complexity from $O((N^l+N^h)^3)$ to $O((N^l)^3+(N^h)^3)$ with a subset data structure.
\begin{lemma}\citep{legratiet2013bayesian}
\label{lemma1}
If $\X^h \subset \X^l$, the joint likelihood and predictive posterior of AR can be decomposed into two independent parts corresponding to the low- and high-fidelity data.
\end{lemma}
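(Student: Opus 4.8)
The plan is to exploit the independence of the two latent processes $f^l$ and $f^r$ together with the subset design, turning the decomposition into a one-line change of variables rather than a block-matrix grind. First I would record the structural facts implied by \Eqref{eq:ar} and the noiseless priors in \Eqref{eq:lar joint prob}: the observations satisfy $\y^l = f^l(\X^l)$ and $\y^h = \rho f^l(\X^h)+f^r(\X^h)$, with $f^l\sim\Ncal(\0,\K^l)$ independent of $f^r\sim\Ncal(\0,\K^r)$. Because $\X^h\subset\X^l$, the values $f^l(\X^h)$ are literally a sub-vector of $\y^l$; writing $\tilde\y^l$ for those entries, I form the residual $\r := \y^h-\rho\,\tilde\y^l$.

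The crucial observation is that the subset structure forces the residual to depend on $f^r$ alone:
$$\r = \y^h-\rho\,\tilde\y^l = \big(\rho f^l(\X^h)+f^r(\X^h)\big)-\rho f^l(\X^h) = f^r(\X^h).$$
Since $(\y^l,\y^h)\mapsto(\y^l,\r)$ is a linear bijection with unit Jacobian, $\y^l$ is a function of $f^l$ only, $\r$ is a function of $f^r$ only, and $f^l\perp f^r$, the transformed observations are independent Gaussians:
$$p(\y^l,\y^h) = p(\y^l)\,p(\r) = \Ncal\big(\y^l\mid\0,\K^l(\X^l,\X^l)\big)\,\Ncal\big(\r\mid\0,\K^r(\X^h,\X^h)\big).$$
This is exactly the claimed split into a low-fidelity part (a GP on $\y^l$ with kernel $k^l$) and a residual part (a GP on $\r$ with kernel $k^r$); the two factors share no hyperparameters and invert matrices of sizes $N^l$ and $N^h$ separately, yielding the $O((N^l)^3+(N^h)^3)$ cost.

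For the predictive posterior I would reuse the same decoupling: the factorized likelihood gives $p(f^l,f^r\mid\y^l,\y^h)=p(f^l\mid\y^l)\,p(f^r\mid\r)$, each a standard GP posterior. Predicting at a new input $\x_*$ through $f^h(\x_*)=\rho f^l(\x_*)+f^r(\x_*)$ then produces a Gaussian with mean $\rho\,m^l(\x_*)+m^r(\x_*)$ and variance $\rho^2 v^l(\x_*)+v^r(\x_*)$, where $(m^l,v^l)$ and $(m^r,v^r)$ are the two independent GP posteriors. Thus both likelihood and posterior split into the two asserted parts.

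As an independent check, and to pin down exactly where the hypothesis is used, I would verify the factorization by direct Gaussian conditioning on \Eqref{eq:lar joint prob}. With a selection matrix $\P$ such that $\X^h=\P\X^l$ one has $\K^l(\X^h,\X^l)=\P\,\K^l(\X^l,\X^l)$, so the conditional mean collapses to $\rho\P\y^l=\rho\,\tilde\y^l$ and the conditional covariance collapses via
$$\rho^2\K^l(\X^h,\X^h)+\K^r(\X^h,\X^h)-\rho^2\P\,\K^l(\X^l,\X^l)\,\P^\top = \K^r(\X^h,\X^h).$$
I expect this cancellation to be the crux and the only place the subset assumption is truly needed: it is precisely the identity $\P\,\K^l(\X^l,\X^l)\,(\K^l(\X^l,\X^l))^{-1}=\P$ that kills the cross terms. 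When $\X^h\not\subset\X^l$ no such $\P$ exists, $f^l(\X^h)$ must itself be inferred from $\y^l$, and the residual retains leftover low-fidelity uncertainty that couples the two parts — exactly the obstruction deferred to \Secref{sec gar non subset}.
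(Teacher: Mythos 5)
Your proof is correct, but it takes a genuinely different and more economical route than the paper's. The paper proves the lemma by brute force: it writes out the block inverse $\bSigma^{-1}$ of the joint covariance in \Eqref{eq:lar joint prob} via the Sherman--Morrison block formula, substitutes $\Y^h=\rho\E^T\Y^l+\Y^r$ into the quadratic form $\y^T\bSigma^{-1}\y$, and verifies term-by-term cancellation of all cross terms, then repeats a similar block computation for the predictive mean and variance. You instead observe that under the subset (and noiseless) assumption the residual $\r=\y^h-\rho\,\tilde\y^l$ equals $f^r(\X^h)$ exactly, so the unit-Jacobian change of variables $(\y^l,\y^h)\mapsto(\y^l,\r)$ together with the prior independence $f^l\perp f^r$ gives the factorization $p(\y^l,\y^h)=p(\y^l)\,p(\r)$ in one line, and the posterior factorization $p(f^l,f^r\mid\y^l,\y^h)=p(f^l\mid\y^l)\,p(f^r\mid\r)$ follows for free, yielding the same mean $\rho\,m^l(\x_*)+m^r(\x_*)$ and variance $\rho^2 v^l(\x_*)+v^r(\x_*)$ as the paper's explicit computation. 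What your argument buys is brevity, a clear identification of exactly where the hypothesis $\X^h\subset\X^l$ is used (your final cancellation $\rho^2\K^l(\X^h,\X^h)+\K^r-\rho^2\P\K^l\P^\top=\K^r$ is precisely the step that fails in the non-subset case treated in \Secref{sec gar non subset}), and an argument that transfers essentially verbatim to the tensor-variate generalization of Lemma \ref{lemma3: GAR} by replacing $\rho$ with the Tucker/Kronecker operator $\E^T\kron\W$. What the paper's heavier computation buys is the explicit form of $\bSigma^{-1}$ and of the posterior moments, which it then reuses as scaffolding in the subsequent GAR and non-subset derivations, where the clean cancellation no longer holds and the block algebra cannot be avoided.
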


\section{Generalized Autoregression}
\vspace*{-1em}
Let's now consider the more general high-dimensional case.
A naive approach is to simply convert the multi-dimensional output into a scalar value by attaching a dimension index to the input. However, AR will end up with a joint GP with a covariance matrix of the size of $(N^l d^l+N^h d^h)^2$, making it infeasible for modestly high-dimensional problems.

\subsection{Tensor Factorized Generalization with Latent Features}
To resolve the scalability issue, we rearrange all the output into a multidimensional space (i.e., a tensor space) and introduce latent coordinate features to index the outputs to capture their correlations as in HOGP~\citep{zhe2019scalable}.
More specifically, 
we organize the low-fidelity output as a $M$-mode tensor, $\tZ^l \in \RR^{d^l_1 \times \dots \times d^l_{M}}$, where the output dimension $d^l=\prod_{m=1}^M d^l_m$.
The element $\etZ^l$ is indexed based on its coordinates $\c=(c_1,\dots,c_M)$($1 \leqslant c_k \leqslant d_k$ for $k=1,\dots,M$).
If the original data indeed admits a multi-array structure, we can use their original index with actual meaning to index the coordinates.
For instance, a 2D spatial dataset can use its original spatial coordinate to index a single location (pixel).
To improve our model flexibility, we do not have to limit ourselves from using the original index, particularly for the cases where the original data does not admit a multi-array structure or the multi-array structure is of too large size.
In such case, we can use arbitrary tensorization and a latent feature vector
$\v^l_{c_m}$ (whose values are inferred in model training) for each coordinate $c_m$ in mode $m$. This way, the element $\etZ^l$ is indexed by the vector $(\v^l_{c_1},\dots,\v^l_{c_M})$.
Following the linear transformation of \Eqref{eq:ar}, we first introduce the tensor-matrix product \citep*{kolda2006multilinear},
\begin{equation}
  \label{eq:tensor LAR}
  \tF^h(\x) =  \tF^l(\x) \times_1 \W_1, \dots, \times_M \W_M  + \tF^r(\x),
\end{equation}
where $\tF^h(\x)$ denotes target function $\f^h(\x)$ with its output organized into a multi-array $\tZ^h$, and the same concept applies to $\tF^l(\x)$ and $\tF^r(\x)$;
$\times_m$ denotes the tensor-matrix product at mode $m$.
To give a concrete example, considering an arbitrary tensor $\tZ^l \in \RR^{d^l_1\times \dots \times d_M^l}$ and a matrix $\W_m \in \RR^{s \times d_m}$,
the $\times_m$ product is calculated as 
$[\tZ^l \times_m \W_m]_{i_1,\dots,i_{m-1},j,i_{m+1}\dots,i_M}= \sum_{k=1}^{c_m} w_{jk} \etZ_{i_1,\dots,i_{m-1},k,i_{m+1}\dots,i_M}$, which becomes an $d_1^l \times \dots \times d_{m-1}^l \times s \times d_{m+1}^l \times \dots \times d^l_M$ tensor. 
We can further denote the group of $M$ linear transformation matrixes as a Tucker tensor $\tW = [\W_1,\dots,\W_M]$ and represent \Eqref{eq:tensor LAR} compactly using a Tucker operator~\citep{kolda2006multilinear}, $\tF^l(\x) \times \tW$,
which has an important property:
\begin{equation}
  \vec{\tF^h(\x) - \tF^r(\x)} = \left(\W_1 \kron \dots \kron \W_M \right) \vec{\tF^l(\x)}.
\end{equation}

Inspired by AR of \Eqref{eq:ar}, we place a tensor-variate GP (TGP) prior \citep{xu2011infinite} for the low-fidelity tensor function $\tF^l(\x)$ and the residual tensor function $\tF^r(\x)$:
\begin{equation}
  \label{eq: tgp l}
  \tZ^l(\x,\x') \sim \TGP \left(\tzero, k^l(\x,\x'),\S_1^l,\dots,\S_M^l  \right), \tZ^r(\x,\x') \sim \TGP \left(\tzero, k^r(\x,\x'), \S_1^r,\dots,\S_M^r \right),
\end{equation}
where $\S_m^i \in \RR^{d_m \times d_m}$ are the output correlation matrix with $[\S_m^i]_{jk}=\tilde{k}^i_m(\v^i_{c_i},\v^i_{c_k})$ and $\tilde{k}^i_m(\cdot,\cdot)$ being the kernel function (with unknown hyperparameters).
A TGP is a generalization of a multivariate GP that essentially represents a joint GP prior $\vecrm({\tY^l}) \sim \Ncal \left(0, \K^l(\X^l,\X^l) \bigotimes_{m=1}^M \S_m  \right)$.
Similar to the joint probability of \eqref{eq:lar joint prob}, we can derive the joint probability for $\y=[\vecrm({\tY^l})^T,\vecrm(\tY^h)^T]^T$ based on Tucker transformation of \eqref{eq:tensor LAR}; we preserve the proof in the Appendix for clarity. 

\begin{lemma}
  \label{lamma gar joint}
  Given the tensor GP priors for $\tY^l(\x,\x')$ and $\tY^r(\x,\x')$ and the Tucker transformation of \eqref{eq:tensor LAR}, the joint probability for $\y=[\vecrm({\tY^l})^T,\vecrm(\tY^h)^T]^T$ is $\y \sim \Ncal(\0, \bSigma)$, where $\bSigma = $
  \[
    {\small
\label{eq:joint likelihood GAR} 
\left(\
\begin{array}{cc}
{\K^l}(\X^l, \X^l)\kron \left(\bigotimes_{m=1}^M\S_m\right) &  {\K^l(\X^l,\X^h)}\kron \left(\bigotimes_{m=1}^M\S_m\W_m^T\right) \\
\K^l(\X^h,\X^l)\kron \left(\bigotimes_{m=1}^M\W_m\S_m\right) & \K^l(\X^h,\X^h)\kron\left(\bigotimes_{m=1}^M \W_m \S_m \W_m^T\right) +  \K^r(\X^h,\X^h )\kron\left(\bigotimes_{m=1}^M \S^r_m\right)
\end{array}\right).
    }
\]

\end{lemma}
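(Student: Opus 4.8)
The plan is to obtain $\bSigma$ block by block, computing each of the four cross-covariances among $\vec{\tY^l}=\vec{\tF^l(\X^l)}$ and $\vec{\tY^h}=\vec{\tF^h(\X^h)}$ directly from the two defining ingredients: the linear Tucker relation of \eqref{eq:tensor LAR} and the independence of the priors placed on $\tF^l$ and $\tF^r$. First I would record the stacked form of the transformation. Since $\W_1\kron\dots\kron\W_M$ acts on the output modes of a single sample, lifting it across all $N^h$ high-fidelity inputs gives
\[
\vec{\tF^h(\X^h)} = \left(\I\kron\W_1\kron\dots\kron\W_M\right)\vec{\tF^l(\X^h)} + \vec{\tF^r(\X^h)},
\]
where $\I$ is the $N^h\times N^h$ identity acting on the input index. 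The TGP priors then supply the building blocks $\Cov\!\left(\vec{\tF^l(\X)},\vec{\tF^l(\X')}\right)=\K^l(\X,\X')\bigotimes_m\S_m$ and the analogous expression for $\tF^r$ with $k^r$ and $\S_m^r$.

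Next I would treat each block in turn. The $(1,1)$ block is immediate from the low-fidelity prior, giving $\K^l(\X^l,\X^l)\kron\bigotimes_m\S_m$. For the $(1,2)$ block, substituting the stacked relation and using $\Cov\!\left(\vec{\tF^l(\X^l)},\vec{\tF^r(\X^h)}\right)=\0$ by independence leaves $\Cov\!\left(\vec{\tF^l(\X^l)},\vec{\tF^l(\X^h)}\right)\left(\I\kron\W_1\kron\dots\kron\W_M\right)^T$. I would then invoke the mixed-product property $(\A\kron\B)(\C\kron\D)=(\A\C)\kron(\B\D)$ mode by mode, so that the input factor collapses to $\K^l(\X^l,\X^h)\I=\K^l(\X^l,\X^h)$ and each output factor becomes $\S_m\W_m^T$, yielding $\K^l(\X^l,\X^h)\kron\bigotimes_m\S_m\W_m^T$. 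The $(2,1)$ block follows by transposition together with the symmetry $\S_m=\S_m^T$. Finally, for the $(2,2)$ block I would expand $\Cov\!\left(\vec{\tF^h(\X^h)}\right)$, drop the cross terms by independence, and apply the mixed-product property twice to the low-fidelity contribution, producing $\K^l(\X^h,\X^h)\kron\bigotimes_m\W_m\S_m\W_m^T$; the residual prior contributes the additive term $\K^r(\X^h,\X^h)\kron\bigotimes_m\S_m^r$. Assembling the four blocks reproduces $\bSigma$, and since each entry is a covariance among jointly Gaussian quantities, the joint law is exactly $\Ncal(\0,\bSigma)$.

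The substantive computations here are purely the Kronecker manipulations, which are routine; the only place demanding care is bookkeeping the vectorization order, so that $\W_1\kron\dots\kron\W_M$ is correctly lifted to $\I\kron\W_1\kron\dots\kron\W_M$ on the stacked outputs and the mixed-product property is matched against the right input versus output factors. I expect this index bookkeeping, rather than any conceptual difficulty, to be the main obstacle, since a mismatched ordering would spuriously entangle the input covariance $\K^l$ with the mode matrices $\W_m$ and corrupt every off-diagonal block.
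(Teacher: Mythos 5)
Your proof is correct, and its overall architecture matches the paper's: both compute $\bSigma$ block by block using bilinearity of covariance, independence of the $\tF^l$ and $\tF^r$ priors, and the Kronecker mixed-product identity. The one genuine difference is how the per-sample Tucker map is lifted to the stacked data. The paper's appendix proof writes the lift through a selection matrix $\E$ satisfying $\X^h=\E^T\X^l$, i.e.\ it takes the low-fidelity values entering the transformation to be rows of the \emph{observed} $\tY^l$, so the off-diagonal blocks emerge from the selection identity $\K^l(\X^l,\X^l)\,\E=\K^l(\X^l,\X^h)$; this presupposes the subset structure. You instead apply $\I_{N^h}\kron\W_1\kron\dots\kron\W_M$ to $\vec{\tF^l(\X^h)}$, the latent low-fidelity process evaluated at the high-fidelity inputs, and read the cross-covariance $\K^l(\X^l,\X^h)\kron\bigotimes_{m}\S_m$ directly from the TGP prior. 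This buys real generality: your argument establishes the lemma for arbitrary, non-subset $\X^h$, which is exactly the scope claimed in the main text (``it does not require a subset dataset to hold''), whereas the paper's own proof, as written, only covers the subset case. The cost is nil --- the Kronecker algebra is identical --- provided, as you note, the vectorization order is kept consistent so that the input factor $\I_{N^h}$ and the output factors $\W_m$ pair with $\K^l$ and $\S_m$ respectively in the mixed-product steps.
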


Lemma \ref{lamma gar joint} admits any arbitrary outputs (living in different spaces, having different dimension and/or mode, and being unaligned) at different fidelity. Also, it does not require a subset dataset to hold.

\begin{corollary}
  Lemma \ref{lamma gar joint} can be applied to data with a different number of mode at each fidelity, \ie $M^l \neq M^h$, if we add a redundancy index such that all outputs have the same $M$ number of modes.
\end{corollary}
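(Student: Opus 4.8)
The plan is to reduce the unequal-mode case to Lemma~\ref{lamma gar joint} itself, by showing that appending redundancy (size-one) modes is a data-preserving reshaping under which every object entering the joint covariance is invariant. Without loss of generality I would pad whichever fidelities have fewer modes and set $M = \max_i M^i$; for concreteness suppose $M^l < M^h = M$. I first reshape the low-fidelity tensor $\tZ^l \in \RR^{d^l_1 \times \dots \times d^l_{M^l}}$ into an $M$-mode tensor by appending $M - M^l$ trivial modes of size $1$, i.e. $\tZ^l \mapsto \tilde{\tZ}^l \in \RR^{d^l_1 \times \dots \times d^l_{M^l} \times 1 \times \dots \times 1}$, and likewise for $\tF^l$ and $\tF^r$. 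Because each appended ``redundancy index'' ranges over a single value, this map is a linear bijection that leaves the vectorization unchanged, $\vec{\tilde{\tZ}^l} = \vec{\tZ^l}$. For every appended mode $m \in \{M^l+1,\dots,M\}$ I assign the $1\times 1$ low-fidelity correlation $\S^l_m = [1]$ and the transformation factor $\W_m \in \RR^{d^h_m \times 1}$ lifting the singleton coordinate to the corresponding high-fidelity mode.

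Next I would verify that the TGP priors of \eqref{eq: tgp l} and the Tucker transformation of \eqref{eq:tensor LAR} are untouched by this reshaping. The key fact is the Kronecker identity $\A \kron [1] = \A$, so that any singleton mode contributes trivially: $\bigotimes_{m=1}^M \S^l_m = \bigotimes_{m=1}^{M^l}\S^l_m$, whence the prior covariance $\K^l(\X^l,\X^l) \kron \bigotimes_{m=1}^M \S^l_m$ is identical to the original. Applying the same identity to $\bigotimes_m \W_m$ shows the transformation property $\vec{\tF^h - \tF^r} = (\W_1 \kron \dots \kron \W_M)\vec{\tF^l}$ still holds with the padded modes, since the appended factors multiply the already-present Kronecker product without altering its nonzero entries. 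Consequently every block of $\bSigma$ in Lemma~\ref{lamma gar joint} has identical entries whether evaluated on the original $M^l$- and $M^h$-mode tensors or on the padded $M$-mode tensors, and the sizes match because $\prod_{m=1}^M d^l_m = d^l$ and $\prod_{m=1}^M d^h_m = d^h$ are preserved.

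With both fidelities now expressed on a common $M$-mode space, Lemma~\ref{lamma gar joint} applies verbatim and yields the stated joint Gaussian for $\y = [\vecrm(\tY^l)^T, \vecrm(\tY^h)^T]^T$; since the vectorized observations and $\bSigma$ are invariant, the likelihood and predictive posterior are unchanged, so the reshaping is genuinely cost-free. The one step requiring care, and the main obstacle, is the mixed mode where the low fidelity is a singleton but the high fidelity is not: there $\W_m$ is a rectangular column vector rather than a square $1\times 1$ matrix, so I must check that the cross-covariance factor $\S^l_m \W_m^T \in \RR^{1 \times d^h_m}$ and the residual factor $\S^r_m \in \RR^{d^h_m \times d^h_m}$ still assemble into correctly-sized off-diagonal and diagonal blocks. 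This is exactly the rectangular-$\W_m$ generality already permitted by Lemma~\ref{lamma gar joint}, so no additional derivation is needed; the argument reduces to confirming that the redundancy index is consistent with that generality.
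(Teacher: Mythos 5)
Your proposal is correct and takes essentially the same approach the paper intends: the paper states this corollary without any separate proof, treating the padding construction named in its statement as immediate, and your writeup simply supplies the verification the paper omits --- invariance of the vectorization under appending singleton modes, the appended $\S^l_m = [1]$ factors dropping out via $\A \kron [1] = \A$, and the rectangular $\W_m \in \RR^{d^h_m \times 1}$ for mixed modes being covered by the rectangular-$\W_m$ generality already present in the lemma and its appendix proof. No discrepancy with the paper's reasoning.
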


Lemma \ref*{lamma gar joint} defines our \ours model, a generalized AR with special tensor structures.
The covariance for low-fidelity is $\cov(\etZ^l_\c(\x),\etZ^l_{\c'}(\x'))=k^l(\x,\x')\prod_{m=1}^M \tilde{k}^l_m(\v^m_{c_m},\v^m_{c'_m})$,
cross-fidelity $\cov(\etZ^l_\c(\x),\etZ^h_{\c'}(\x'))=k^l(\x,\x')\prod_{m=1}^M \tilde{k}^l_m(\v^l_{c_m},\v^l_{c'_m}) w^m_{c,c'} $ (where $w^m_{c,c'}$ is the ${c,c'}$-th element of $\W^m$),
and high-fidelity $\cov(\etZ^h_\c(\x),\etZ^h_{\c'}(\x'))=k^l(\x,\x')\prod_{m=1}^M \tilde{k}^l_m(\v^l_{c_m},\v^l_{c'_m}) (w^m_{c,c'})^2 + k^h(\x,\x')\prod_{m=1}^M \tilde{k}^r_m(\u^m_{c_m},\u^m_{c'_m}) (w^m_{c,c'})^2 $.
The complex between-fidelity output correlations are captured using latent features $\{\V^m,\U^m\}_{m=1}^M$ with arbitrary kernel function $\tilde{k}_m^i$, whereas the cross-fidelity output correlations are captured in a composite manner. This combination overcomes the simple linear correlations assumed in previous work that simply decomposes the output as a dimension reduction preprocess \citep{xing2021deep}. 
When the dimensionality aligns for $\tZ^l$ and $\tZ^h$ and thus $d^l_m=d^h_m$, we can share the same latent features across the two fidelities by letting $\v^m_j=\u^m_j$ while keeping the kernel functions different. This way, the latent features are more resistant to overfitting. 
For non-aligned data with explicit indexing, we can use kernel interpolation \citep{wilson2015kernel} for the same purpose.
To further encourage sparsity in the latent feature, we impose a Laplace prior, \ie $\v^m_j \sim \laplace(\lambda) \propto \exp(-\lambda ||\v^m_j||_1)$.

\subsection{Efficient Model Inference for Subset Data Structure}
With the model fully defined, we can now train the model to obtain all unknown model parameters.
For compactness, we use the following compact notation 
$\S^l=\bigotimes_{m=1}^M \S^l_m$,
$\S^h=\bigotimes_{m=1}^M \S^h_m$,
$\W=\bigotimes_{m=1}^M \W_m$,
$\K^l=\K^l(\X^l, \X^l)$,
$\K^{lh}=\K^l(\X^l, \X^h)$,
$\K^{hl}=\K^l(\X^h, \X^l)$,
$\K^{lr}=\K^l(\X^h, \X^h)$,
and,
$\K^r=\K^r(\X^h, \X^h)$ (with a slight abuse of notation).

\begin{lemma}\label{lemma3: GAR}
Tensor generalization of Lemma \ref{lemma1}.
If $\X^h \subset \X^l$, the joint likelihood $\Lcal$ for $\y=[\vecrm({\tY^l})^T,\vecrm(\tY^h)^T]^T$ admits two independent separable likelihoods $\Lcal = \Lcal^l + \Lcal^r$, where
\[
  \Lcal^l = -\frac{1}{2}\vec{\tY^l}^T (\K^l\otimes\S^l )^{-1} \vec{\tY^l} - \frac{1}{2} \log|\K^l\otimes\S^l | - \frac{N^l D^l}{2} \log(2\pi),
  \]
\[ 
  \Lcal^r = -\frac{1}{2} \vec{\tY^h- \tY^l \times \hat{\tW}}^T (\K^r\otimes\S^r)^{-1} \vec{\tY^h- \tY^l \times \hat{\tW}} - \frac{1}{2} \log|\K^r\otimes\S^r| - \frac{N^h D^h}{2}  \log(2\pi),
  \]
  where $\hat{\tW} = [\E,\W_1,\dots,\W_M]$ is a Tucker tensor with selection matrix $\E^T \X^l=\X^h$. 
\end{lemma}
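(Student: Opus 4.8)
\emph{Proof plan.} The strategy is to pass from the observation coordinates $(\vec{\tY^l},\vec{\tY^h})$ to \emph{residual coordinates} $(\vec{\tY^l},\vec{\tY^r})$ with $\tY^r=\tY^h-\tY^l\times\hat{\tW}$, and to show that in these coordinates the joint Gaussian of Lemma~\ref{lamma gar joint} becomes block diagonal with blocks $\K^l\otimes\S^l$ and $\K^r\otimes\S^r$. Concretely, I would write the change of variables as $(\vec{\tY^l};\vec{\tY^r})=\B\,(\vec{\tY^l};\vec{\tY^h})$ with $\B=\left(\begin{smallmatrix}\I&\0\\-\Q&\I\end{smallmatrix}\right)$, so that the transformed covariance is $\B\bSigma\B^T$. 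Because $\B$ is unit lower block-triangular, $\det\B=1$, so the density factorizes exactly and $\Lcal=\Lcal^l+\Lcal^r$ follows immediately by taking logarithms.

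First I would put the Tucker transformation into matrix form. Using the vec identity $\vec{\tF^h-\tF^r}=(\W_1\kron\dots\kron\W_M)\vec{\tF^l}$ together with the selection acting on the sample mode, the map $\tY^l\mapsto\tY^l\times\hat{\tW}$ becomes $\vec{\tY^l\times\hat{\tW}}=\Q\,\vec{\tY^l}$ with $\Q=\E^T\kron\W$ and $\W=\bigotimes_{m=1}^M\W_m$, matching the sample/output Kronecker ordering used in $\bSigma$. The two facts I would lean on are the mixed-product rule $(\A\kron\C)(\C\kron\D)=(\A\C)\kron(\C\D)$ (more precisely $(\A\kron\B)(\C\kron\D)=(\A\C)\kron(\B\D)$), which collapses $\bigotimes_m(\S_m\W_m^T)=\S^l\W^T$ and $\bigotimes_m(\W_m\S_m\W_m^T)=\W\S^l\W^T$, and the subset identities $\K^{lh}=\K^l\E$, $\K^{hl}=\E^T\K^l$, $\K^{lr}=\E^T\K^l\E$, which hold precisely because $\E^T\X^l=\X^h$ selects a subset of the low-fidelity inputs.

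The remaining work is the block computation of $\B\bSigma\B^T$. For the $(1,2)$ block I would check $\bSigma_{11}\Q^T=(\K^l\otimes\S^l)(\E\kron\W^T)=(\K^l\E)\kron(\S^l\W^T)=\bSigma_{12}$, so the off-diagonal cancels to $\0$, while the $(1,1)$ block is unchanged at $\K^l\otimes\S^l$, giving $\Lcal^l$. The crux is the $(2,2)$ block $\Q\bSigma_{11}\Q^T-\Q\bSigma_{12}-\bSigma_{21}\Q^T+\bSigma_{22}$: via the mixed-product rule and $\E^T\K^l\E=\K^{lr}$, each of the first three terms reduces to the single matrix $\K^{lr}\otimes(\W\S^l\W^T)$, which is exactly the first summand of $\bSigma_{22}$; the signs are $+,-,-$ against the $+$ inside $\bSigma_{22}$, so these four copies cancel and leave only $\K^r\otimes\S^r$, giving $\Lcal^r$. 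Finally, since $\det\B=1$ the determinant splits as $|\bSigma|=|\K^l\otimes\S^l|\,|\K^r\otimes\S^r|$ and the normalizing constant splits as $N^lD^l+N^hD^h$, completing $\Lcal=\Lcal^l+\Lcal^r$. The main obstacle is purely bookkeeping: keeping the sample-mode factors ($\E$, $\K$) and the output-mode factors ($\W$, $\S$) in the correct Kronecker slots so that the mixed-product rule and the subset identities align; once the ordering is fixed, every cancellation is immediate.
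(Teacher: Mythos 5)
Your proof is correct, but it takes a genuinely different route from the paper's. The paper proves this lemma by brute force: it first computes all four blocks of $\bSigma^{-1}$ explicitly via the Sherman--Morrison block-inversion formula (using the subset identity $\K^l(\X^h,\X^l)\K^l(\X^l,\X^l)^{-1}\K^l(\X^l,\X^h)=\K^l(\X^h,\X^h)$ to collapse the Schur complement to $\K^r\kron\S^r$), then substitutes $\vecrm(\tY^h)=(\E^T\kron\W)\vecrm(\tY^l)+\vecrm(\tY^r)$ into the quadratic form $\y^T\bSigma^{-1}\y$ and tracks a long cascade of cancelling cross terms, and finally invokes the block-determinant formula for $|\bSigma|$. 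You instead never touch $\bSigma^{-1}$: you conjugate by the unit lower block-triangular matrix $\B=\left(\begin{smallmatrix}\I&\0\\-\Q&\I\end{smallmatrix}\right)$ with $\Q=\E^T\kron\W$, verify via the mixed-product rule and the subset identities ($\K^{lh}=\K^l\E$, $\K^{lr}=\E^T\K^l\E$) that $\B\bSigma\B^T$ is block diagonal with blocks $\K^l\kron\S^l$ and $\K^r\kron\S^r$, and then let $\det\B=1$ do all the work for both the determinant split and the change of variables to residual coordinates. This is the block-LDL (congruence) viewpoint, and it buys a substantially shorter argument: the four-term cancellation in your $(2,2)$ block replaces the paper's page-long expansion of the quadratic form, and the determinant factorization is automatic rather than a separate appeal to the block-determinant formula. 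What the paper's route buys in exchange is the explicit expression for $\bSigma^{-1}$ itself (its Eq.~(A.15)), which it then reuses downstream to derive the predictive posterior; with your approach that inverse would still have to be computed (or obtained as $\B^T(\B\bSigma\B^T)^{-1}\B$) when it is needed later. Your one point of bookkeeping care --- that the sample-mode factor ($\E^T$, $\K$) and output-mode factor ($\W$, $\S$) must sit in the same Kronecker slots as in Lemma~\ref{lamma gar joint} --- is exactly right, and matches the convention the paper itself settles into (its Eq.~(A.16) uses $\E^T\kron\W$, despite the stray $\E$ earlier in its appendix).
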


We preserve the proof in Appendix for clarity. Note that $\Lcal^l$ and $\Lcal^r$ are HOGP likelihoods for $\tY^l$ and the residual $\tY^h- \tY^l \times \hat{\tW}$, respectively.
Since the computational of $\Lcal^l$ and $\Lcal^r$ are independent, the model training can be conducted efficiently in parallel.

{\bf{Predictive posterior.}} Similarly, we can derive the concrete predictive posterior for the high-fidelity outputs by integrating out the latent functions after some tedious linear algebra (see Appendix), which is also Gaussian,
$\vecrm(\tZ^h_*) \sim \mathcal{N}(\vecrm(\bar{\tZ}^h_*), {\S^h_*})$, where
\begin{equation}
  \label{eq gar post}
	\begin{aligned}
		\vecrm({\bar{\tZ}^h_*})
		&=\left( \k^l_* \left(\K^l\right)^{-1} \kron \W \right)\vecrm({\tY^l}) + \left( \k^r_* \left(\K^r\right)^{-1} \kron \I \right)\vecrm({\tY^r}),\\
	    \S_*^h 
	    &=  \left(k^l_{**} - (\k^l_*)^T \left(\K^l\right)^{-1} \k^l_* 	\right) \kron \W \S^l \W^T + { \left(k^r_{**} -\left(\k^r_*\right)^T (\K^r)^{-1} \k^r_* \right) \kron \S^r },
	\end{aligned}
\end{equation}
$\k^l_*=\k^l(\x_*,\X^l)$ is the vector of covariance between the give input $\x_*$ and low-fidelity observation inputs $\X^l$; similarly, we have $\k^l_{**}=\k^l(\x_*,\x_*)$, $\k^r_*=\k^r(\x_*,\X^h)$, $\k^r_{**}=\k^r(\x_*,\x_*)$.

\subsection{Generalization for Non-subset Data: Efficient Model Inference and Prediction}
\label{sec gar non subset}
In practice, it is sometimes difficult to ask the multi-fidelity data to preserve a subset structure, particularly in the case of multi-fidelity Bayesian optimization \cite{perrone2018scalable,li2020multi}.
This presents the challenge for most SOTA multi-fidelity models \eg NAR \cite{perdikaris2017nonlinear}, ResGP~\cite{xing2021residual}, stochastic collocation~\cite{narayan2014stochastic}. 
In contrast, the advantage of AR is that even if the multi-fidelity data does not admit a subset data structure, the model can still be trained using all available data based on the joint likelihood of \eqref{eq:joint likelihood GAR}. 
However, this method lacks scalability due to the inversion of the large joint covariance matrix $\bSigma$. The situation gets worse if we are dealing with multi-fidelity data with more than two fidelities.
To resolve this issue, we propose a fast inference method based on 
imaginary subset. More specifically, considering the missing low-fidelity data as latent variables $\hat{\tY}^l$, the joint likelihood function is
\begin{equation}
  \begin{aligned}
    \log  p(\tY^l,\tY^h) 
    &= \log \int p(\tY^l,\tY^h, \hat{\tY}^l) d \hat{\tY}^l
     = \log \int \left( p(\tY^h|\hat{\tY}^l,\tY^l)  p(\hat{\tY}^l|\tY^l) p(\tY^l) \right) d \hat{\tY}^l \\
    & = \log \int p(\tY^h |\hat{\tY}^l,\tY^l) p(\hat{\tY}^l|\tY^l) d \hat{\tY}^l + \log p(\tY^l),
  \end{aligned}
\end{equation}
where $p(\tY^h |\hat{\tY}^l,\tY^l)$ is the likelihood in Lemma \ref*{lemma3: GAR} given the complementary imaginary subset,
and $p(\hat{\tY}^l|\tY^l)\sim \Ncal(\bar{\tY}^l,\hat{\S}^l\kron\S^l)$ is the imaginary posterior with the given low-fidelity observations $\tY^l$.
The integral can be calculated using Gaussian quadrature or other sampling methods as in \citep{wang2021multi,cutajar2019deep}, which is slow and inaccurate.

\begin{lemma}
  \label{lamma gar likelihood}
   The joint likelihood of \ours for non-subset (and also unaligned) data can be decomposed into two independent GPs' likelihood
  {\small
  \begin{equation}
    \label{eq: gar likelihood}
    \begin{aligned}
      &\log p(\tY^l, \tY^h) 
      =\mathcal{L}^l-\frac{N^hd^h}{2}\log(2\pi)-\frac{1}{2}\log
      \left|{\K}^r\kron\S^r+\hat{\E}\hat{\S}^l\hat{\E}^T\kron\W^T\S^l{\W}\right|\\
      &-\frac{1}{2}\left[\left(\begin{array}{c}
        \vecrm(\check{\tY}^h)\\
        \vecrm(\hat{\tY}^h)
      \end{array}\right)^T-\left(\begin{array}{c}\vecrm(\check{\tY}^l) \\ \vecrm(\bar{\tY}^l)
      \end{array}\right)^T
  		\tilde{\W}^T\right]
  	  \left({\K}^r\kron\S^r+\hat{\E}\hat{\S}^l\hat{\E}^T\kron\W^T\S^l{\W}\right)^{-1}
  	  \left[\left(\begin{array}{c}
      \vecrm(\check{\tY}^h)\\
      \vecrm(\hat{\tY}^h)	\end{array}\right)-\tilde{\W}\left(\begin{array}{c}\vecrm(\check{\tY}^l) \\ \vecrm(\bar{\tY}^l)
      \end{array}\right)\right],
    \end{aligned}
  \end{equation}
  }
where $\mathcal{L}^l$ is the likelihood for low-fidelity data $\tY^l$, $ \tilde{\W} = \I_{N^h}\kron\W $
$\hat{\tY}^h$ is the collection of high-fidelity observations corresponding to the imaginary low-fidelity outputs $\hat{\tY}^l$; $\check{\tY}^h$ is the complement (with selection matrix $\check{\X}^h = \E^T \X^l$) corresponding to low-fidelity outputs $\check{\tY}^l$, \ie $\tY^h=[\check{\tY}^h,\hat{\tY}^h]$; and $\hat{\X}^h = \hat{\E}^T \X^h$ are the selection matrix for $\hat{\tY}^l$.

\end{lemma}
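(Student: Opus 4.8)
The plan is to reduce the non-subset case to the already-established subset case of Lemma~\ref{lemma3: GAR} by data augmentation, and then carry out a single Gaussian marginalization. First I would introduce imaginary low-fidelity responses $\hat{\tY}^l$ at exactly those high-fidelity inputs that possess no genuine low-fidelity counterpart, so that the augmented low-fidelity input set $\{\X^l,\hat{\X}^h\}$ contains $\X^h$ as a subset. Writing the marginal as in the displayed integral preceding the statement, I would identify $\log p(\tY^l)=\mathcal{L}^l$ immediately (it is the pure HOGP likelihood of Lemma~\ref{lemma3: GAR}), so that the entire task reduces to evaluating $\log\int p(\tY^h\mid\hat{\tY}^l,\tY^l)\,p(\hat{\tY}^l\mid\tY^l)\,d\hat{\tY}^l$.

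For the conditional factor, since the augmented data are now subset-structured, Lemma~\ref{lemma3: GAR} applies verbatim: $p(\tY^h\mid\hat{\tY}^l,\tY^l)$ is Gaussian with residual covariance $\K^r\kron\S^r$ and mean equal to the Tucker transform of the complete low-fidelity field at $\X^h$. Ordering the $N^h$ inputs so that $\tY^h=[\check{\tY}^h;\hat{\tY}^h]$ splits into the block with true low-fidelity observations $\check{\tY}^l$ and the block driven by the imaginary $\hat{\tY}^l$, this mean is $\tilde{\W}\,[\vecrm(\check{\tY}^l);\vecrm(\hat{\tY}^l)]$ with $\tilde{\W}=\I_{N^h}\kron\W$. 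For the second factor, $p(\hat{\tY}^l\mid\tY^l)=\Ncal(\bar{\tY}^l,\hat{\S}^l\kron\S^l)$ is just the standard low-fidelity GP posterior predictive at the imaginary inputs, whose mean $\bar{\tY}^l$ and covariance $\hat{\S}^l\kron\S^l$ I would read off from the $\K^l$-block of the joint prior in Lemma~\ref{lamma gar joint}.

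With both factors Gaussian and linearly coupled through $\hat{\tY}^l$, the integral is a textbook convolution: applying the identity $\int\Ncal(y\mid Az+b,R)\,\Ncal(z\mid\mu,Q)\,dz=\Ncal(y\mid A\mu+b,\,R+AQA^T)$ with $z=\vecrm(\hat{\tY}^l)$ collapses the marginal to a single Gaussian in $\vecrm(\tY^h)$. Substituting $\bar{\tY}^l$ for the unknown block yields exactly the mean correction $\tilde{\W}\,[\vecrm(\check{\tY}^l);\vecrm(\bar{\tY}^l)]$ appearing in the quadratic form, while the inflation term $AQA^T$ — the image of the imaginary posterior covariance $\hat{\S}^l\kron\S^l$ under the embedding-and-Tucker map associated with $\hat{\E}\kron\W$ — produces the Kronecker factor $\hat{\E}\hat{\S}^l\hat{\E}^T\kron\W^T\S^l\W$ by the mixed-product rule, matching both the log-determinant and the inverse in \eqref{eq: gar likelihood}.

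I expect the main obstacle to be the index bookkeeping rather than any deep analytic step. One must keep the selection matrices $\E$, $\hat{\E}$ and the partition $\tY^h=[\check{\tY}^h;\hat{\tY}^h]$ mutually consistent, so that the known block $\check{\tY}^l$ contributes no variance to the inflation term while the imaginary block carries the full $\hat{\S}^l\kron\S^l$; and one must track the vectorized Tucker convention carefully to obtain the precise transpose pattern in the inflation factor. Once these are pinned down, everything else is the mechanical composition of the two Gaussians above.
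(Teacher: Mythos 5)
Your proposal is correct and, at the level of strategy, it is the paper's own proof: the same imaginary-subset construction, the same decomposition $\log p(\tY^l,\tY^h)=\mathcal{L}^l+\log\int p(\tY^h\mid\hat{\tY}^l,\tY^l)\,p(\hat{\tY}^l\mid\tY^l)\,d\hat{\tY}^l$, the same identification of the conditional factor via Lemma~\ref{lemma3: GAR}, and the same reading of $p(\hat{\tY}^l\mid\tY^l)=\Ncal(\bar{\tY}^l,\hat{\S}^l\kron\S^l)$ as the low-fidelity GP posterior at the imaginary inputs. Where you genuinely diverge is in executing the integral. The paper does it by hand: it expands both quadratic forms with the selection matrices, splits them into subset, non-subset, and cross terms, completes the square in $\vecrm(\hat{\tY}^l)$, performs the Gaussian integral, and then applies the Sherman--Morrison/Woodbury identity three separate times to collapse the resulting pieces into $\left(\K^r\kron\S^r+\bPsi(\hat{\S}^l\kron\S^l)\bPsi^T\right)^{-1}$, with a parallel manipulation for the determinant. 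Your single appeal to the linear-Gaussian marginalization identity $\int\Ncal(y\mid Az+b,R)\,\Ncal(z\mid\mu,Q)\,dz=\Ncal(y\mid A\mu+b,\,R+AQA^T)$, with $A=\bPsi=\hat{\E}\kron\W$ obtained by composing $\tilde{\W}=\I_{N^h}\kron\W$ with the embedding of the imaginary block, produces the mean correction, the inflated covariance, and the determinant in one step; it is exactly the compact form the paper reconstructs after several pages of algebra, so your route is shorter and less error-prone, while the paper's explicit manipulations have the side benefit that they are reused almost verbatim in its subsequent derivation of the non-subset predictive posterior. One small caveat, which affects your write-up and the paper's statement equally: the mixed-product rule gives $\bPsi(\hat{\S}^l\kron\S^l)\bPsi^T=\left(\hat{\E}\hat{\S}^l\hat{\E}^T\right)\kron\left(\W\S^l\W^T\right)$, not $\hat{\E}\hat{\S}^l\hat{\E}^T\kron\W^T\S^l\W$ as written in \eqref{eq: gar likelihood}; since $\W_m\in\RR^{d^h_m\times d^l_m}$, the form $\W^T\S^l\W$ is not even dimensionally consistent with $\S^r$, so the transpose orientation in the lemma is a typo that your derivation would in fact correct.
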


We preserve the proof in the appendix.
Notice that 
$ \hat{\E}\hat{\S}^l\hat{\E}^T\kron\W^T\S^l{\W} = \left(\begin{array}{cc}
\0 & \0 \\ \0 & \hat{\S}^l\end{array}\right)\kron\W^T\S^l{\W}$ is the low-right block of the predictive variance for the missing low-fidelity observations $\vecrm(\hat{\tY})$;
We can easily understand the last part of the likelihood as a GP with accumulated uncertainty(variance) added to the corresponding missing points. Lemma \ref{lamma gar likelihood} naturally applies to AR when the outputs is a scaler, where $\W=\rho$, $\S^l=1$, and $\S^r=1$.

{\bf{Predictive posterior.}} Surprisingly, the posterior also turns out to be a Gaussian distribution,
\begin{equation}
  \label{eq: gar posterior}
	\begin{aligned}
		& p\left(\tZ^h_*|\tY^l,\tY^h,\x_*\right)
		=2\pi^{-\frac{d^h}{2}}\times \left|\S_*^h+\bGamma\left(\hat{\S}^l\kron\S^l\right)\bGamma^T\right|^{-\frac{1}{2}}\\
		& \quad \times \exp\left[-\frac{1}{2}\left(\vecrm(\tZ^h_*)-\vecrm(\bar{\tZ})\right)^T
		\left(\S_*^h+\bGamma\left(\hat{\S}^l\kron\S^l\right)\bGamma^T\right)^{-1}\left(\vecrm(\tZ^h_*)-\vecrm(\bar{\tZ})\right)\right],
	\end{aligned}
\end{equation}
where $\bGamma$  and the mean of the predictive posterior $\bar{\tZ}_*$ are given as follows,
\begin{equation}
	\begin{aligned}
		\bGamma &= \left([\k_{*}^r (\K^r)^{-1}\E_n^T-\k_{*}^l (\hat{\K}^l)^{-1}]\kron\W\right)\E_m\kron\I^l,\\
		\vecrm(\bar{\tZ}_*) &= \left(\k^l_{*} (
		\hat{\K}^l)^{-1}\kron\W\right)\left(\begin{array}{c}
			\vecrm(\tY^l) \\ \vecrm(\bar{\tY}^l)
		\end{array}\right)+\left(\k_{*}^r (\K^r)^{-1} \kron\I\right)
		\left(\vecrm(\tY^h)-\hat{\W}
		\left(\begin{array}{c}
			\vecrm(\tY^l) \\ \vecrm(\bar{\tY}^l)
		\end{array}\right)
		\right),
	\end{aligned}
\end{equation}
where $\E_m$ and $ \E_n $ are the selection matrices such that $ \hat{\X}^h = \E_m^T[\X^l, \hat{\X}^h] $, $ \X^h = \E_n^T[\X^l, \hat{\X}^h] $, $ \hat{\W} = \E_n^T\kron\W $, and $ \hat{\K}^l $ is the covariance matrix that $ \hat{\K}^l = \K^l([\X^l, \hat{\X}^h], [\X^l, \hat{\X}^h]) $.

\subsection{Autokrigeability, Complexity, and Further Acceleration}\label{cigar}
For subset structured data, the computational complexity of \ours is decomposed into two independent TGPs for likelihood and predictive posterior.
Thanks to the tensor algebra (mainly $(\K \kron \S)^{-1}=\K^{-1} \kron \S^{-1}$), the complexity of the $i$-fidelity kernel matrix inversion is reduced to $\Ocal(\sum_{m=1}^M (d^i_m)^3+(N^i)^3)$ instead of $\Ocal((N^i d^i)^3)$. For the non-subset case, the computational complexity in \Eqref{eq: gar likelihood} is unfortunately $\Ocal((N_m^i d^i)^3)$ where $N_m$ is the number of the imaginary low-fidelity points. Nevertheless, due to the tensor structure, we can still use conjugate gradient \citep{wilson2013gaussian} to solve the linear system efficiently.

Notice that the mean prediction $\bar{\tZ}^h_*$ in \Eqref{eq: gar posterior} does not depend on any output covariance matrixes $\{\S^h_m, \S^l_m\}_{m=1}^M$,
which reassemble the autokrigeability (no knowledge transfer in noiseless cases for mean predictions)~\citep{alvarez2011kernels,xing2021residual} based on the GAR framework.
For applications where the predictive variation is not of interest, we can 
introduce a conditional independent output-correlation, \ie $\S^h_m=\I, \S^l_m=\I$ and orthogonal weight matrixes, \ie $\W_m^T \W_m = \I$, to reduce the computationally complexity further down to $\Ocal((N^i)^3)$ (see Appendix for detailed proof). We call this CIGAR as an abbreviation for conditional independent GAR. In our empirical assessment, CIGAR is slightly worse than GAR due to the difficulty of ensuring $\W_m^T \W_m = \I$ and numerical noise.

\section{Related Work}
\vspace*{-1em}
GP for high-dimensional outputs is an important model in many applications such as spatial data modeling and uncertainty quantification. For an excellent review, the readers are referred to \cite{alvarez2012kernels}.
Linear model of coregionalization (LMC) \citep{goulard1992linear,goovaerts1997geostatistics} might be the most general framework for high-dimensional GP developed in the geostatistic community. LMC assumes that the full covariance matrix as a sum of constant matrixes timing input-dependent kernels.
To reduce model complexity, semiparametric latent factor models (SLFM) \citep{teh2005semiparametric} simplify LMC by assuming that the matrixes are rank-1 matrixes.
\citet{higdon2008computer} further simplifies SLFM using singular value decomposition (SVD) on the output collection to fix the bases for the rank-1 matrixes. 
To overcome the linear assumptions of LMC, the (implicit) bases can be constructed in a nonlinear fashion using manifold learning, \eg KPCA~\citep{xing2016manifold} and IsoMap~\citep{xing2015reduced} or process convolution \citep{alvarez2019non,boyle2004dependent,higdon2002space}.
Other approaches include multi-task GP, which considers the outputs as dependent tasks \citep{bonilla2007kernel,rakitsch2013it,li2018hierarchical} in a framework similar to LMC and
GP regression network (GPRN) \citep{wilson2011gaussian,nguyen2013efficient}, which proposes products of GPs to model nonlinear outputs, leading to nontractable models.
Despite their success, the complicity of the above approaches are at best $\Ocal(N^3+d^3)$ whereas some are $\Ocal(N^3d^3)$, which cannot scale well to high-dimensional outputs for scientific data where $d$ can be, says, 1 million. 
This problem can be well resolved by introducing tensor algebra \citep{kolda2009tensor} to form HOGP~\citep{zhe2019scalable} or scalable model inference, \eg in GPRN \citep{li2020scalable}.

Multi-fidelity has become a promising approach to further reduce the data demands in building a surrogate model \citep{li2020deep} and Bayesian optimization.
The seminal autoregressive (AR) model of \citet{kennedy2000predicting} introduces a linear transformation to univariate high-fidelity outputs.
This model was enhanced by \citet{legratiet2013bayesian} by adopting a deterministic parametric form of linear transformation for the efficient training scheme as introduced previously. However, it is unclear how AR can deal with high-dimensional outputs.
To overcome the linearity of AR, \citet{perdikaris2017nonlinear} proposes nonlinear AR (NAR). It ignores the output distributions and directly uses the low-fidelity solution as an input for the high-fidelity GP model, which is essentially a concatenating GP structure known as \textit{deep GP}~\citep{damianou2015deep}.
To propagate uncertainty through the multi-fidelity model, \citet{cutajar2019deep} uses expensive approximation inference, which makes it prone to overfitting and incapable of dealing with very large dimensional problems.
For multi-fidelity Bayesian optimization (MFBO), \citet{poloczek2017multi} and \citet{kandasamy2016gaussian} approximate each fidelity with a GP independently; 
\citet{zhang2017information} use convolution kernel, similar to the process convolution \citep{alvarez2019non,higdon2002space} to learn the fidelity correlations;
\citet{song2019general} combine all fidelity data into one single GP to reduce uncertainty.
However, most MFBO surrogates do not scale to high-dimensional problems because they are designed for one target (or at most a couple).

To deal with large dimensional outputs, \eg spatial-temporal fields, 
\citet{xing2021residual} extend AR by assuming a simple additive structure and replacing the simple GPs with scalable multi-output GPs at the cost of losing the power for capturing the output correlations, leading to inferior performance and inaccurate uncertainty estimates;
\citet{xing2021deep} propose Deep coregionalization to extend NAR by learning the latent process \citep{teh2005semiparametric,goovaerts1997geostatistics} extracted from embedding the high-dimensional outputs onto a residual latent space using a proposed residual PCA;
\citet{wang2021multi} further introduce bases propagation along with latent process propagation in a deep GP to increase model flexibility at the cost of significant growth in the number of model parameters and a few simplifications in the approximated inference.
\citet{parussini2017multi} generalize NAR to high-dimensional problems.
However, these methods lack a systematic way for joint model training, leading to instability and poor fitting for small datasets.
{\citet{wu2022multi} extend GP using neural process to model  high-dimensional and non-subset problem effectively.}
In scientific computing, multi-fidelity fusion has been implemented using stochastic collocation (SC) method~\citep{narayan2014stochastic} for high-dimensional problems, which provides closed-form solutions and efficient design of experiments for the multi-fidelity problem. \citet{xing2020greedy} showed that SC is essentially a special case of AR and proposed active learning to select the best subset for the high-fidelity experiments.

To take the advances of deep learning neural network (NN) and being compatible with arbitrary multi-fidelity data (\ie non-subset structure), \citet{perrone2018scalable} propose an NN-based multi-task method that can naturally extend to MFBO.
\citet{li2020multi} further extend it as a Bayesian neural network (BNN) to MFBO. 
\citet{meng2020composite} add a physics regularization layer, which requires an explicit form of the problem PDEs, to improve prediction accuracy.
To scale for high-dimensional problems with arbitrary dimensions in each fidelity, \citet{li2020deep} propose a Bayesian network approach to multi-fidelity fusion with active learning techniques for efficiency improvement.

Except for multi-fidelity fusion, AR can be used for model multi-variate problem \citep{requeima2019gaussian,xia2020gaussian}, where \ours can also find its applications.
\ours is a general framework for GP-based multi-fidelity fusion of high-dimensional outputs. 
Specifically, AR is a special case when setting $\W = \rho \I$ and using a separable kernel; ResGP is a special case of \ours by setting $\W = \I$ and $\S=\I$; NAR is a special case of integrating out $\W$ with a normal prior and using a separable kernel; DC is a special case of \ours if it only uses one latent process, integrating out $\W$ as in NAR with a separable kernel; MF-BNN is a finite case of \ours if only one hidden layer is used. See Appendix \ref{appe: sota} for the comparison between SOTA methods.

\vspace*{-1em}
\section{Experimental Results}
\label{exp result}
\vspace*{-1em}

To assess \ours and \ourss, we compare with the SOTA multi-fidelity fusion methods for high-dimensional outputs including:
(1) AR \citep{kennedy2000predicting},
(2) NAR \citep{perdikaris2017nonlinear},
(3) ResGP \citep{xing2021residual},
(4) DC\footnote{https://github.com/wayXing/DC} \citep{xing2021deep},
and
(5) MF-BNN\footnote{https://github.com/shib0li/DNN-MFBO} \citep{li2020deep}.
All GPs use an RBF kernel for a fair comparison. Because the ARD kernel is separable, the AR and NAR are accelerated using the Kronecker product structure as in \ours for a feasible computation. The original DC with residual PCA cannot deal with unaligned outputs, but it can do so by using an independent PCA, which we called DC-I. Both DCs preserve 99\% energy for dimension reductions. MF-BNN is conducted using its default setting.
\ours, \ourss, AR, NAR, and ResGP are implemented using Pytorch\footnote{https://pytorch.org/}. All experiments are run on a workstation with an AMD 5950x CPU and 32 GB RAM.

\vspace*{-1em}
\begin{figure}[h]
	\centering
	\begin{subfigure}[b]{0.32\linewidth}
		\includegraphics[width=1\textwidth]{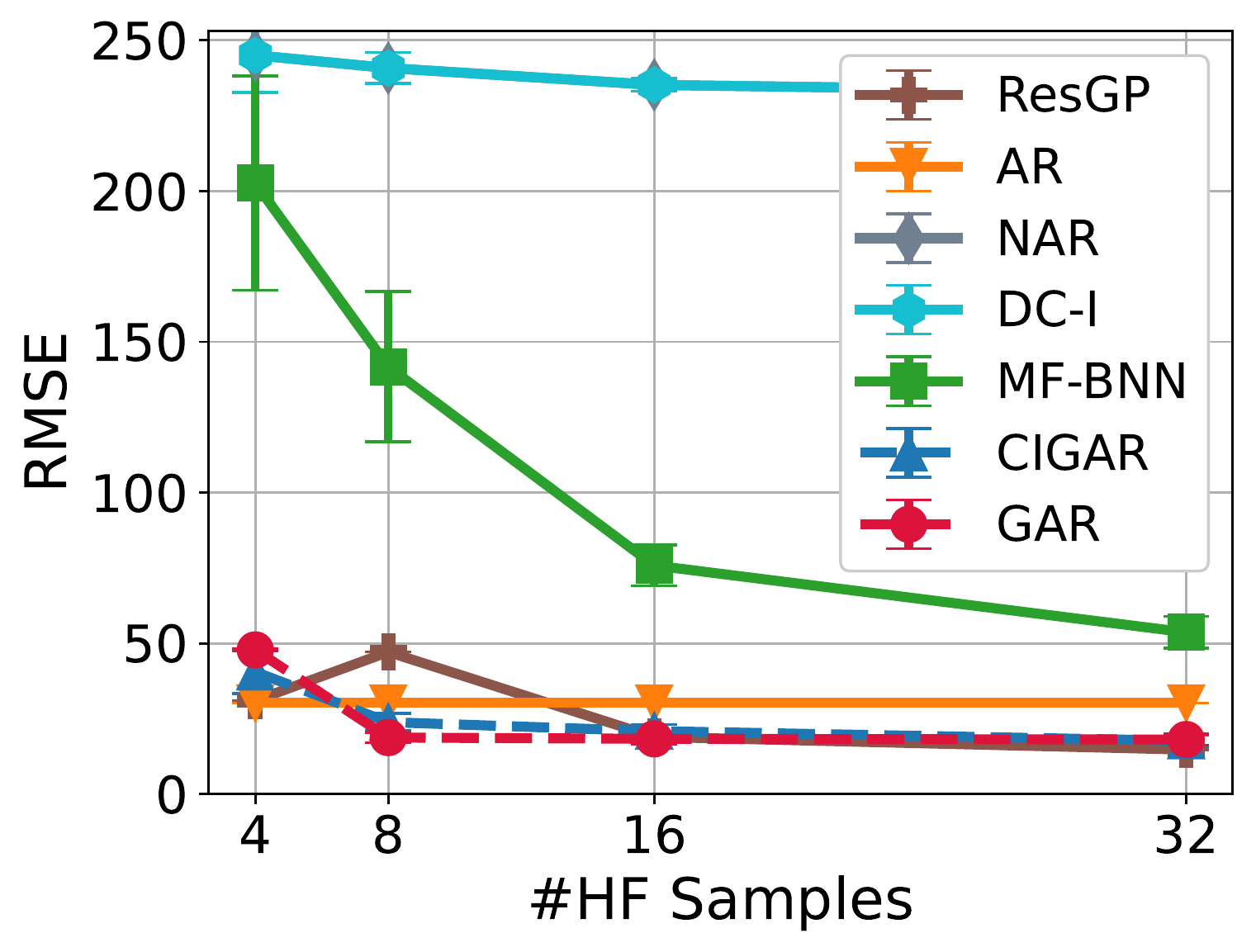}
	\end{subfigure}
	\begin{subfigure}[b]{0.32\linewidth}
		\includegraphics[width=1\textwidth]{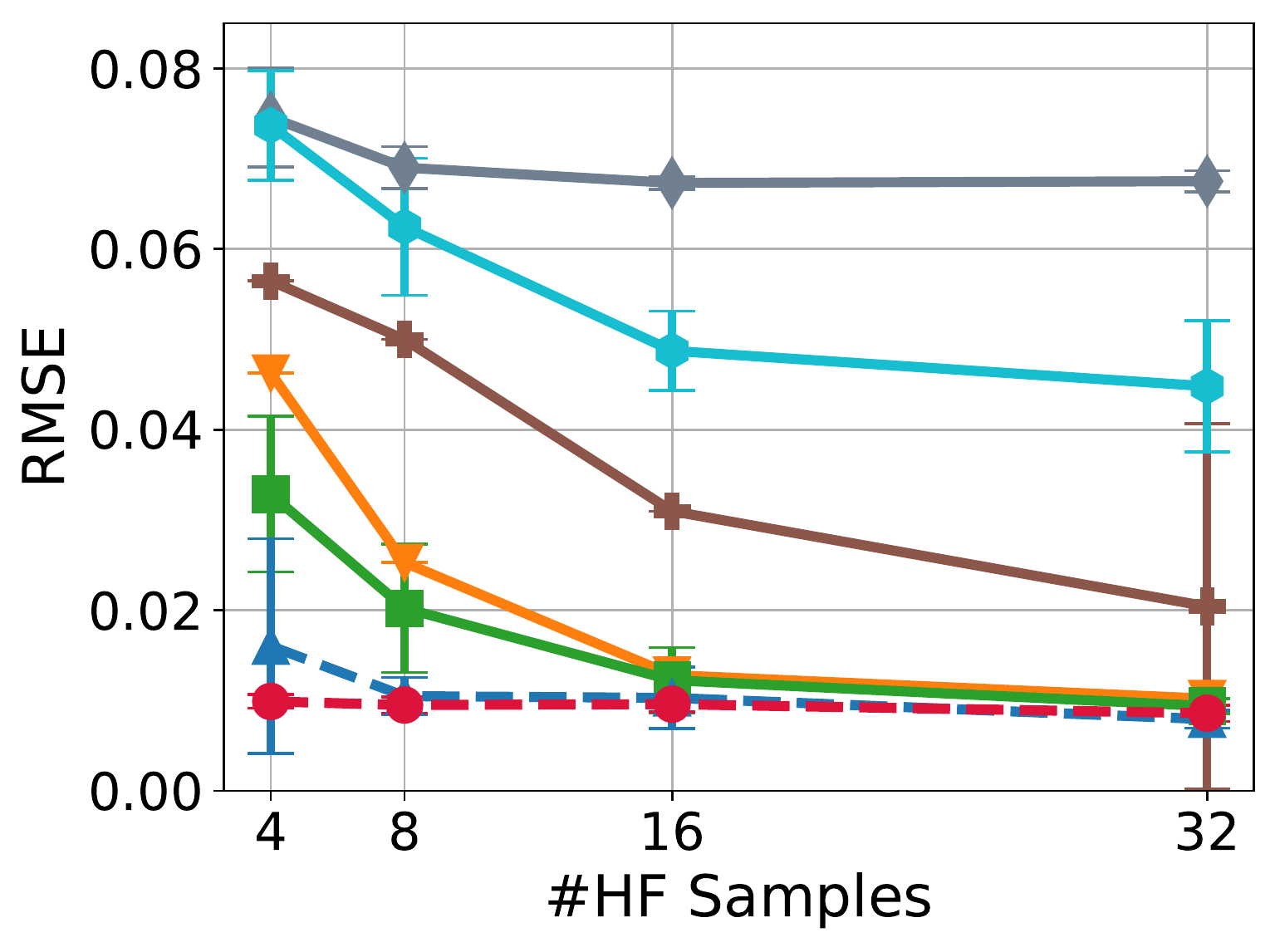}
	\end{subfigure}
    \begin{subfigure}[b]{0.32\linewidth}
		\centering
		\includegraphics[width=1\textwidth]{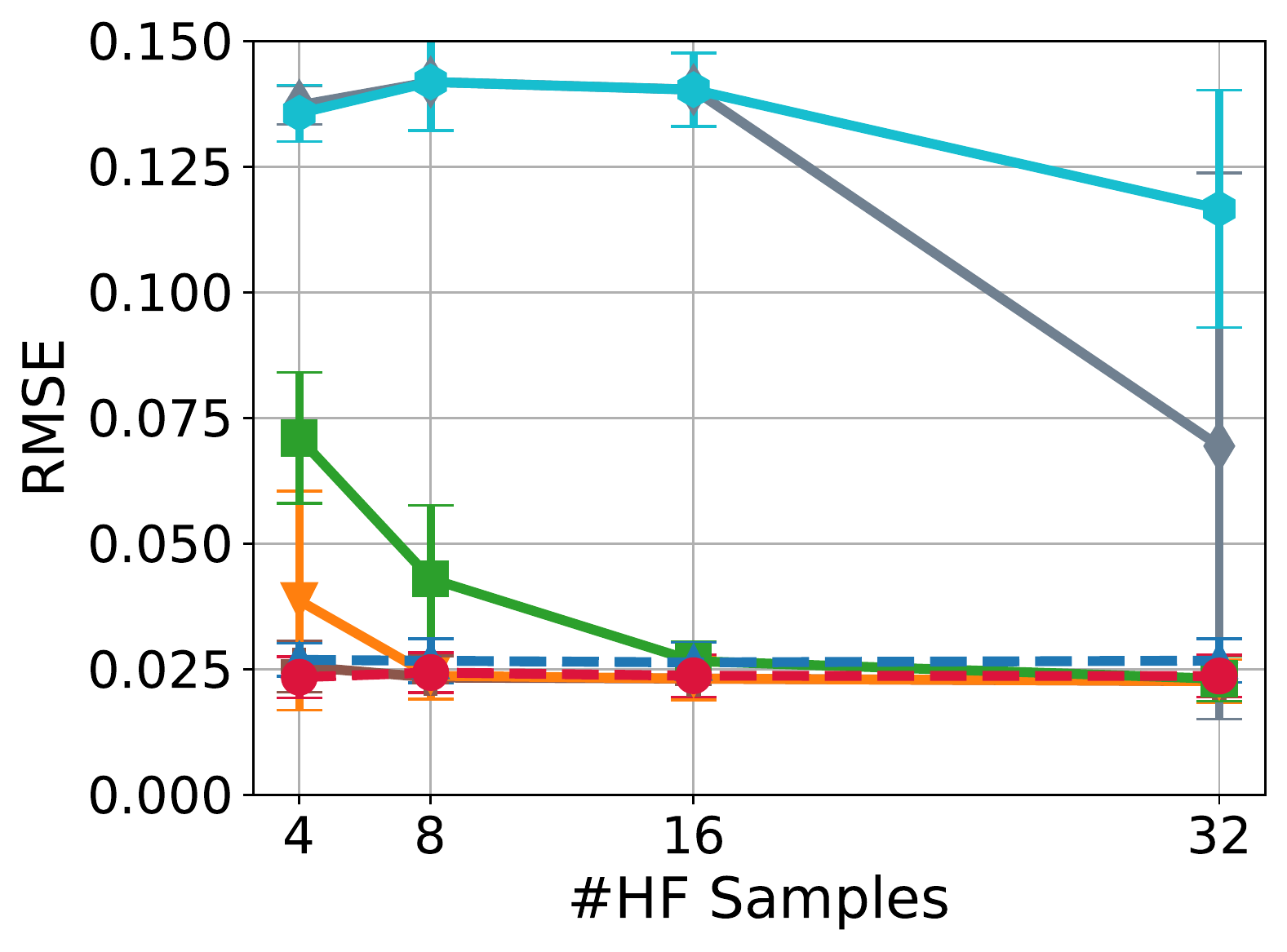}
	\end{subfigure}\\
	\begin{subfigure}[b]{0.32\linewidth}
		\includegraphics[width=1\textwidth]{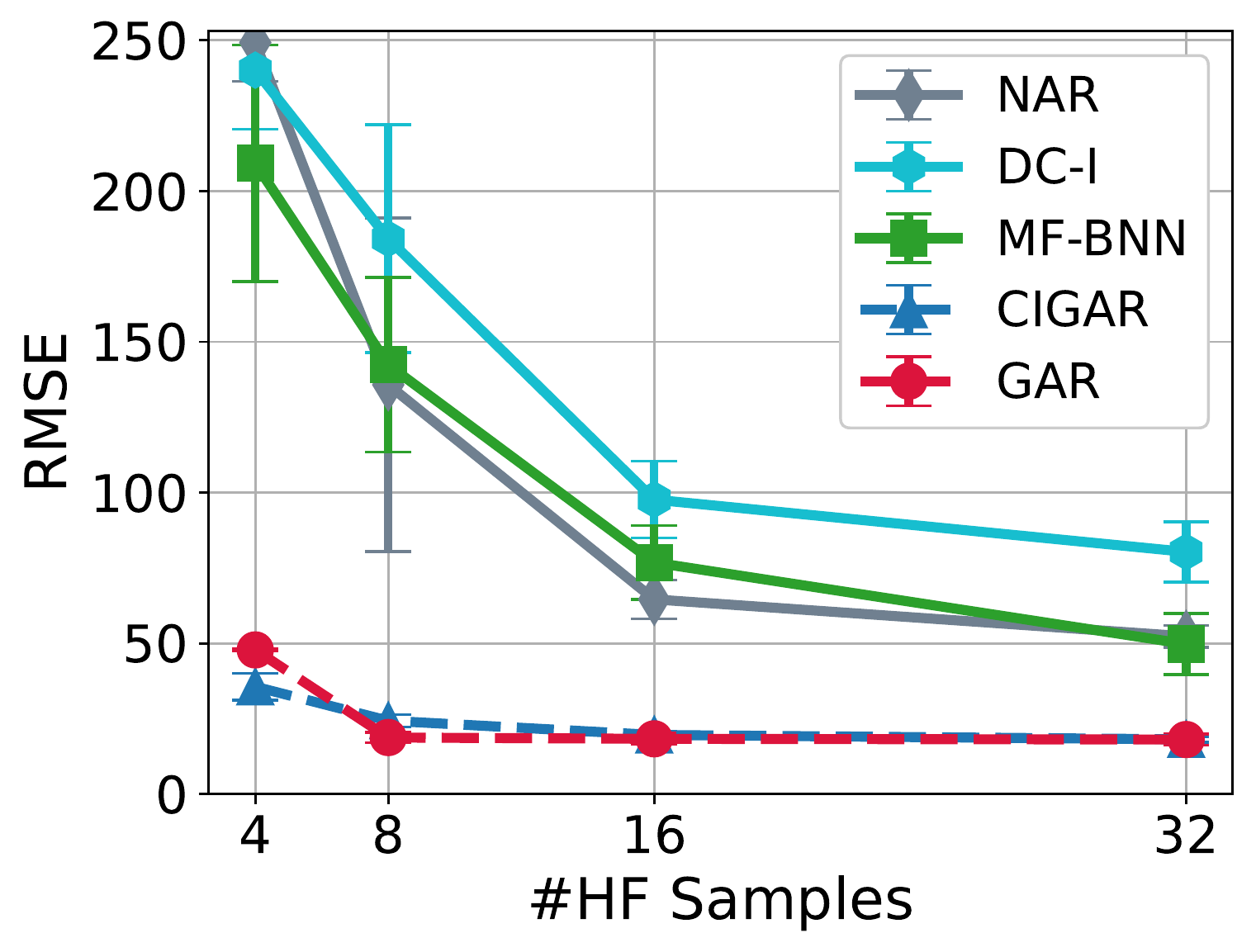}
	\end{subfigure}
	\begin{subfigure}[b]{0.32\linewidth}
		\includegraphics[width=1\textwidth]{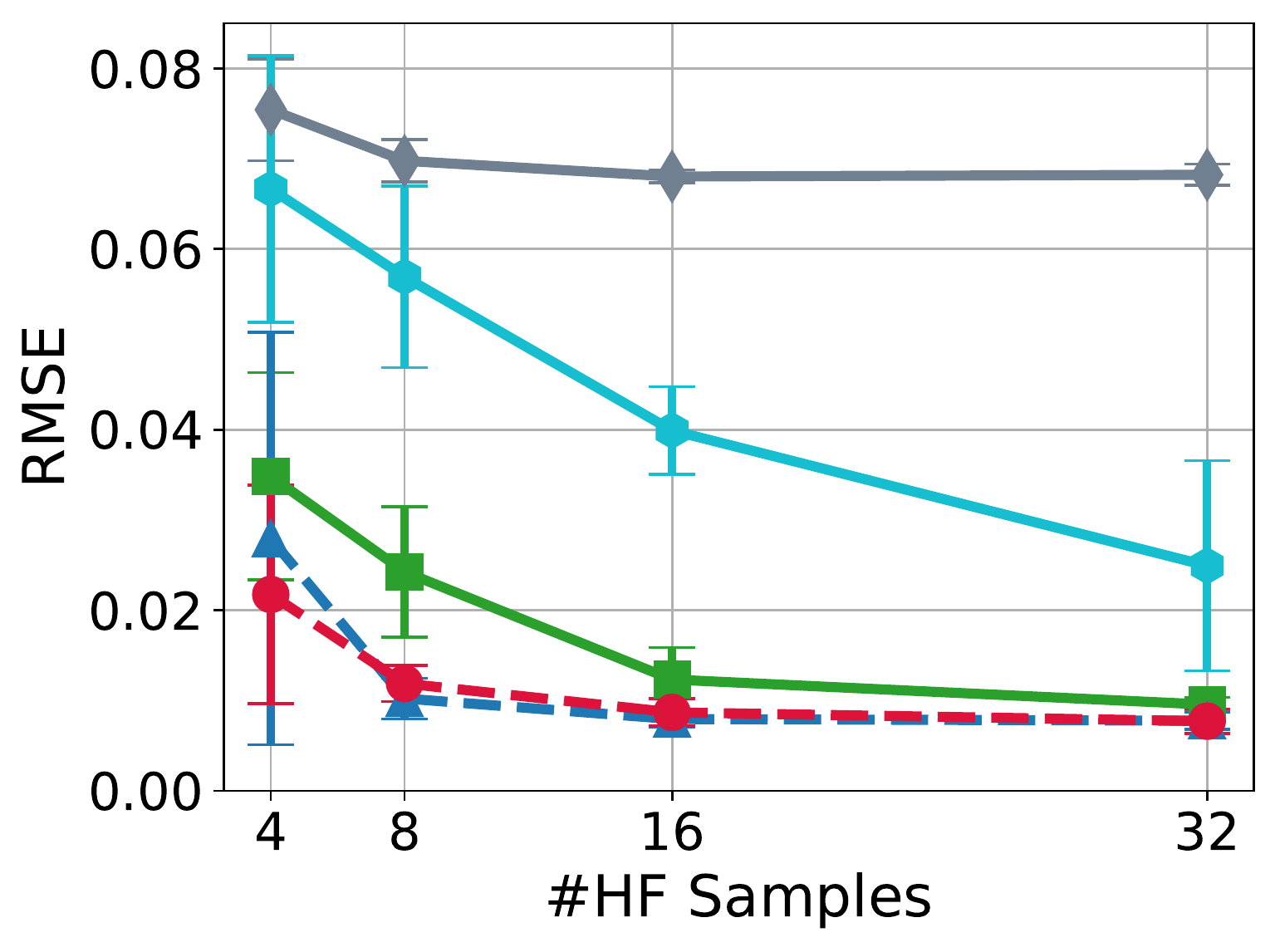}
	\end{subfigure}
	\begin{subfigure}[b]{0.32\linewidth}
		\includegraphics[width=1\textwidth]{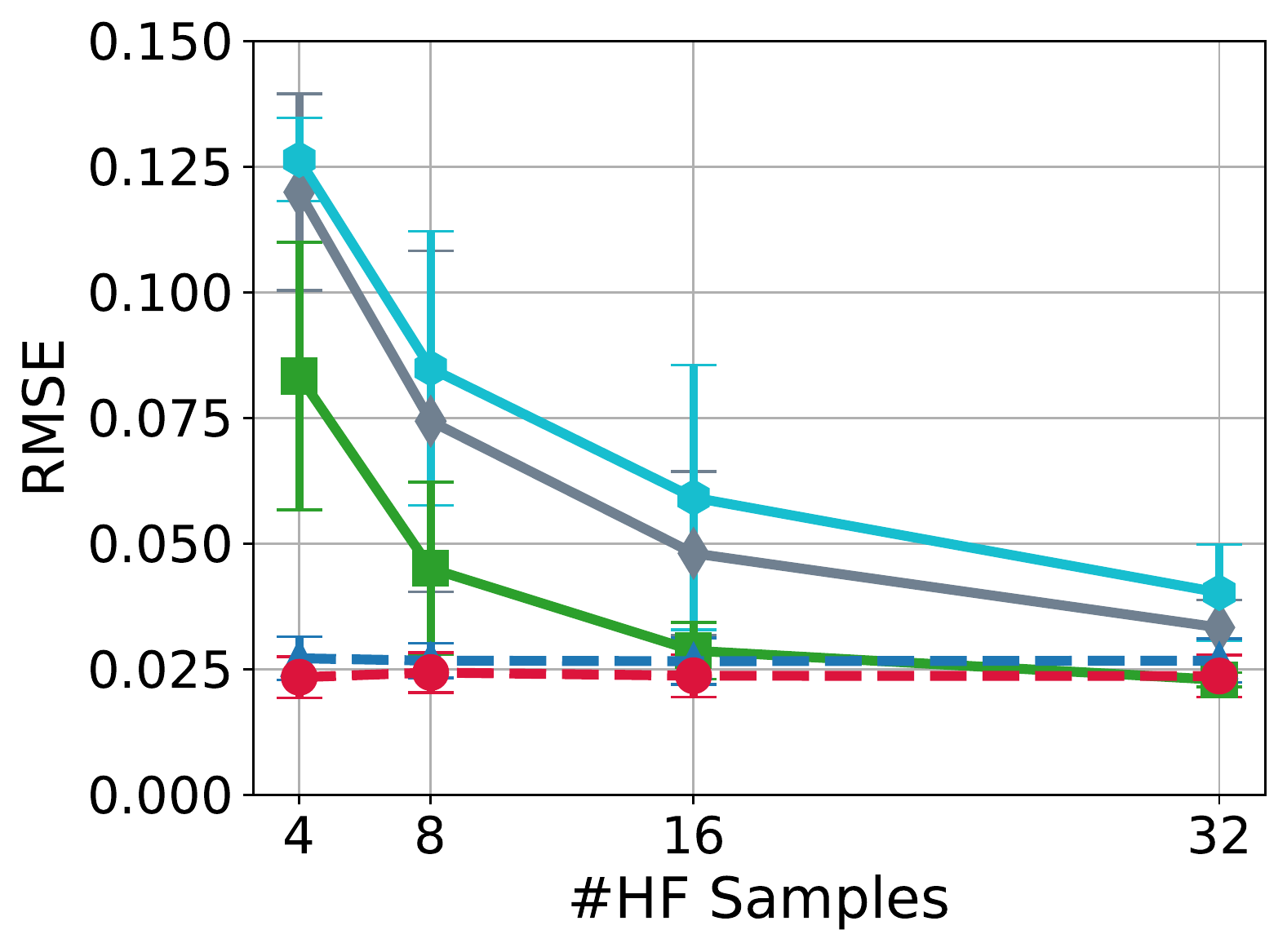}
	\end{subfigure}
	\begin{subfigure}[b]{0.32\linewidth}
		\includegraphics[width=1\textwidth]{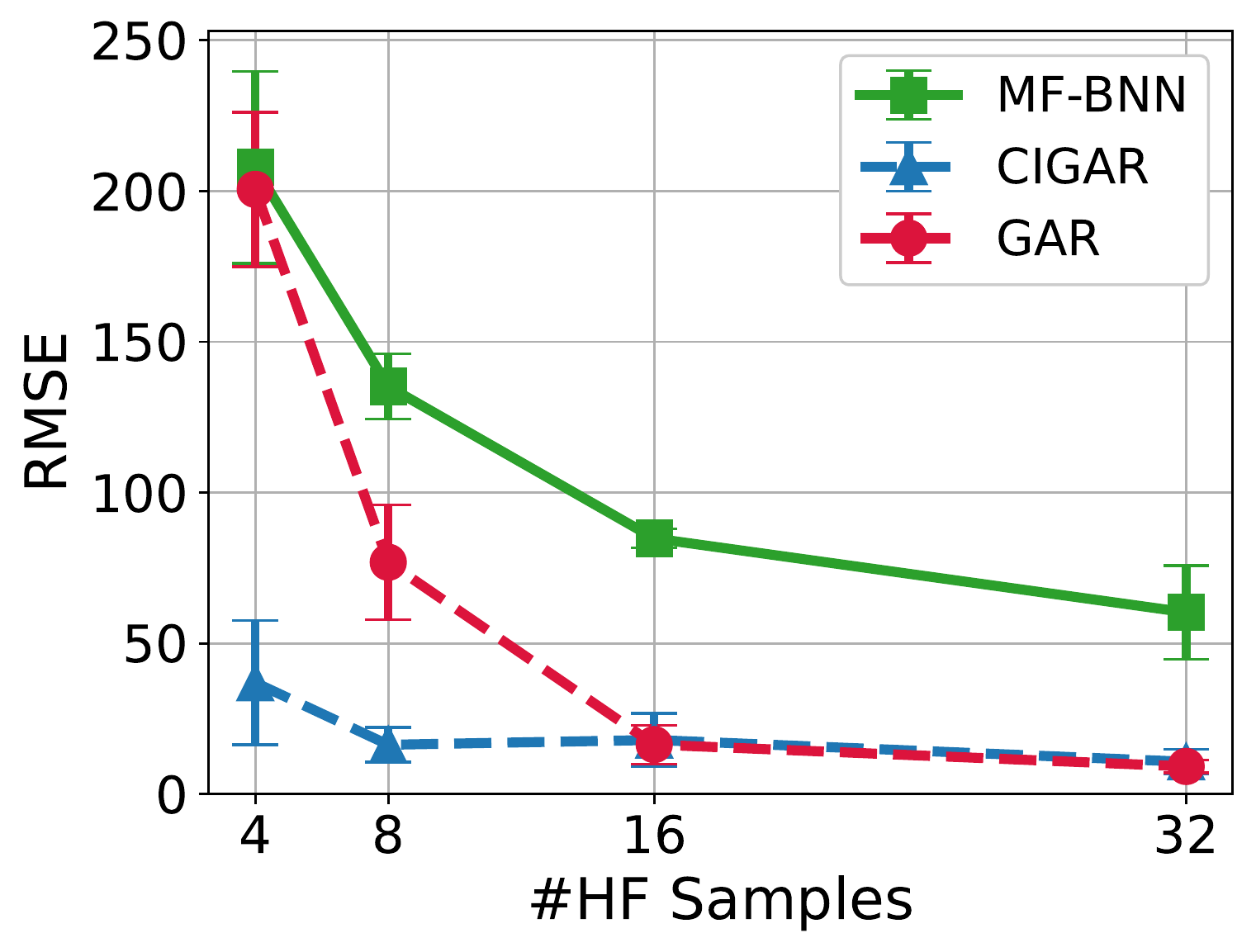}
		\caption{Poisson's equation}
	\end{subfigure}
	\begin{subfigure}[b]{0.32\linewidth}
		\includegraphics[width=1\textwidth]{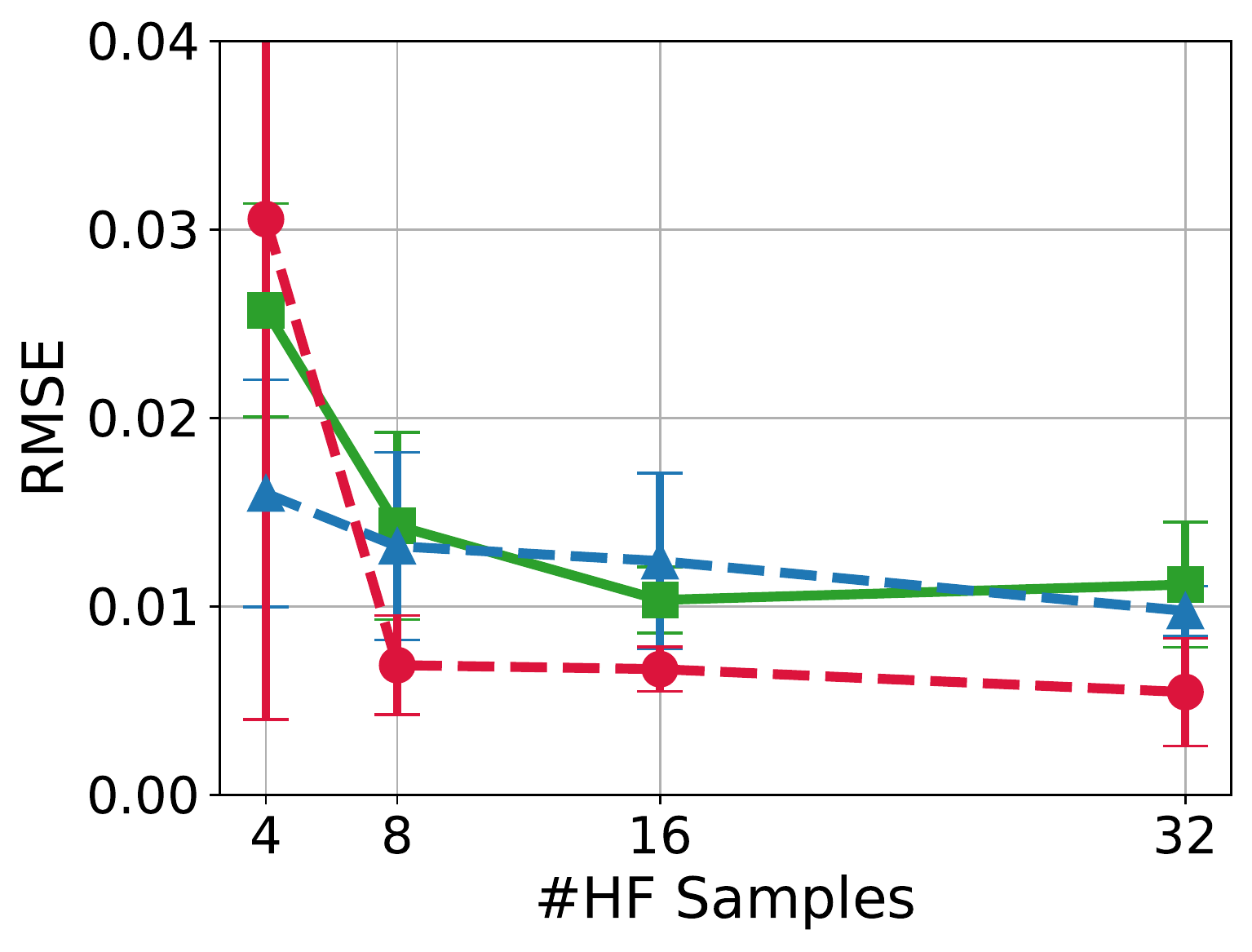}
		\caption{Burger's}
	\end{subfigure}
	\begin{subfigure}[b]{0.32\linewidth}
	\includegraphics[width=1\textwidth]{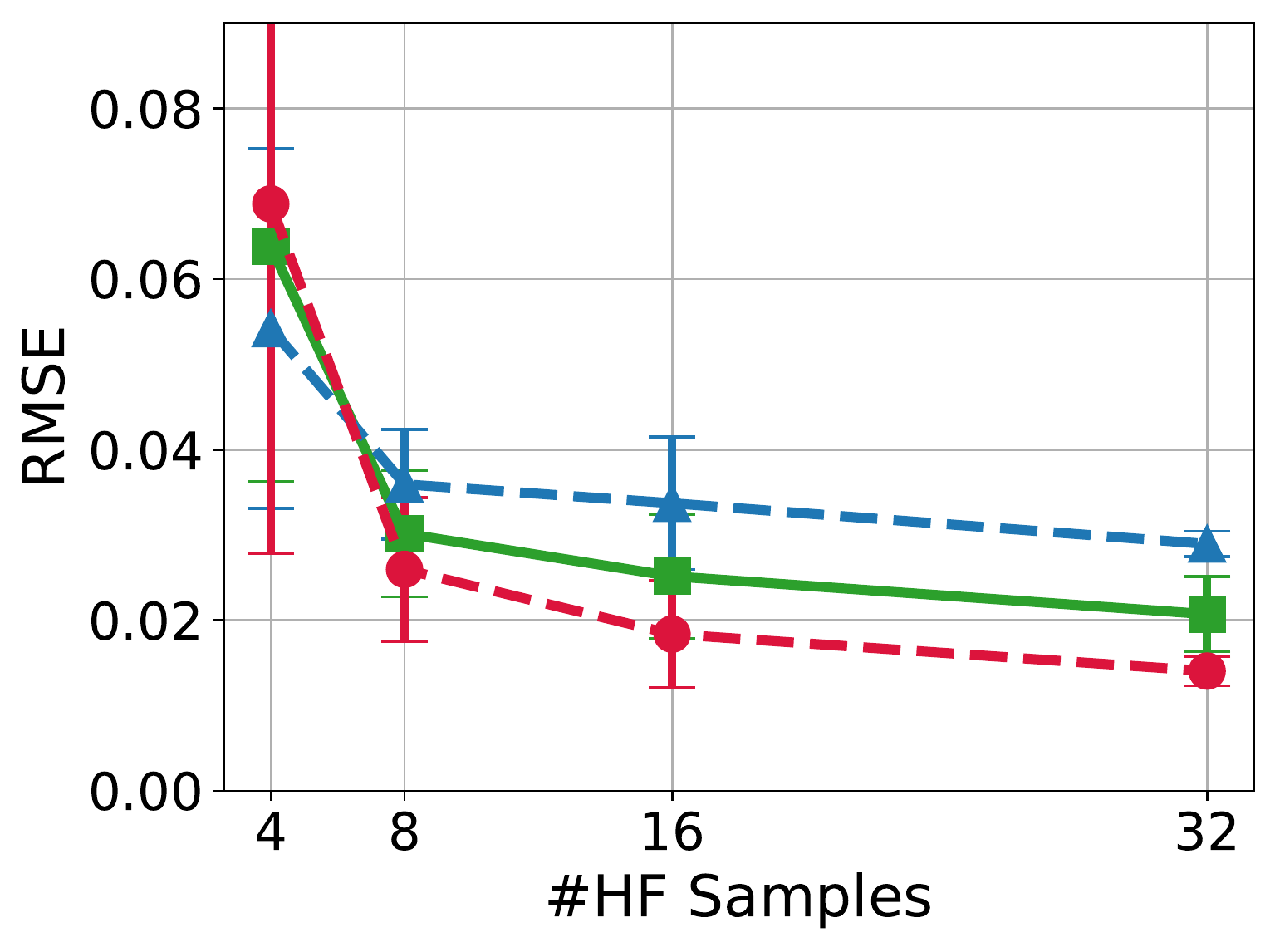}
	\caption{Heat}
	\end{subfigure}
    \caption{RMSE against increasing number of high-fidelity training samples for Poisson's, Burger's, and heat equations with aligned outputs (top row), non-aligned outputs (middle row), and non-subset data (bottom row).}
	\label{fig: rmse canonical}
	\vspace*{-1em}
\end{figure}

\subsection{Multi-Fidelity Fusion for Canonical PDEs}
\label{pdes}
We first assess \ours in canonical PDE simulation benchmarks, which produce high-dimensional spatial/spatial-temporal fields as model outputs.
Specifically, we test on Burger's, Poisson's and the heat equations commonly used in the literature \citep{xing2021deep,tuo2014surrogate,efe2003proper,raissi2017machine}.
The high fidelity results are obtained by solving these equations using finite difference on a $32\times32$ mesh, whereas the low fidelity by an $8\times8$ coarse mesh.
The solutions on these grid points are recorded and vectorized as outputs. Because the mesh differs, the dimensionality varies.
To compare with the standard multi-fidelity method that can only deal with aligned outputs, we use interpolation to upscale the low-fidelity and record them at the high-fidelity grid nodes.
The corresponding inputs are PDE parameters and parameterized initial or boundary conditions. Detailed experimental setups can be found in Appendix \ref{appe pdes}

We uniformly generate $128$ samples for testing and 32 for training. 
We increase the high-fidelity training samples to the number of low-fidelity training samples 32. The comparisons are conducted five times with shuffled samples. The statistical results (mean and std) of the RMSE are reported in \Figref{fig: rmse canonical}.
\ours and \ourss outperform the competitors with a significant margin, up to 6x reduction in RMSE and also reaching the optimal performance with a maximum of 8 high-fidelity samples, indicating a successful fusion of low- and high-fidelity.
\ourss is slightly worse than \ours possibility due to the lack of hard constraints on the orthogonality of its weight matrixes during implementation.
As we have discovered in the literature, AR consistently performs well.
With a flexible linear transformation, \ours outperforms AR while inheriting its robustness, leading to the best performance. 
For the unaligned output, MF-BNN showed slightly worse performance than in the aligned cases, highlighting the challenges of the unaligned outputs.
In contrast, \ours and CIGAR show almost identical performances for both cases. Nevertheless, MF-BNN also shows good performance compared to the rest of the other methods, which is consistent with the finding in \citep{li2020deep}.
It is interesting to see that for the non-subset data, the capable methods show better performances than in the subset cases. \ours and \ourss still outperform the competitors with a clear margin.

To approximately assess the performance under an active learning process. We instead generate training samples in a Sobol sequence \citep{sobol1967distribution}. 
The results are shown in Appendix \ref{appe: PDEs} where \ours and \ourss also outperform the other methods by a large margin.

\subsection{Multi-Fidelity Fusion for Real-World Applications}
{\bf{Optimal topology structure}} is the optimized layout of materials, \eg alloy and concrete, given some design specifications, \eg external force and angle.
This topology optimization is a key technique in mechanical designs, but it is also known for its high computational cost, which renders the need for multi-fidelity fusion.
We consider the topology optimization of a cantilever beam with the location of the point load, the angle of the point load, and the filter radius~\cite{BRUNS20013443} as system inputs.
The low-fidelity use a $16\times16$ regular mesh for the finite element solver, whereas the high-fidelity $64\times64$. Please see Appendix \ref{appe: sec exp topop} for a detailed setup.

As in the previous experiment, the RMSE statistics against an increasing number of high-fidelity training samples are shown in \Figref{fig: topop rmse}.
It is clear that \ours outperforms the competitors with a large margin consistently.
\ourss can be as good as \ours when the number of training samples is large.

\begin{figure}[h]
	\centering
    \begin{subfigure}[b]{0.32\linewidth}
        \includegraphics[width=1\textwidth]{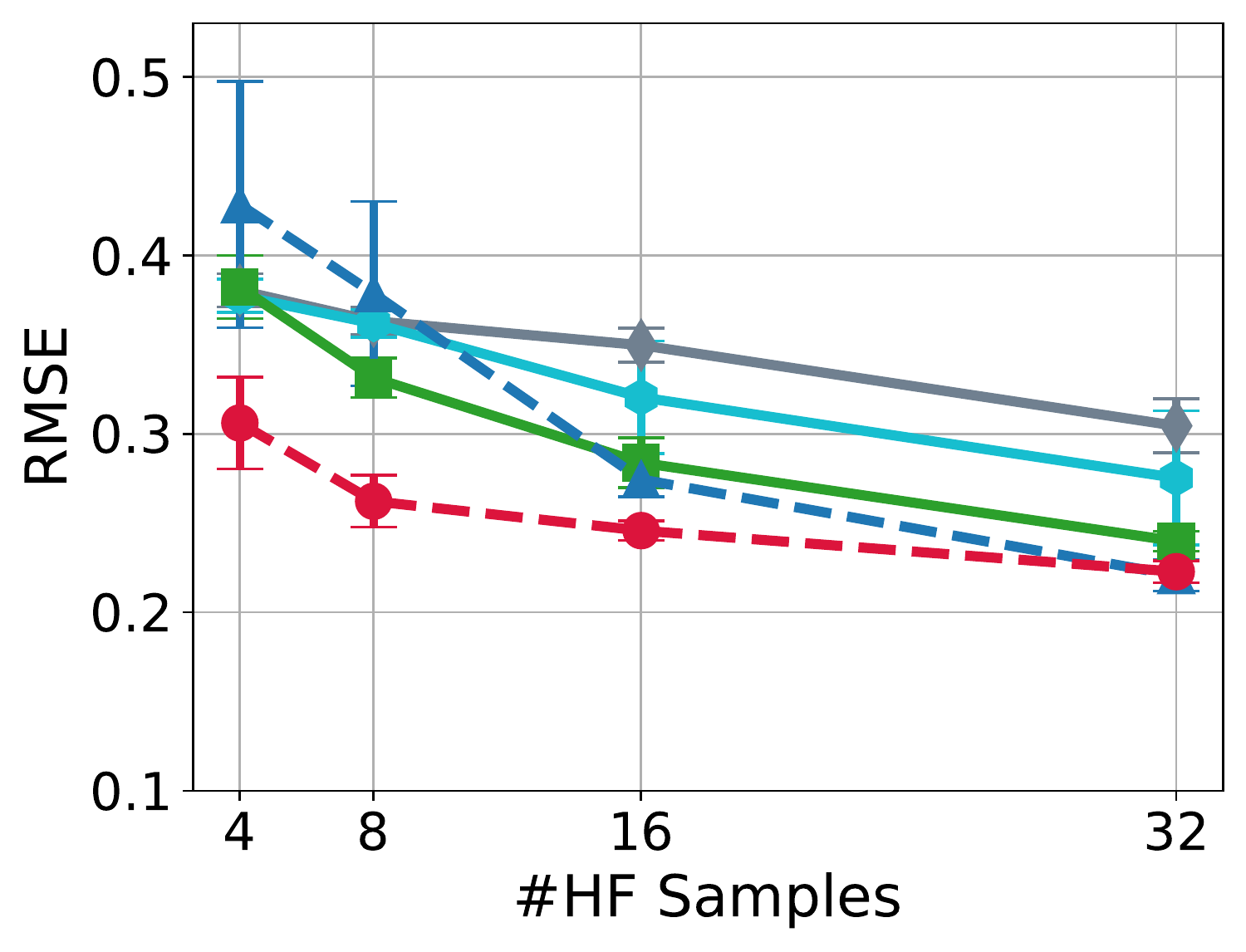}
    \end{subfigure}
	\begin{subfigure}[b]{0.32\linewidth}
		\includegraphics[width=1\textwidth]{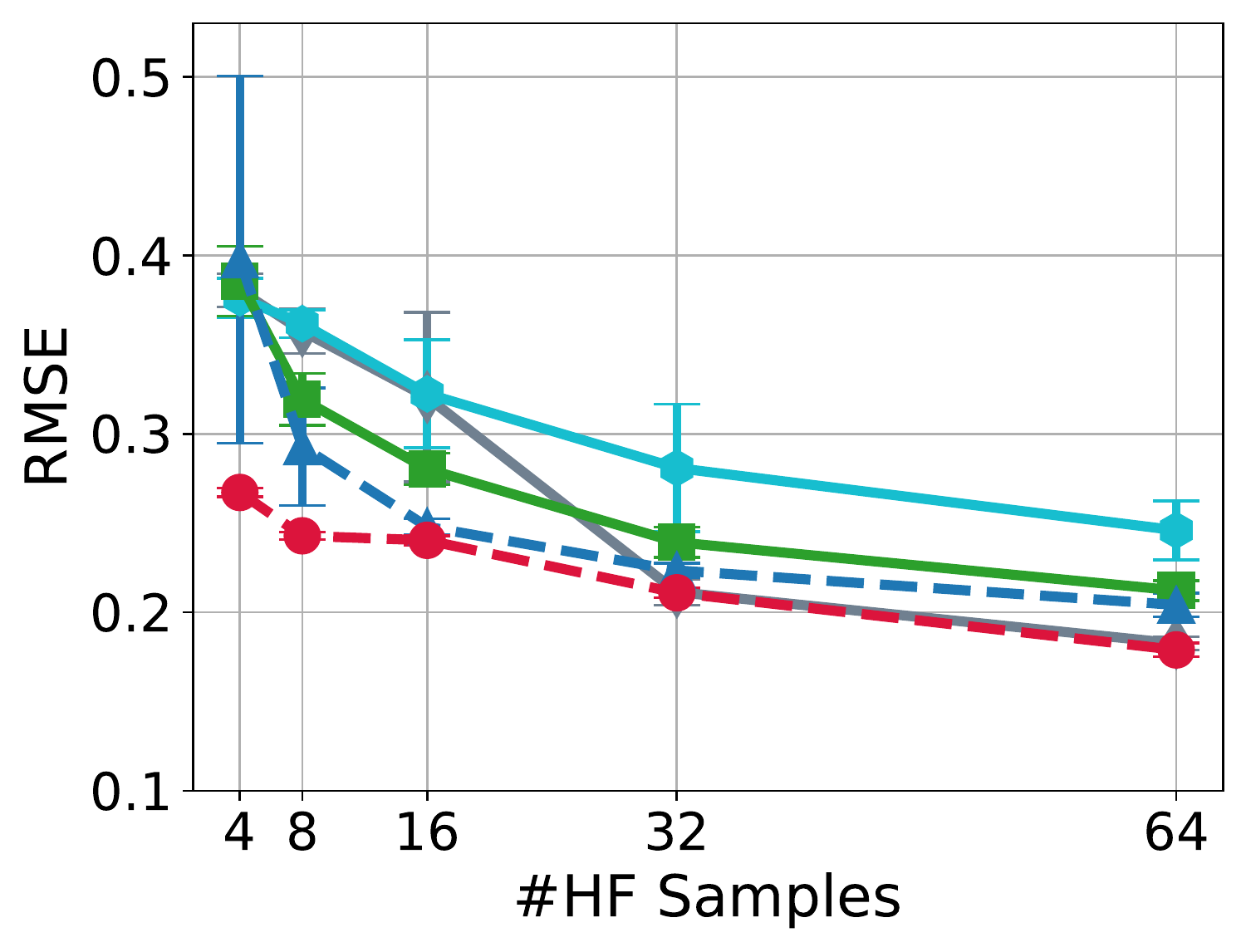}
	\end{subfigure}
    \begin{subfigure}[b]{0.32\linewidth}
	 	\includegraphics[width=1\textwidth]{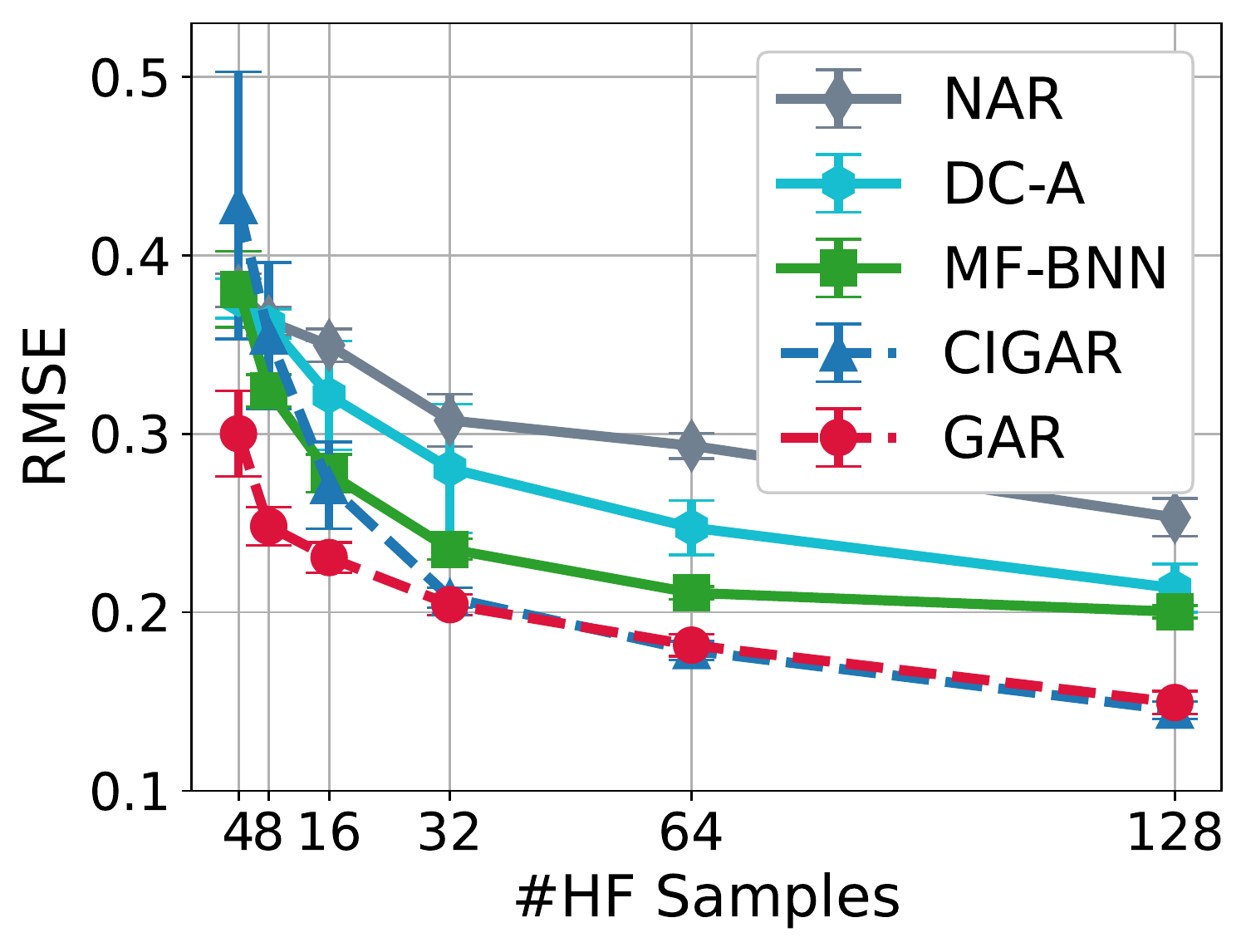}
	\end{subfigure}
	\caption{RMSE  with low-fidelity training sample number fixed to \{32,64,128\}.}
	\label{fig: topop rmse}
\end{figure}

{\bf Steady-state 3D solid oxide fuel cell},
which solves complex coupled PDEs including Ohm’s law, Navier-Stokes equations, Brinkman equation, Maxwell-Stefan diffusion, and convection simultaneously, is a key model for modern fuel cell optimization.
The model was solved using finite elements in COMSOL.
The inputs were taken to be the electrode porosities, cell voltage, temperature, and pressure in the channels.
The low-fidelity experiment is conducted using 3164 elements and relative tolerance of 0.1, whereas the high-fidelity uses 37064 elements and relative tolerance of 0.001.
The outputs are the coupled fields of electrolyte current density (ECD) and ionic potential (IP) in the $x-z$ plane located at the center of the channel.

The RMSE statistics are shown in \Figref{fig: sofc rmse a}, which, again, highlights the superiority of the proposed method with only four high-fidelity data training samples.
To further assess the model capacity for non-structured outputs, we keep only the ECD (\Figref{fig: sofc rmse b}) and IP (\Figref{fig: sofc rmse c}) in the low-fidelity training data to rise the challenges of predicting high fidelity ECD+IP fields. We can see that removing some low-fidelity information indeed increases the difficulties, especially when removing ECD, where MF-BNN outperforms \ours and \ourss with a small number of training data. As soon as the training number increases, \ours and \ourss become superior again.

{\bf Plasmonic nanoparticle arrays} is a complex physical simulation that 
calculates the extinction and scattering efficiencies $Q_{ext}$ and $Q_{sc}$ for plasmonic systems with varying numbers of scatterers using coupled dipole approximation (CDA), which is a method for mimicking the optical response of an array of similar, non-magnetic metallic nanoparticles with dimensions far smaller than the wavelength of light (here 25 nm).
$Q_{ext}$ and $Q_{sc}$ are defined as the QoIs in this work.
Please see Appendix \ref{appe: palas} for detailed experiment setup.
We conducted the experiments 5 times with shuffled samples, and we fixed the number of low-fidelity training samples to 32, 64, and 128 and gradually increase the high-fidelity training data from 4 to 32, 64, and 128. We can see in Fig.\ref{fig: real-data}, \ours outperforms others by a clear margin, especially when the high-fidelity data contains only 4 samples. When there is a large training sample dataset, \ourss can be as excellent as \ours.
\begin{figure}[h]
		\begin{subfigure}[b]{0.32\linewidth}
			\includegraphics[width=1\textwidth]{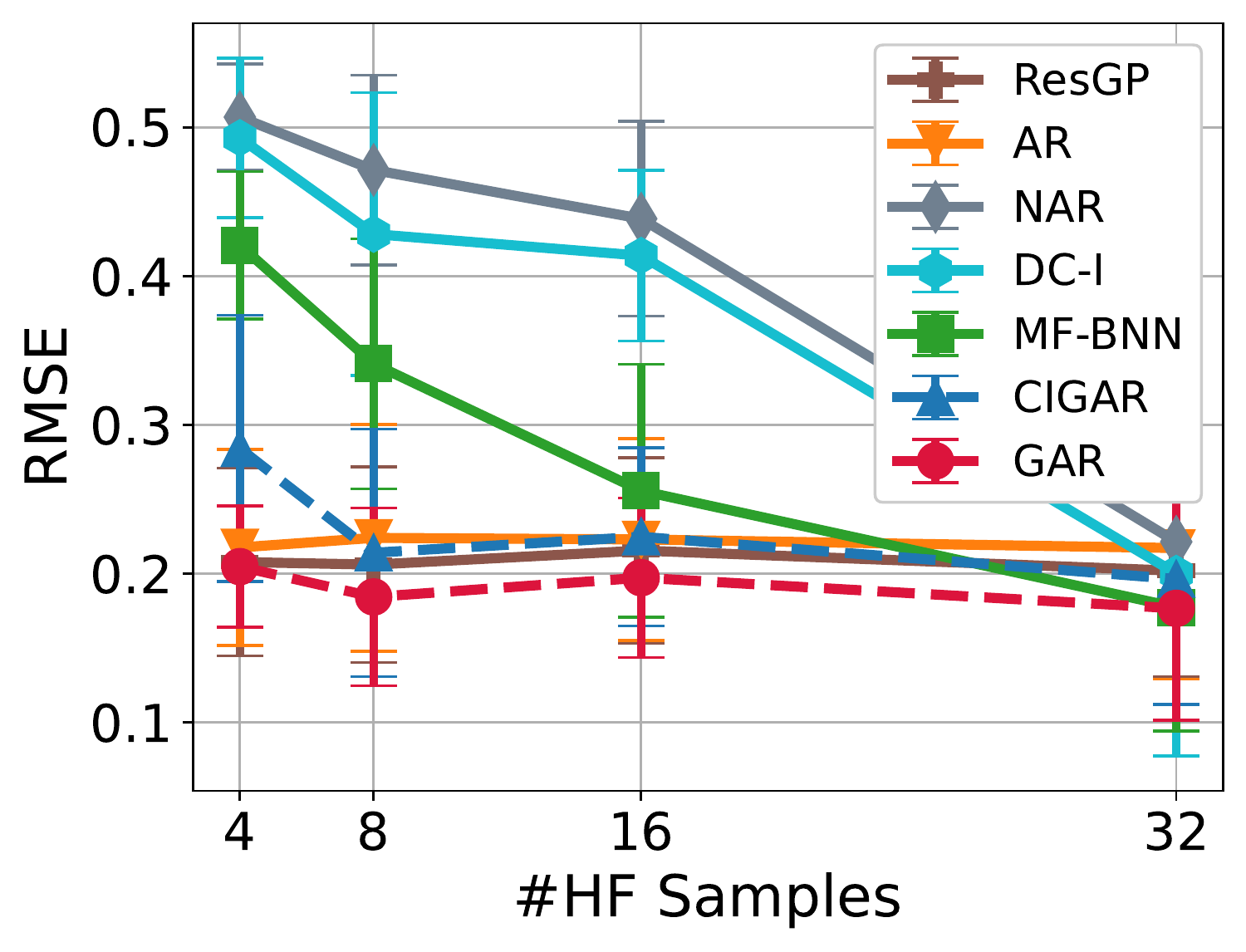}
		\end{subfigure}
		\begin{subfigure}[b]{0.32\linewidth}
		\includegraphics[width=1\textwidth]{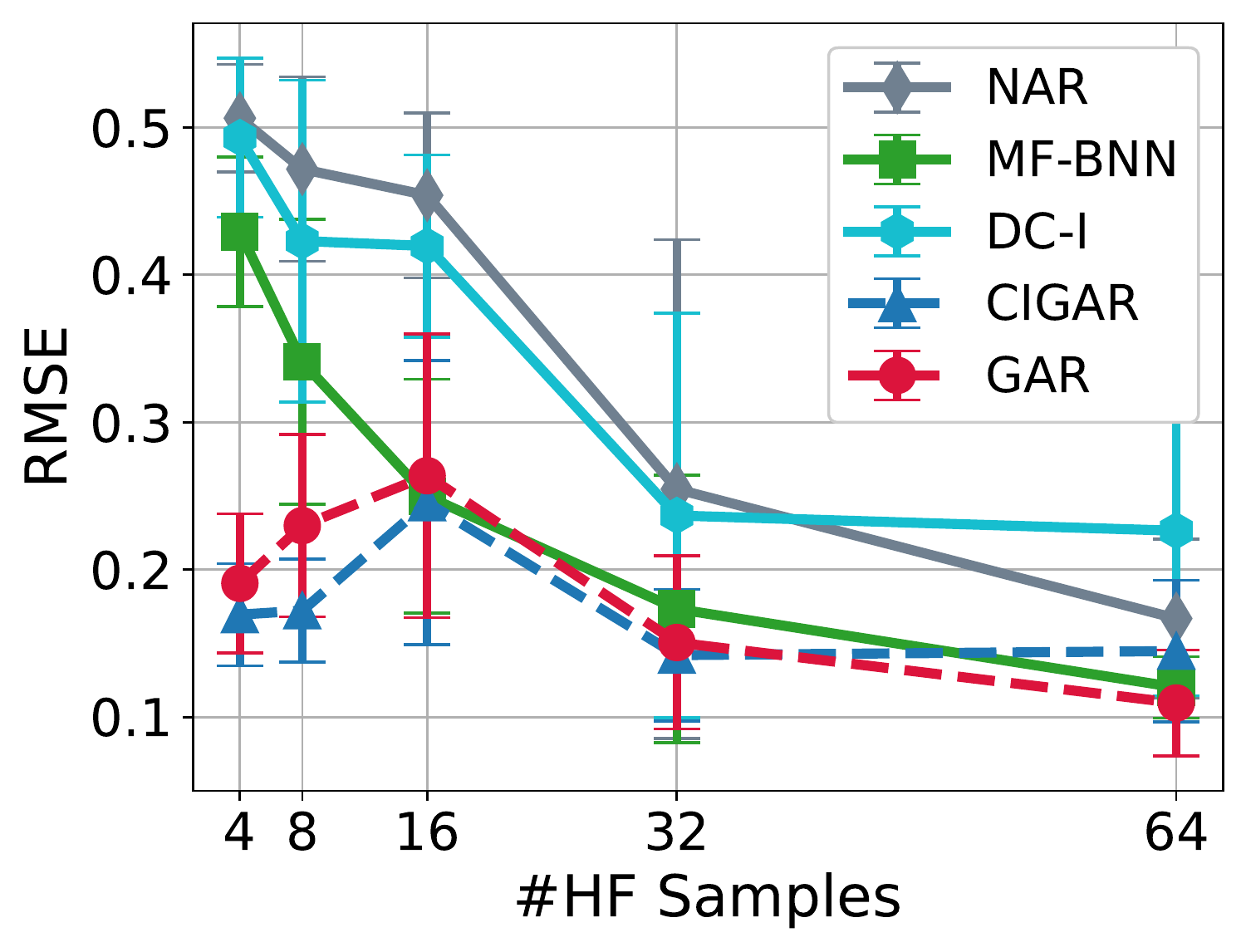}
		\end{subfigure}
		\begin{subfigure}[b]{0.32\linewidth}
		\includegraphics[width=1\textwidth]{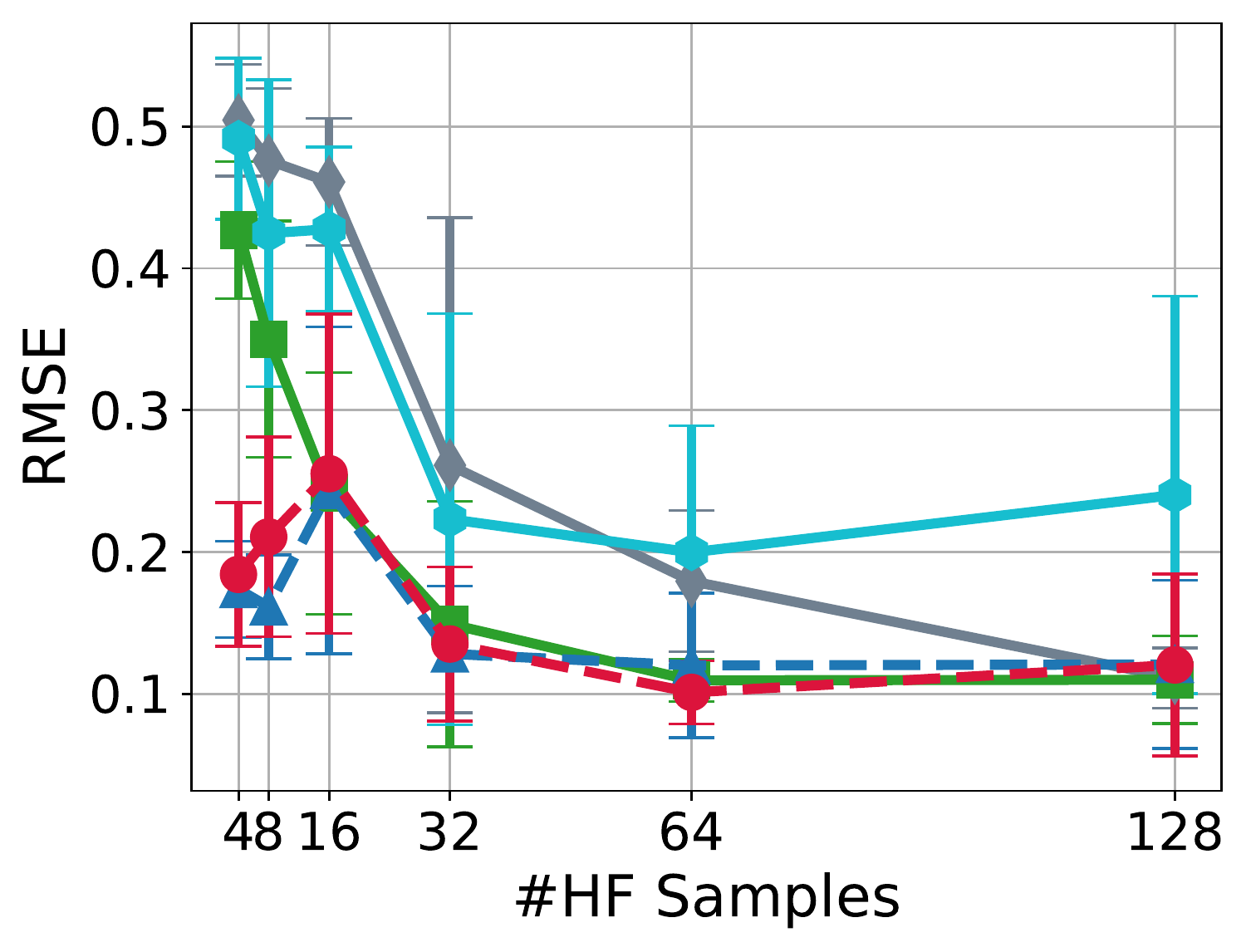}
		\end{subfigure}
		\caption{RMSE against an increasing number of high-fidelity training samples with low-fidelity training sample number fixed to \{32,64,128\} for Plasmonic nanoparticle arrays simulations.}
		\label{fig: real-data}
\end{figure}

{
\subsection{Stability Test}
As a non-parametric model, we expect \ours and \ourss to have stable performance against overfitting compared to the NN-based methods.
In this section, we show the testing RMSE against the training epoch for \ours, \ourss, and MF-BNN for the previous Poisson's equation, SOFC, and topology optimization. 
The experiments are repeated five times to ensure fairness.
The results are shown in \Figref{fig: epoch rmse}.
We can see clearly that \ours and \ourss are more stable than MF-BNN in almost all cases. 
The most notables are the converge rate of \ours and \ourss, which is more than 10x faster if we look at the topology optimization and SOFC cases. 
For Poisson's equation, the MF-BNN is not likely to match the performance of \ours and \ourss regardless of the large number of training epochs being used.
For the SOFC, MF-BNN, and topology optimization, MF-BNN might be able to match \ours given a very large epoch number, making it a bad choice that consumes expensive computational resources.
\begin{figure}[]
	\centering
	\begin{subfigure}[b]{0.32\linewidth}
		\includegraphics[width=1\textwidth]{./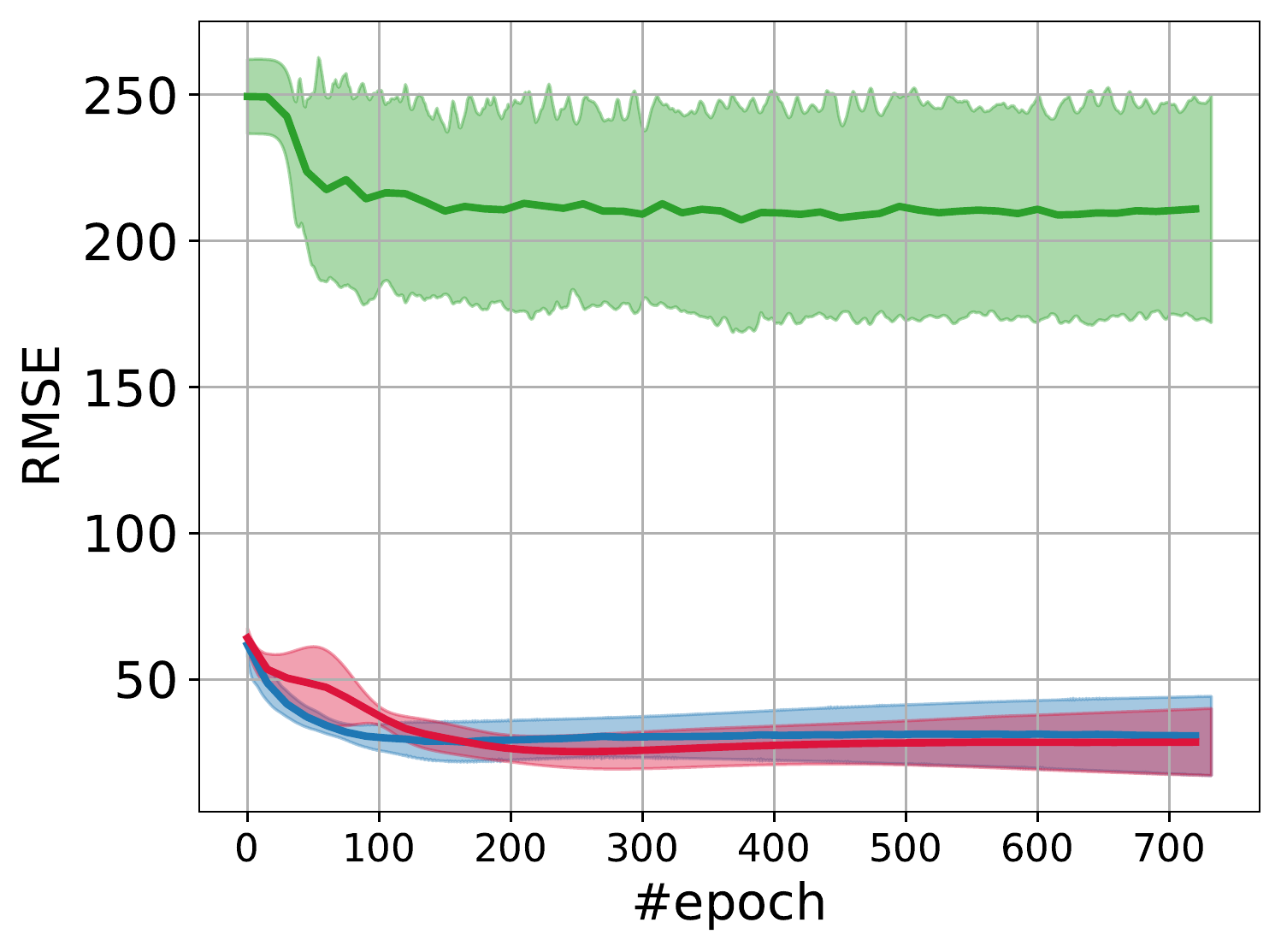}
		\caption{Poisson's}
	\end{subfigure}
	\begin{subfigure}[b]{0.32\linewidth}
		\includegraphics[width=1\textwidth]{./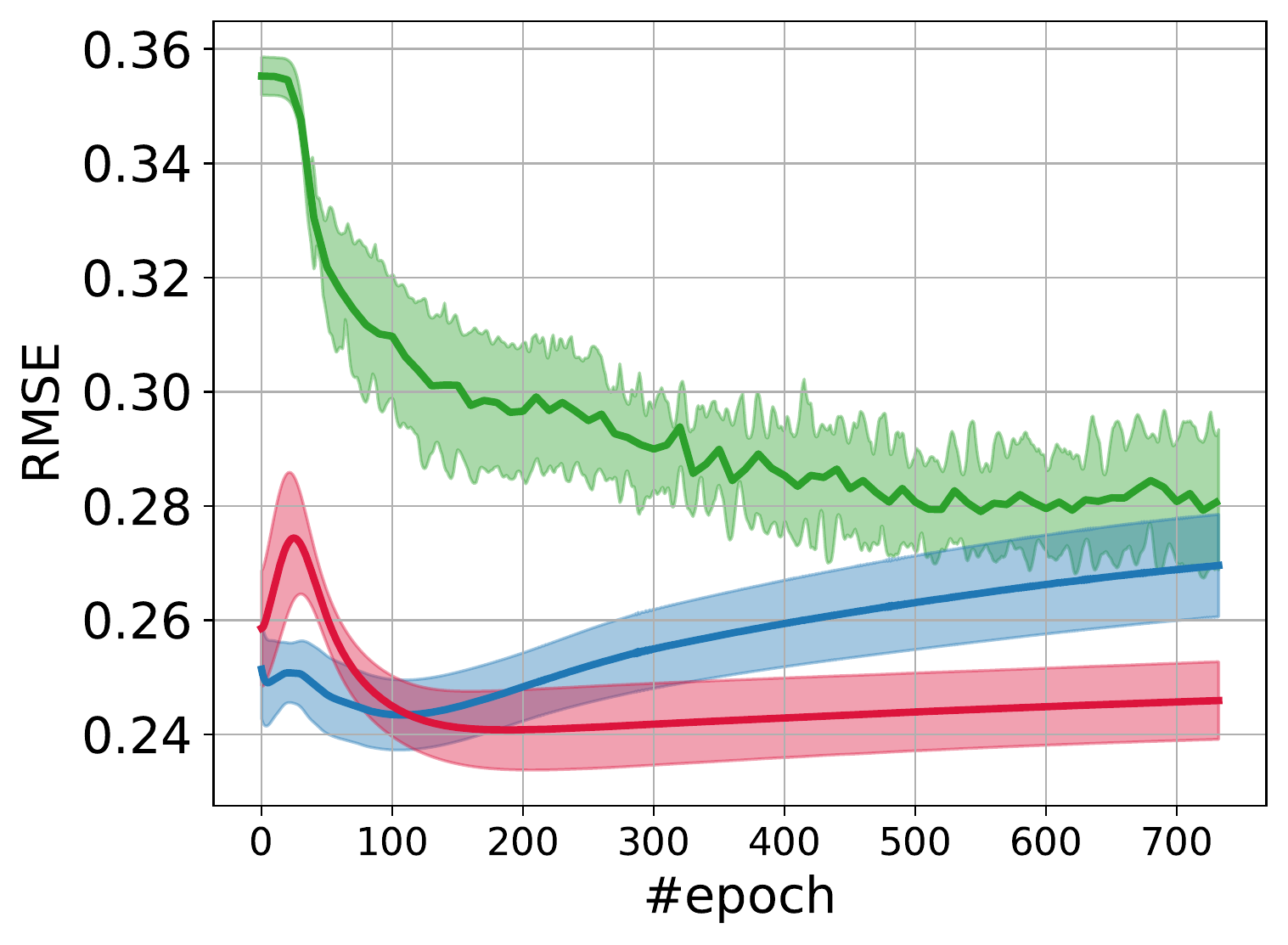}
		\caption{Topology optimization}
	\end{subfigure}
	\begin{subfigure}[b]{0.32\linewidth}
		\includegraphics[width=1\textwidth]{./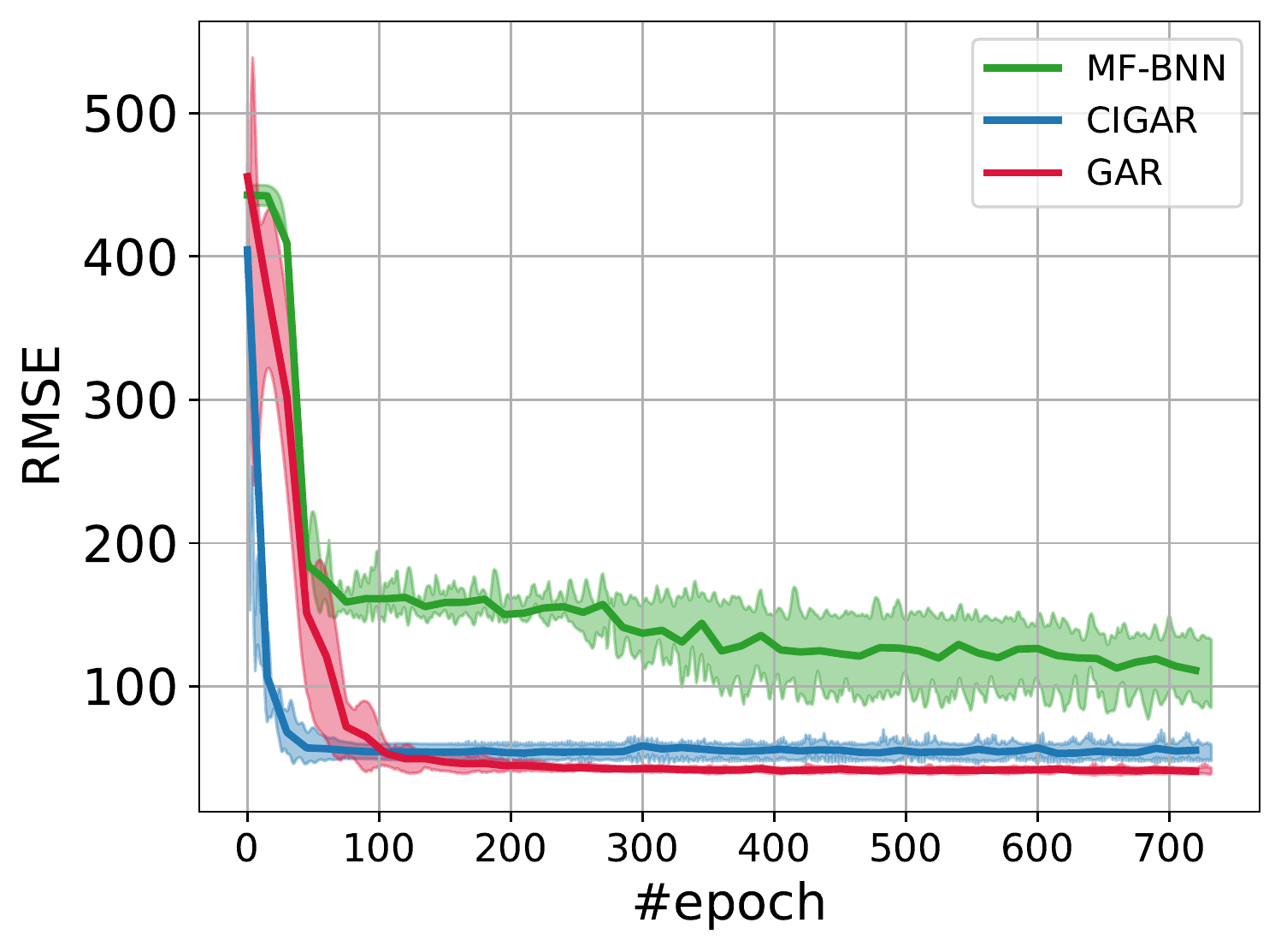}
		\caption{SOFC}
	\end{subfigure}
	\caption{Testing RMSE against an increasing number of training epochs for SOFC, topology optimization, and Poisson datasets.}
	\label{fig: epoch rmse}
\end{figure}
}

\begin{figure}[h]
	\centering
	\begin{subfigure}[b]{0.32\linewidth}
		\includegraphics[width=1\textwidth]{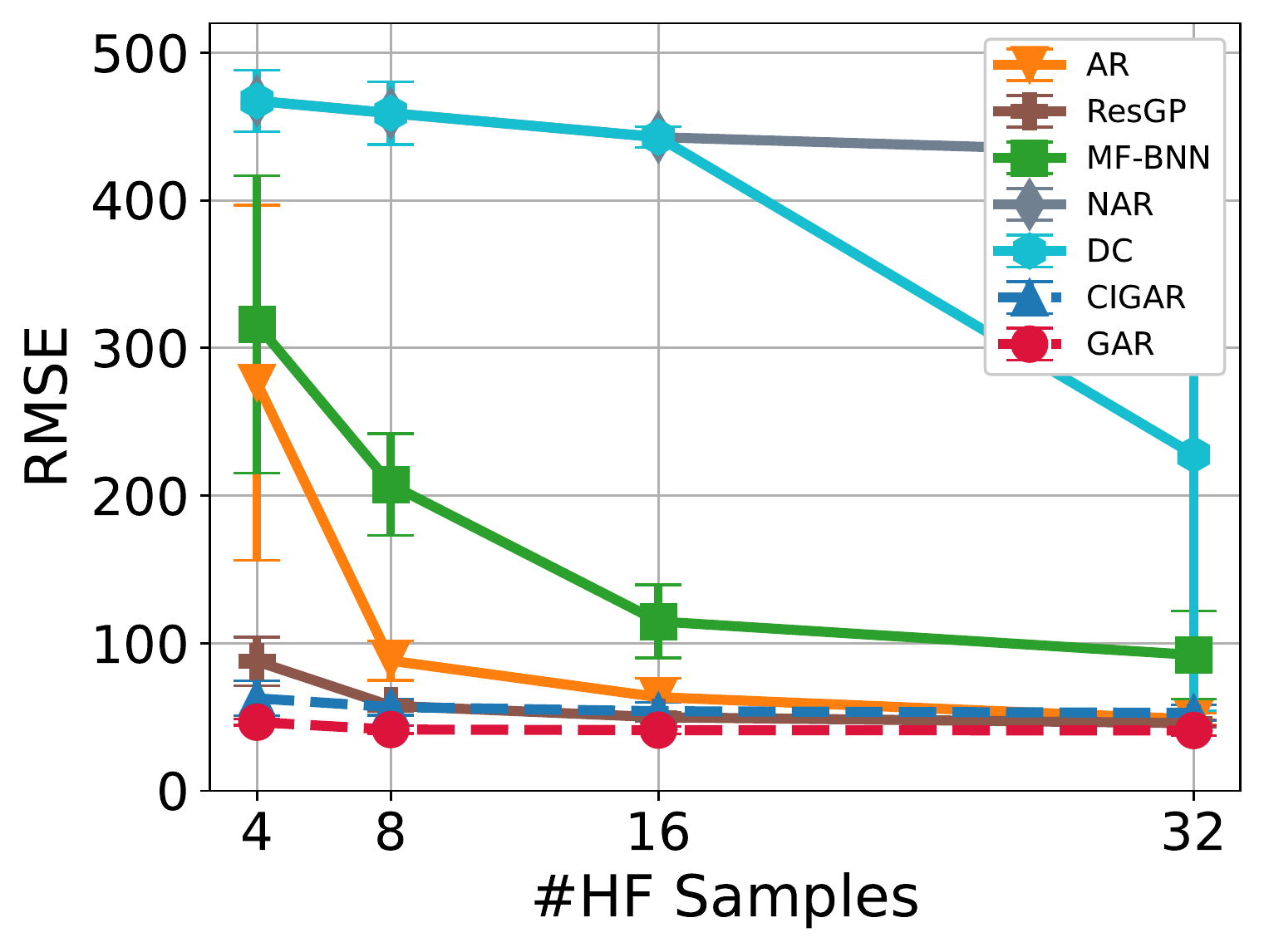}
		\caption{ECD+IP}
        \label{fig: sofc rmse a}
	\end{subfigure}
	\begin{subfigure}[b]{0.32\linewidth}
		\includegraphics[width=1\textwidth]{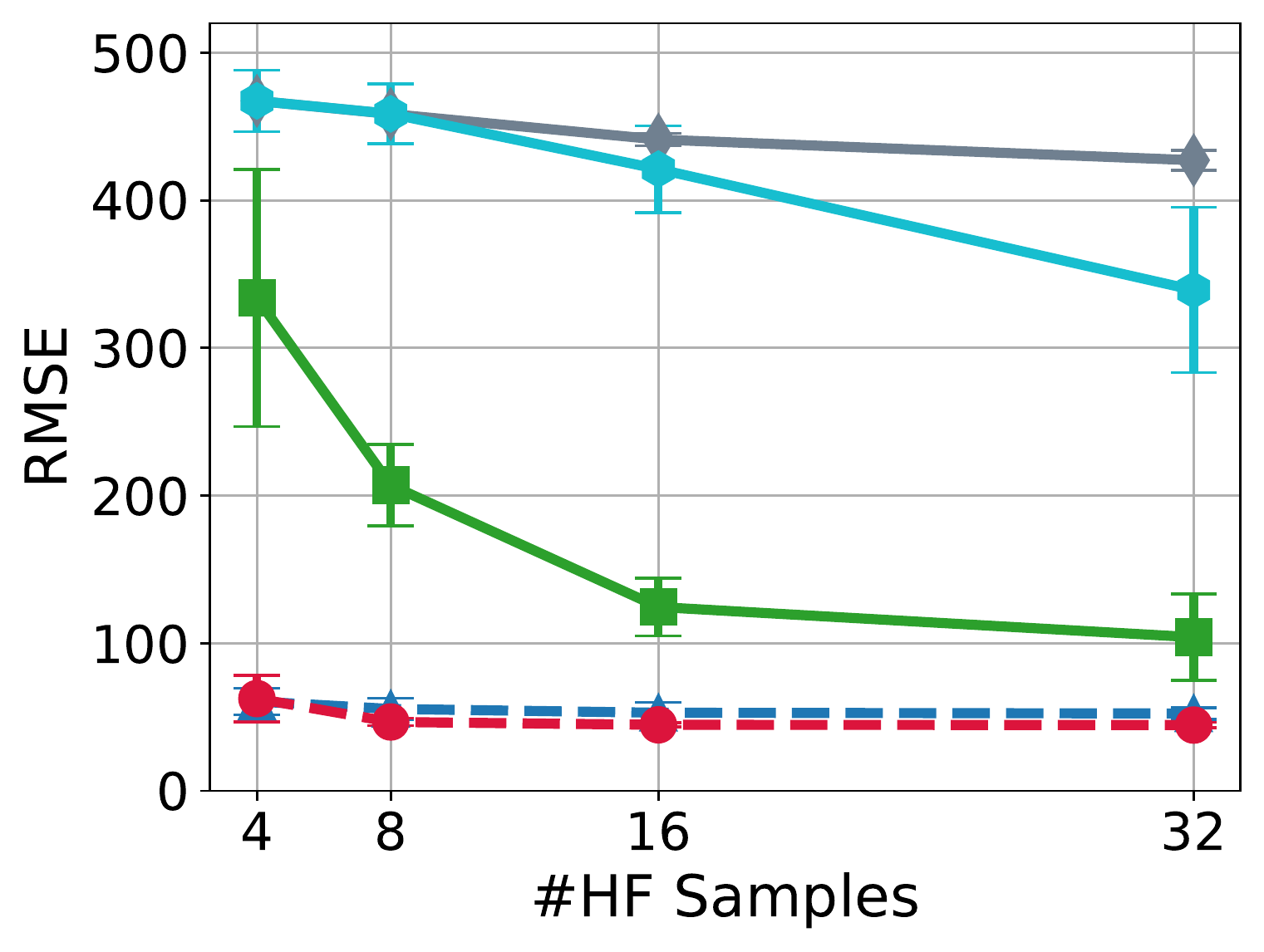}
        \caption{ECD}
        \label{fig: sofc rmse b}
	\end{subfigure}
	\begin{subfigure}[b]{0.32\linewidth}
		\includegraphics[width=1\textwidth]{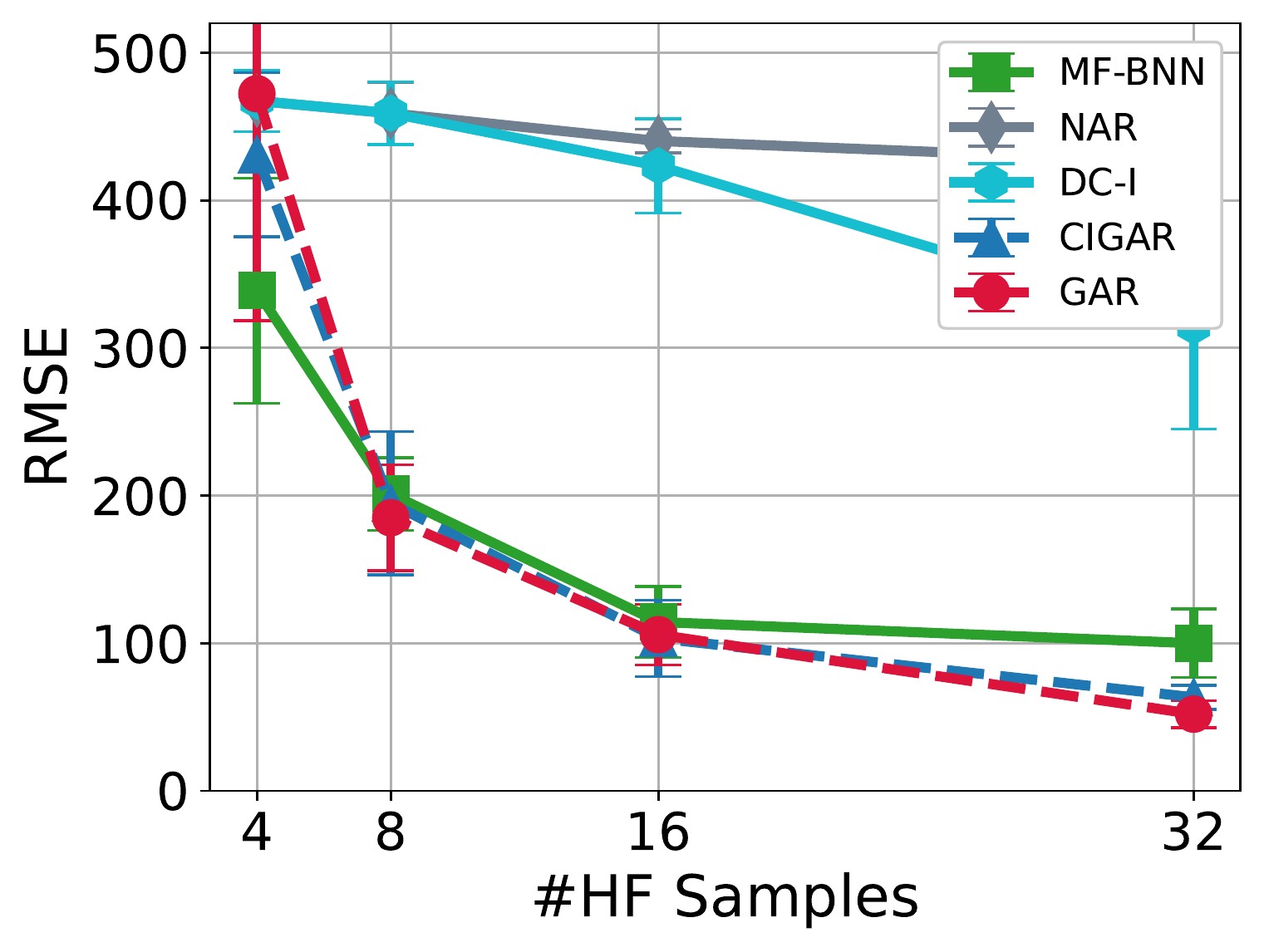}
        \caption{IP}
        \label{fig: sofc rmse c}
	\end{subfigure}
	\caption{RMSE for SOFC with low-fidelity training sample number fixed to 32.}
	\label{fig: sofc rmse}
\end{figure}

\vspace*{-1em}
\section{Conclusion}
\vspace*{-1em}
We propose \ours, the first AR generalization to arbitrary outputs and non-subset multi-fidelity data with a closed-form solution, and \ourss, an efficient implementation by revealing the autokrigeability in AR. Limitation of this work is scalability w.r.t to samples, lack of active learning \citep{li2020deep}, and the applications to broader problems of time series and transfer learning\citep{requeima2019gaussian,xia2020gaussian} using AR-based methods.

\small
\bibliographystyle{unsrtnat}

\medskip

\section*{Checklist}

The checklist follows the references.  Please
read the checklist guidelines carefully for information on how to answer these
questions.  For each question, change the default \answerTODO{} to \answerYes{},
\answerNo{}, or \answerNA{}.  You are strongly encouraged to include a {\bf
justification to your answer}, either by referencing the appropriate section of
your paper or providing a brief inline description.  For example:
\begin{itemize}
  \item Did you include the license to the code and datasets? \answerYes{See Section~\ref{gen_inst}.}
  \item Did you include the license to the code and datasets? \answerNo{The code and the data are proprietary.}
  \item Did you include the license to the code and datasets? \answerNA{}
\end{itemize}
Please do not modify the questions and only use the provided macros for your
answers.  Note that the Checklist section does not count towards the page
limit.  In your paper, please delete this instructions block and only keep the
Checklist section heading above along with the questions/answers below.

\begin{enumerate}

\item For all authors...
\begin{enumerate}
  \item Do the main claims made in the abstract and introduction accurately reflect the paper's contributions and scope?
    \answerYes{See contributions, abstract, and introduction}
  \item Did you describe the limitations of your work?
    \answerYes{See the conclusion and complexity analysis section}
  \item Did you discuss any potential negative societal impacts of your work?
    \answerNA{We do not see an obvisou negative societal impacts as it is fundamental and quite theoretical. }
  \item Have you read the ethics review guidelines and ensured that your paper conforms to them?
    \answerYes{}
\end{enumerate}

\item If you are including theoretical results...
\begin{enumerate}
  \item Did you state the full set of assumptions of all theoretical results?
    \answerYes{}
        \item Did you include complete proofs of all theoretical results?
    \answerYes{Please see Appendix}
\end{enumerate}

\item If you ran experiments...
\begin{enumerate}
  \item Did you include the code, data, and instructions needed to reproduce the main experimental results (either in the supplemental material or as a URL)?
    \answerYes{Please see supplementary materials}
  \item Did you specify all the training details (e.g., data splits, hyperparameters, how they were chosen)?
    \answerYes{Please see the Appendix}
        \item Did you report error bars (e.g., with respect to the random seed after running experiments multiple times)?
    \answerYes{Please see the experimental section}
        \item Did you include the total amount of compute and the type of resources used (e.g., type of GPUs, internal cluster, or cloud provider)?
    \answerYes{Please see the experimental section}
\end{enumerate}

\item If you are using existing assets (e.g., code, data, models) or curating/releasing new assets...
\begin{enumerate}
  \item If your work uses existing assets, did you cite the creators?
    \answerYes{Please see the experimental section}
  \item Did you mention the license of the assets?
    \answerYes{}
  \item Did you include any new assets either in the supplemental material or as a URL?
    \answerYes{}
  \item Did you discuss whether and how consent was obtained from people whose data you're using/curating?
    \answerYes{see experiment senction and the Appendix}
  \item Did you discuss whether the data you are using/curating contains personally identifiable information or offensive content?
    \answerNA{We do not use such data}
\end{enumerate}

\item If you used crowdsourcing or conducted research with human subjects...
\begin{enumerate}
  \item Did you include the full text of instructions given to participants and screenshots, if applicable?
    \answerNA{}
  \item Did you describe any potential participant risks, with links to Institutional Review Board (IRB) approvals, if applicable?
    \answerNA{}
  \item Did you include the estimated hourly wage paid to participants and the total amount spent on participant compensation?
    \answerNA{}
\end{enumerate}

\end{enumerate}

\appendix
\setcounter{lemma}{0}
\setcounter{equation}{0}
\renewcommand{\theequation}{A.\arabic{equation}}

\clearpage
\rule{\linewidth}{4pt}
\begin{center}
\section*{\hfil \LARGE Appendix \hfil}
\end{center}
\rule{\linewidth}{1pt}

\setcounter{equation}{0}

\section{Gaussian process}
\label{appe gp}
Gaussian process (GP) is a typical choice for the surrogate model because of its model capacity for complicated black-box functions and uncertainty quantification.
Consider, for the time being, a simplified scenario in which we have noise-contaminated observations $\{y_i=g({\x}_i) + \epsilon_i\}_{i=1}^N$.
In a GP model, a prior distribution is placed over $f({\x})$, indexed by ${\x}$:

\begin{equation}\label{scalarem}
\eta({\x})| {\btheta}\sim \mathcal{GP} \left(m({\x}),k({\x}, {\x}'|{\btheta})\right),
\end{equation}
with mean and covariance functions:
\begin{equation}
\begin{array}{ll}
m_0({\x}) &=\mathbb{E}[f({\x})],\\
k({\x},{\x}'|\pmb{\theta}) &=\mathbb{E}[(f({\x})-m_0({\x}))(f({\x}')-m_0({\x}'))],
\end{array}
\end{equation}

where $\mathbb{E}[\cdot]$ is the expectation and $\pmb{\theta}$ are the hyperparameters that control the kernel function. By centering the data, the mean function may be assumed to be an equal constant, $m_0({\x})\equiv m_0$. Alternative options are feasible, such as a linear function of ${\x}$, but they are rarely used until previous knowledge of the shape of the function is provided. The covariance function can take several forms, with the automated relevance determinant (ARD) kernel being the most popular.
\begin{equation}\label{covfunc}
k({\x}, {\x}'|\pmb{\theta})=\theta_0\exp\left(-({\x}-{\x}')^T\mbox{diag}(\theta_1^{-2},\hdots,\theta_l^{-2})({\x}-{\x}')\right).
\end{equation}
From this point on, we eliminate the explicit notation of $k(x, x')$'s reliance on $\pmb{\theta}$.
In this instance, the hyperparameters $\theta_1,\hdots,\theta_l$ are referred to as length-scales. For constant parameter ${\x}$, $f({\x})$ is its random variable. In contrast, a collection of values, $f({\x}_i)$, $i=1,\hdots,N$, is a partial realization of the GP. GP's realizations are functions of $x$ that are deterministic. The primary characteristic of GPs is that the joint distribution of $eta(x i), i=1,hdots,N$ is multivariate Gaussian.

Assuming the model deficiency $\varepsilon \sim \mathcal{N}(0,\sigma^2)$ is likewise Gaussian, we can derive the model likelihood using the prior \eqref{scalarem} and available data.
\begin{equation} \label{MLE}
    \begin{aligned}
        \Lcal & \triangleq  p(\y|\x,\btheta) = \int (f(\x) + \varepsilon) d f
        = \mathcal{N}(\y|m_0 \1, \textbf{K} +\sigma^2\I) \\
        &= -\frac{1}{2} \left( {\y} - {m_0} {\bf 1} \right)^T  (\textbf{K} +\sigma^2\I) ^{-1} \left( {\y}-{m_0} {\bf 1} \right) \\
        & \quad -\frac{1}{2}\ln|\textbf{K} + \sigma^2\I|  - \frac{N}{2} \log(2\pi),
    \end{aligned}
\end{equation}
where $\K=[K_{ij}]$ is the covariance matrix, in which $K_{ij}=k({\x}_i, {\x}_j)$, $i,j=1,\hdots,N$.
The hyperparameters $\pmb{\theta}$ is often derived from point estimations using the maximum likelihood (MLE) of \Eqref{MLE} w.r.t. $\btheta$.
The joint distribution of $\y$ and $f(\x)$ is also a joint Gaussian distribution with mean value $m_0 \1$ and covariance matrix.
\begin{equation}
    \begin{array}{c}\label{cm3}
    \displaystyle \textbf{K}'=
    \left[ 
        \begin{array}{c|c} \textbf{K} + \sigma^2\I & {\k}({\x})  \\ 
        \hline
        {\k}^T({\x})& k({\x}, {\x})  + \sigma^2
    \end{array}
    \right],
    \end{array}
\end{equation}
where ${\k}({\x})=(k({\x}_1, {\x}),\hdots, k({\x}_N, {\x}))^T$.
Conditioning on $\y$, the conditional predictive distribution at $\x$ is obtained.
\begin{equation}\label{postpredA}
\begin{array}{c}
\hat{f}({\x})|{\y} \sim \mathcal{N}\left(\mu ({\x} ), v ({\x},{\x}' )\right),
\vspace{2mm}\\
\mu ({\x} )= m_0 {\bf 1} + {\k}({\x})^T \left( \textbf{K} + \sigma^2 \I \right)^{-1} \left( \y - {m_0} {\bf 1} \right),
\vspace{2mm}\\
 v ({\x} )=  \sigma^2 + k({\x}, {\x} ) -{\k}^T({\x}) \left( \textbf{K} + \sigma^2 \I \right)^{-1} {\k}({\x}).
\end{array}
\end{equation}
The expected value $\mathbb{E}[f({\x})]$ is given by $\mu({\x} )$ and the predictive variance by $v({\x})$. 
From \Eqref{cm3} to \Eqref{postpredA} is crucial since the prediction posterior of this wake is based on a comparable block covariance matrix.

\section{Proof of Theorem}
\begin{lemma}\label{lemma1: LAR}\citep{perdikaris2017nonlinear}
	If $\X^h \subset \X^l$, the joint likelihood of AR can be decomposed into two independent likelihoods of the low- and high-fidelity. 
\end{lemma}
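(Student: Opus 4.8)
The plan is to exhibit an explicit invertible linear change of variables that turns the correlated pair $(\y^l,\y^h)$ into an \emph{independent} pair, one block distributed as the low-fidelity GP and the other as the residual GP. Because $\X^h \subset \X^l$, there is a $0/1$ selection matrix $\E$ with $\E^T\X^l = \X^h$, so $\E^T\y^l$ is exactly the sub-vector of low-fidelity values sitting at the high-fidelity inputs. I would define the residual $\y^r := \y^h - \rho\,\E^T\y^l$ and consider the map $(\y^l,\y^h)\mapsto(\y^l,\y^r)$, whose Jacobian matrix $\left(\begin{smallmatrix} \I & \0 \\ -\rho\E^T & \I\end{smallmatrix}\right)$ is block-triangular with identity diagonal blocks and hence has unit determinant. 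Thus $p(\y^l,\y^h)=p(\y^l,\y^r)$, and it suffices to show $\y^l$ and $\y^r$ are independent Gaussians with the claimed covariances.

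First I would compute the cross-covariance from the joint covariance of \eqref{eq:lar joint prob}: $\Cov(\y^l,\y^r)=\Cov(\y^l,\y^h)-\rho\,\Var(\y^l)\E=\rho\K^l(\X^l,\X^h)-\rho\K^l(\X^l,\X^l)\E$. The subset structure supplies the key identity $\K^l(\X^l,\X^h)=\K^l(\X^l,\X^l)\E$ — selecting the columns of the full low-fidelity Gram matrix indexed by $\X^h$ — which makes this cross-covariance vanish. Since $(\y^l,\y^r)$ is a linear image of a Gaussian and therefore jointly Gaussian, zero cross-covariance is exactly independence. Both blocks have zero mean since the priors on $f^l$ and $f^r$ are zero-mean.

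Next I would compute the marginal of $\y^r$. Expanding $\Var(\y^r)=\Var(\y^h)-\rho\,\Cov(\y^h,\y^l)\E-\rho\,\E^T\Cov(\y^l,\y^h)+\rho^2\E^T\Var(\y^l)\E$ and repeatedly using the subset identities $\K^l(\X^h,\X^l)\E=\K^l(\X^h,\X^h)$ and $\E^T\K^l(\X^l,\X^l)\E=\K^l(\X^h,\X^h)$, the four $\rho^2\K^l(\X^h,\X^h)$ contributions ($+,-,-,+$) cancel and leave $\Var(\y^r)=\K^r(\X^h,\X^h)$. Hence $\y^r\sim\Ncal(\0,\K^r(\X^h,\X^h))$, while $\y^l\sim\Ncal(\0,\K^l(\X^l,\X^l))$ is read directly off the joint. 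Taking logs of $p(\y^l,\y^h)=p(\y^l)\,p(\y^r)$ yields $\Lcal=\Lcal^l+\Lcal^r$ with the two stated zero-mean GP likelihoods, each evaluable at cost $O((N^l)^3)$ and $O((N^h)^3)$ respectively.

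The main obstacle — and the only place the subset hypothesis is genuinely used — is establishing the algebraic identities $\K^l(\X^l,\X^h)=\K^l(\X^l,\X^l)\E$ together with their transposed and contracted forms; everything else is routine Gaussian bookkeeping. A conceptually cleaner alternative I would note is the conditional-independence viewpoint: since the observations are noiseless and $\X^h\subset\X^l$, conditioning on $\y^l$ pins down $f^l$ exactly at every high-fidelity location, so $p(\y^h\mid\y^l)$ depends only on the independent residual process $f^r$ and equals $\Ncal(\y^h;\rho\E^T\y^l,\K^r(\X^h,\X^h))$; combined with $p(\y^l,\y^h)=p(\y^l)\,p(\y^h\mid\y^l)$ this gives the same factorization with no explicit block algebra. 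This is exactly the scalar specialization of the tensor argument behind Lemma~\ref{lemma3: GAR}, recovered by setting $\W=\rho$ and $\S^l=\S^r=1$.
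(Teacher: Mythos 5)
Your proof is correct, and it reaches the factorization by a genuinely different route than the paper. The paper's proof is a direct computation with the precision matrix: it writes out the explicit block inverse $\bSigma^{-1}$ of the joint covariance in \Eqref{eq:lar joint prob}, substitutes $\y^h=\rho\E^T\y^l+\y^r$ into the quadratic form $\y^T\bSigma^{-1}\y$, and grinds through a chain of term-by-term cancellations until only $(\y^l)^T(\K^l)^{-1}\y^l+(\y^r)^T(\K^r)^{-1}\y^r$ survives, pairing this with the block determinant factorization $|\bSigma|=|\K^l|\,|\K^r|$. You never invert anything: your unit-Jacobian change of variables $(\y^l,\y^h)\mapsto(\y^l,\y^r)$ reduces the whole lemma to two covariance computations --- $\Cov(\y^l,\y^r)=\0$ and $\Var(\y^r)=\K^r(\X^h,\X^h)$ --- both hinging on the selection identity $\K^l(\X^l,\X^h)=\K^l(\X^l,\X^l)\E$, plus the standard fact that uncorrelated jointly Gaussian vectors are independent; the normalizing constants then factor for free from $p(\y^l,\y^h)=p(\y^l)\,p(\y^r)$. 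Your version is shorter, isolates exactly where the subset hypothesis enters, and (as you note) is the scalar shadow of the tensor argument behind Lemma~\ref{lemma3: GAR}. What the paper's heavier computation buys is the explicit $\bSigma^{-1}$ itself, which it immediately reuses in the same appendix section to derive the predictive posterior mean $\mu^h_*$ and variance $\sigma^h_*$ by block multiplication against the joint inverse; your argument does not produce that object as a by-product, although your closing conditional-independence remark, $p(\y^h\mid\y^l)=\Ncal(\rho\E^T\y^l,\K^r(\X^h,\X^h))$, recovers the same posterior decomposition with little additional work.
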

This lemma has been proven by \citep{legratiet2013bayesian}.
However, the notation and derivation is not easy to follow.
To layout the foundations of \ours, we prove it using a clearer way with friendly notations.
\begin{proof}
Following \Eqref{eq:lar joint prob}, the inversion of the covariance matrix is
\[\bSigma^{-1} = \left(
\begin{array}{cc}
	{(\K^l)^{-1}+\left(
		\begin{array}{cc}
			{0} & {0} \\
			{0} & \rho^2(\K^r)^{-1}
		\end{array}
		\right)} & {-\left(
		\begin{array}{c}
			{0} \\
			\rho(\K^r)^{-1}
		\end{array}
		\right)} \\
	{-\left({0} \quad\rho(\K^r)^{-1}\right)} & {(\K^r)^{-1}}
\end{array}
\right).\]
	
We can write down the log-likelihood for all the low- and high-fidelity observations as,
\begin{equation}
	\begin{aligned}
	&\log p({\Y}^l, {\Y}^h)\\
	=&-\frac{N^h + N^l}{2}\log(2\pi)-\frac{1}{2}\log|\bSigma|-\frac{1}{2}({\Y}^l, \rho\E^T{\Y}^l+{\Y}^r)^T\bSigma^{-1}\left(\begin{array}{c}{\Y}^l\\ \rho\E^T{\Y}^l+{\Y}^r\end{array}\right)\\
	=&-\frac{1}{2}\log|\bSigma|-\frac{N^h + N^l}{2}\log(2\pi)
	-\frac{1}{2}[({\Y}^l)^T(\K^l)^{-1}\Y^l
	+({\Y}^l)^T\left(\begin{array}{cc}\0 & \0 \\ \0 & \rho^2{(\K^r)}^{-1}\end{array}\right)\Y^l\\
	&-\rho({\Y}^l)^T\E{\left({0},\ \rho(\K^r)^{-1}\right)}\Y^l
	-{\left({0},\ (\Y^r)^T\rho(\K^r)^{-1}\right)}\Y^l -({\Y}^l)^T\E\rho{\K_r}^{-1}(\rho\E^T{\Y}^l+{\Y}^r)\\
	& + \rho\Y^l\E(\K^r)^{-1}(\rho\E^T{\Y}^l+{\Y}^r) + \Y^r(\K^r)^{-1}(\rho\E^T{\Y}^l+{\Y}^r)]\\
	=&-\frac{1}{2}\log|\bSigma|-\frac{N^h + N^l}{2}\log(2\pi)
	-\frac{1}{2}[({\Y}^l)^T(\K^l)^{-1}\Y^l
	-{\left({0},\ (\Y^r)^T\rho(\K^r)^{-1}\right)}\Y^l\\
	&+\Y^r(\K^r)^{-1}\rho\E^T{\Y}^l+\Y^r(\K^r)^{-1}{\Y}^r]\\
	=&-\frac{1}{2}\log|\K^l|-\frac{1}{2}\log|\K^r|-\frac{N^l+N^h}{2}log(2\pi)-\frac{1}{2}({\Y}^l)^T(\K^l)^{-1}\Y^l-\frac{1}{2}\Y^r(\K^r)^{-1}{\Y}^r\\
	=&\underbrace{-\frac{N^L}{2}log(2\pi)-\frac{1}{2}\log|\K^l|-\frac{1}{2}({\Y}^l)^T(\K^l)^{-1}{\Y}^l}_{\Lcal^l}
	\underbrace{-\frac{N^h}{2}log(2\pi)-\frac{1}{2}\log|\K^r|-\frac{1}{2}({\Y}^r)^T(\K^r)^{-1}{\Y}^r}_{\Lcal^r}
	\end{aligned}
\end{equation}
\end{proof}
where $\Y^r =  \Y^h- \rho\E^T \Y^l$, ${\Lcal^l}$ is the log-likelihood of the low-fidelity data with the lower fidelity kernel, and ${\Lcal^r}$ is the log-likelihood of the residual data with the residual kernel; ${\Lcal^l}$ and ${\Lcal^r}$ are independent and thus can be trained in parallel.

Based on the joint probability \Eqref{eq:lar joint prob}, we can similarly derive the predictive posterior distribution of the high-fidelity using the standard GP posterior derivation. 
Conditioning on $\Y^h$ and $\Y^l$, the predictive high-fidelity posterior for a new input $\x_*$ is also a Gaussian $\Ncal({\mu}_*^h, \sigma^h_*)$:
\begin{equation}
	\begin{aligned}
		{\mu^h_*}=&\left(\rho \k^l(\x_*,\X^l),\ \ 
		\rho^2 \k^l(\x_*,\X^h) + \k^r(\x_*,\X^h)\right)\K^{-1}\left(\begin{array}{c}{\Y}^l \\ \rho\E^T {\Y}^l + {\Y}^r\end{array}\right)\\
		=&\rho\k^l(\x_*,\X^l)\K^l(\X^l,\X^l)^{-1}{\Y}^l\\
		&+\rho^3\k^l(\x_*,\X^l)\E(\K^r)^{-1}\E^T{\Y}^l-
		\rho^3\k^l(\x_*,\X^h)(\K^r)^{-1}\E^T{\Y}^l\\
		&-\rho\k^r(\x_*,\X^h)(\K^r)^{-1}\E^T{\Y}^l-\rho^2\k^l(\x_*,\X^l)\E(\K^r)^{-1}\left[\rho\E^T{\Y}^l+{\Y}^r\right]\\
		&+\rho^2\k^l(\x_*,\X^h)(\K^r)^{-1}\left[\rho\E^T{\Y}^l +{\Y}^r\right]
		+\k^r(\x_*,\X^h)(\K^r)^{-1}\left[\rho\E^T{\Y}^l + {\Y}^r\right]\\
		=&\left[\rho\k^l(\x_*,\X^l)(\K^l)^{-1}\right]{\Y}^l+\k^r(\x_*,\X^h)(\K^r)^{-1}{\Y}^r\\
		&-\left[\rho \k^r(\x_*,\X^h)(\K^r)^{-1}\right]\E{\Y}^l+\left[\rho\k^r(\x_*,\X^h)(\K^r)^{-1}\right]\E{\Y}^l\\
		=&\left[\rho\k^l(\x_*,\X^l)(\K^l)^{-1}\right]{\Y}^l+\k^r(\x_*,\X^h)(\K^r)^{-1}{\Y}^r\\
	\end{aligned}
\end{equation}
and 
\begin{equation}
	\begin{aligned}
		{\sigma_*^h} = &\left(\rho^2 \k^l(\x_*, \x_*)+\k^r(\x_*,\x_*)\right) - (\rho\k_*^l, \rho^2\k_*^l(\X^h)+\k_*^r)^T\K^{-1}(\rho\k_*^l, \rho^2\k_*^l(\X^h)+\k_*^r)\\
		=& \left(\rho^2\k^l(\x_*, x_*)+\k^r(\x_*, \x_*)\right)- \left(\rho(\k^l_*)^T(\K^l)^{-1}\rho\k^l_*\right)
		+\left(\0, \rho(\k^r_*)^T(\K^r)^{-1}\right)\rho\k^l_*\\
		&-(\k^r_*)^T(\K^r)^{-1}\rho^2\k^l_*(\X^h)-(\k^r_*)^T(\K^r)^{-1}\k_*^r\\
		=& \rho^2\left(\k^l(\x_*, \x_*) - (\k^l_*)^T(\K^l)^{-1}\k^l_*\right)
		+\left(\k^r(\x_*, \x_*) - (\k^r_*)^T(\K^r)^{-1}\k^r_*\right)
	\end{aligned}
\end{equation}
where, $ \k_*^l(\X^h) = \k^l(\x_*, \X^h)$ is the covariance vector between the new inputs $\x_*$ and $\X^h$. Notice that the predictive posterior is also decomposed into two independent parts that related to the low-fidelity GP and the residual GP, which is convenient for parallel computing and saving computational resources.

\begin{lemma}\label{lemma2: K}
	Given tensor GP priors for $\tY^l(\x,\x')$ and $\tY^r(\x,\x')$ and the Tucker transformation of \Eqref{eq:tensor LAR}, the joint probability for $ \y=[\vecrm({\tY^l})^T,\vecrm(\tY^h)^T]^T $ is $\y \sim \Ncal(\0, \bSigma)$, where $\bSigma = $\\
  $ \left(\begin{array}{cc}
			 \K^l(\X^l, \X^l) \kron \left(\bigotimes_{m=1}^M\S^l_m\right) &
			 \K^l(\X^l,\X^h) \kron \left(\bigotimes_{m=1}^M\S^l_m\W_m^T\right)\\
			\K^l(\X^h,\X^l)\kron \left(\bigotimes_{m=1}^M\W_m\S^l_m\right) & \K^l(\X^h,\X^h)\kron \left(\bigotimes_{m=1}^M\W_m\S^l_m\W_m\right) + 
			\K^r(\X^h,\X^h )\kron\left(\bigotimes_{m=1}^M \S^r_m \right)
		\end{array}
		\right) $
\end{lemma}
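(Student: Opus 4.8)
The plan is to exploit the linearity of the Tucker transformation in vectorized form together with the independence of the two tensor-GP priors, reducing the claim to four block covariance computations. Throughout I abbreviate $\S^l := \bigotimes_{m=1}^M \S^l_m$, $\S^r := \bigotimes_{m=1}^M \S^r_m$, and $\W := \bigotimes_{m=1}^M \W_m$.

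First I would record the vectorized consequence of the Tucker transformation \Eqref{eq:tensor LAR}: the stated property $\vec{\tF^h(\x)-\tF^r(\x)} = \W\,\vec{\tF^l(\x)}$ holds pointwise in $\x$, and stacking over the high-fidelity inputs $\X^h$ (with the input index as the outer, slower-varying block) yields $\vecrm(\tY^h) = (\I \kron \W)\vecrm(\tF^l(\X^h)) + \vecrm(\tY^r)$, where $\tF^l(\X^h)$ denotes the latent low-fidelity function evaluated at the high-fidelity inputs. Because a TGP prior is by definition a zero-mean multivariate normal after vectorization, $\vecrm(\tY^l)$, $\vecrm(\tF^l(\X^h))$, and $\vecrm(\tY^r)$ are jointly Gaussian with mean $\0$, and $\tF^l$ and $\tF^r$ are mutually independent. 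Since $\vecrm(\tY^h)$ is a fixed linear image of these Gaussians, the stacked vector $\y = [\vecrm(\tY^l)^T, \vecrm(\tY^h)^T]^T$ is zero-mean Gaussian, so it only remains to identify the four covariance blocks of $\bSigma$.

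Next I would compute each block using bilinearity of $\cov(\cdot,\cdot)$, the independence of $\tF^l$ and $\tF^r$, and the mixed-product rule $(A\kron B)(C \kron D) = (AC)\kron(BD)$. The top-left block is immediate from the prior: $\cov(\vecrm(\tY^l)) = \K^l(\X^l,\X^l) \kron \S^l$. For the top-right block, independence annihilates the residual contribution, leaving $\cov(\vecrm(\tY^l), \vecrm(\tY^h)) = (\K^l(\X^l,\X^h)\kron\S^l)(\I \kron \W^T) = \K^l(\X^l,\X^h) \kron (\S^l \W^T)$, and $\S^l\W^T = \bigotimes_m(\S^l_m\W_m^T)$ by the mixed-product rule. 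The bottom-left block is its transpose, using symmetry of each $\S^l_m$. For the bottom-right block, expanding the quadratic form and discarding the vanishing cross terms gives $(\I\kron\W)(\K^l(\X^h,\X^h)\kron\S^l)(\I\kron\W^T) + \K^r(\X^h,\X^h)\kron\S^r = \K^l(\X^h,\X^h)\kron(\W\S^l\W^T) + \K^r(\X^h,\X^h)\kron\S^r$, and $\W\S^l\W^T = \bigotimes_m(\W_m\S^l_m\W_m^T)$ recovers the stated entry.

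The main obstacle is bookkeeping the two distinct tensor structures simultaneously: the input-kernel factor $\K$ and the output factors $\S_m, \W_m$ must never be mixed, since $\W$ acts only on the output modes. The cleanest way to enforce this is to keep the input factor always on the left of every Kronecker product (so that the transformation appears as $\I_N \kron \W$) and then apply the mixed-product rule factor-by-factor across the $M$ output modes. Once the vectorized form of the Tucker operator is correctly identified as left-multiplication by $\bigotimes_m \W_m$ on the output part, all four blocks fall out mechanically, so the only genuine care required is in the vectorization convention and the Kronecker factor ordering.
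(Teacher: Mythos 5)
Your proof is correct, and its skeleton --- reducing the claim to four covariance blocks computed via bilinearity of $\cov(\cdot,\cdot)$, independence of $\tF^l$ and $\tF^r$, and the Kronecker mixed-product rule --- is the same as the paper's. The genuine difference is in how you link the two fidelities. The paper writes $\vecrm({\tY^h}) = \left[\E \kron \W\right]\vecrm({\tY^l}) + \vecrm({\tY^r})$ with a selection matrix satisfying $\X^h = \E^T\X^l$, so its cross block arises as $\left(\K^l(\X^l,\X^l)\kron\S^l\right)\left(\E\kron\W^T\right) = \K^l(\X^l,\X^h)\kron\left(\S^l\W^T\right)$; this bakes the subset assumption $\X^h \subset \X^l$ into the proof. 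You instead route the transformation through the latent field $\tF^l(\X^h)$, which is jointly Gaussian with $\tY^l$ under the TGP prior, so the cross kernel $\K^l(\X^l,\X^h)$ comes directly from the prior and no subset structure is ever invoked. This buys something real: the main text asserts that this lemma ``does not require a subset dataset to hold,'' and your argument actually establishes the claim at that level of generality, whereas the paper's own proof as written covers only the subset case. Two minor points: the bottom-right block you derive, $\bigotimes_m \W_m\S^l_m\W_m^T$, is the correct one --- the $\W_m\S^l_m\W_m$ in the statement above (missing transpose) is a typo, as the main-text version of the lemma and the paper's own block computation confirm; and your insistence on keeping the input kernel as the leading Kronecker factor is exactly the convention forced by the prior $\K \kron \S$, so your stacking convention is consistent with the paper's.
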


\begin{proof}
Since the $ \bSigma $ is the covariance matrix of $ \y $, it can be expressed in block form as:
\begin{equation*}
	\bSigma = \left(\begin{array}{cc} \cov(\vecrm(\tY^l), \vecrm(\tY^l)) & \cov(\vecrm(\tY^l), \vecrm(\tY^h))\\ \cov(\vecrm(\tY^h), \vecrm(\tY^l)) & \cov(\vecrm(\tY^h), \vecrm(\tY^h))\end{array}\right),	
\end{equation*}
where $\cov(\vecrm(\tY^l), \vecrm(\tY^h)) = \cov(\vecrm(\tY^l), \vecrm(\tY^h))^T $ is the cross covariance between $\tY^l$ and $\tY^h$.
Assuming $ \tY^h \in \mathbb{R}^{N^h\times d^h_1\times ... \times d^h_M}$ and $ \tY^l \in \mathbb{R}^{N^l\times d^l_1\times ... \times d^l_M } $, together with the property of the Tucker operator in \Eqref{eq:tensor LAR}, the high-fidelity data and low-fidelity data have the following transformation,
\begin{equation}\label{Assumption of Y}
	\begin{aligned}
		{\tY^h} &= \tY^l \times_1 \E \times_2 \W_1 \times_3 ... \times_{M}\W_{M-1} \times_{M+1} \W_M \\
		\vecrm({\tY^h}) &= \left[\E \kron\left(\bigotimes_{m=1}^M\W_m\right)\right] \vecrm({\tY^l}) + \vecrm({\tY^r}),
	\end{aligned}
\end{equation}
where $ \forall i = 1, 2, ..., M, \W_i \in \mathbb{R}^{d^h_i\times d^l_i} $, and $ \E^T = \left(\0, \I_{N^h}\right) \in \mathbb{R}^{N^h\times N^l} $ is the selection matrix such that $\X^h=\E^T \X^l$.
By definition our GP prior, the low-fidelity data has the joint probability:
\[\vecrm({\tY^l}) \sim \Ncal \left(0,\K^l(\X^l,\X^l)\kron\left( \bigotimes_{m=1}^M \S^l_m \right) \right)\]
Thus the covariance matrix of low-fidelity data is $ \cov(\vecrm(\tY^l), \vecrm(\tY^l)) = \K^l(\X^l,\X^l)\kron\left( \bigotimes_{m=1}^M \S^l_m \right) $. After that, we can derive the other part of the $ \bSigma $. Firstly, assuming the residual information $ \vecrm(\tY^r) $ is independent from $ \vecrm(\tY^l) $, the covariance between $ \vecrm(\tY^h) $ and $ \vecrm(\tY^l) $ is 
\begin{equation}
	\begin{aligned}
		\cov(\vecrm(\tY^l), \vecrm(\tY^h))
		 =& \cov\left(\vecrm(\tY^l), \left[\E \kron\left(\bigotimes_{m=1}^M\W_m\right)\right] \vecrm({\tY^l}) + \vecrm({\tY^r})\right)\\
		=&\cov\left(\vecrm({\tY^l}), \vecrm(\tY^r)\right)+\cov\left(\vecrm(\tY^l), \left[\E \kron\left(\bigotimes_{m=1}^M\W_m\right)\right]\vecrm({\tY}^l)\right)\\
		=&\cov(\vecrm({\tY}^l), \vecrm({\tY^l}))\left[\E \kron\left(\bigotimes_{m=1}^M\W_m\right)\right]^T\\
		=&\left[\K^l(\X^l,\X^l)\kron\left( \bigotimes_{m=1}^M \S^l_m \right)\right]\left[\E^T \kron\left(\bigotimes_{m=1}^M\W_m^T\right)\right]\\
		=&\K^l(\X^l,\X^h)\kron\left(\bigotimes_{m=1}^M \S^l_m\W_m^T\right).\\
	\end{aligned}
\end{equation}
Since $ \cov(\vecrm(\tY^l), \vecrm(\tY^h)) $ is the transpose of $ \cov(\vecrm(\tY^h), \vecrm(\tY^l)) $, so the upper right part of $ \bSigma $ is 
\begin{equation}
	\begin{aligned}
		\cov(\vecrm(\tY^h), \vecrm(\tY^l))
		=\cov(\vecrm(\tY^l), \vecrm(\tY^h))^T
		=&\K^l(\X^h,\X^l)\kron\left(\bigotimes_{m=1}^M \W_m\S^l_m\right).\\
	\end{aligned}
\end{equation}
For the lower and right part of $ \bSigma $, the covariance between $ \cov(\vecrm(\tY^h), \vecrm(\tY^h)) $ is 
\begin{equation}
	\begin{aligned}
		&\cov(\vecrm(\tY^h), \vecrm(\tY^h))\\
		=& \cov\left(\left[\E \kron\left(\bigotimes_{m=1}^M\W_m\right)\right] \vecrm({\tY^l}) + \vecrm({\tY^r}), \left[\E \kron\left(\bigotimes_{m=1}^M\W_m\right)\right] \vecrm({\tY^l}) + \vecrm({\tY^r})\right)\\
		=& \cov\left(\left[\E \kron\left(\bigotimes_{m=1}^M\W_m\right)\right] \vecrm({\tY^l}), \left[\E \kron\left(\bigotimes_{m=1}^M\W_m\right)\right] \vecrm({\tY^l})\right) + \cov\left(\vecrm(\tY^r), \vecrm(\tY^r)\right)\\
		&+
		\cov\left(\left[\E \kron\left(\bigotimes_{m=1}^M\W_m\right)\right] \vecrm({\tY^l}) , \vecrm(\tY^r)\right)
		+\cov\left( \vecrm(\tY^r), \left[\E \kron\left(\bigotimes_{m=1}^M\W_m\right)\right] \vecrm({\tY^l})\right)\\
		=&\left[\E \kron\left(\bigotimes_{m=1}^M\W_m\right)\right]\left(\cov(\vecrm({\tY}^l), \vecrm({\tY^l}))\right)\left[\E \kron\left(\bigotimes_{m=1}^M\W_m\right)\right]^T+\cov\left( \vecrm({\tY}^r), \vecrm({\tY}^r)\right)\\
		=&\K^l(\X^h,\X^h)\kron\left(\bigotimes_{m=1}^M \W_m \S^l_m \W_m^T\right) + \K^r(\X^h,\X^h) \left(\bigotimes_{m=1}^M \S^r_m\right).
	\end{aligned}
\end{equation}
Assembling the several parts together, we have the joint covariance matrix $ \bSigma $:
\begin{equation}
	\begin{aligned}
		\left(\begin{array}{cc}
			\K^l(\X^l, \X^l) \kron \left(\bigotimes_{m=1}^M\S^l_m\right) &
			\K^l(\X^l,\X^h) \kron \left(\bigotimes_{m=1}^M\S^l_m\W_m^T\right)\\
			\K^l(\X^h,\X^l)\kron \left(\bigotimes_{m=1}^M\W_m\S^l_m\right) & \K^l(\X^h,\X^h)\kron \left(\bigotimes_{m=1}^M\W_m\S^l_m\W_m\right) + 
			\K^r(\X^h,\X^h )\kron\left(\bigotimes_{m=1}^M \S^r_m \right)
		\end{array}
		\right)
	\end{aligned}
\end{equation}
\end{proof}

Before we move on to the next proof, we introduce the matrix inversion property, which will become handy later.
\begin{Property}
	\label{prop1: block inv trick}
	For any invertible block matrixes $ \left(\begin{array}{cc}\A & \B \\ \B^T & \C\end{array}\right) $, where the sub-matrixes are also invertible, we have  $ (\B^T, \C)\left(\begin{array}{cc}\A & \B \\ \B^T & \C\end{array}\right)^{-1}=(\0,\I) $ and $ \left(\begin{array}{cc}\A & \B \\ \B^T & \C\end{array}\right)^{-1}\left(\begin{array}{c}\B \\ \C\end{array}\right)=\left(\begin{array}{c}\0 \\ \I \end{array}\right) $.
\end{Property}

\begin{proof}
	The inversion of a block matrix (if it is invertible) following the Sherman-Morrison formula is
\begin{equation}
	\begin{aligned}
		&\left(\begin{array}{cc}\A & \B \\ \B^T & \C\end{array}\right)^{-1}=\left(
		\begin{array}{cc}
			\bf P^{-1} &  -\bf P^{-1}\B\C^{-1}\\
			-\C^{-1}\B^T\bf P^{-1} & \C^{-1}+\C^{-1}\B^T\bf P^{-1}\B\C^{-1} 
		\end{array}
		\right),a
	\end{aligned}
\end{equation}
where $\P=\A-\B\C^{-1}\B^T$, we can then derive the multiplication in Property \autoref{prop1: block inv trick} by the rule of block matrix multiplication:
\begin{equation}
	\begin{aligned}
		&(\B^T, \C)\left(\begin{array}{cc}\A & \B \\ \B^T & \C\end{array}\right)^{-1}\\
		=&(\B^T, \C)\left(
		\begin{array}{cc}
			\bf P^{-1} &  -\bf P^{-1}\B\C^{-1}\\
			-\C^{-1}\B^T\bf P^{-1} & \C^{-1}+\C^{-1}\B^T\bf P^{-1}\B\C^{-1} 
		\end{array}
		\right)\\
		=&\left(\B^T\bf P^{-1}-\C(\C^{-1}\B^T\bf P^{-1}),-\B^T\bf P^{-1}\B\C^{-1}+\C\C^{-1}+\C(\C^{-1}\B^T\bf P^{-1}\B\C^{-1})\right)\\
		=&\left(\B^T\bf P^{-1}-\B^T\bf P^{-1},-\B^T\bf P^{-1}\B\C^{-1}+\I+\B^T\bf P^{-1}\B\C^{-1}\right)\\
		=&\left(\0,\I \right).
	\end{aligned}
\end{equation}
Similarly, the other part of the conclusion can also be derived
\begin{equation}
\begin{aligned}
		&\left(\begin{array}{cc}\A & \B \\ \B^T & \C\end{array}\right)^{-1}\left(\begin{array}{c}\B \\ \C\end{array}\right)\\
		=&\left(
		\begin{array}{cc}
			\bf P^{-1} &  -\bf P^{-1}\B\C^{-1}\\
			-\C^{-1}\B^T\bf P^{-1} & \C^{-1}+\C^{-1}\B^T\bf P^{-1}\B\C^{-1} 
		\end{array}
		\right)\left(\begin{array}{c}\B\\ \C \end{array}\right)\\
		=&\left(\begin{array}{c}\bf P^{-1}\B-\bf P^{-1}\B\C^{-1}\C \\ -\C^{-1}\B^T\bf P^{-1}\B+(\C^{-1}+\C^{-1}\B^T\bf P^{-1}\B\C^{-1})\C\end{array}\right)\\
		=&\left(\begin{array}{c}\bf P^{-1}\B-\bf P^{-1}\B \\ -\C^{-1}\B^T\bf P^{-1}\B+ \I+\C^{-1}\B^T\bf P^{-1}\B\end{array}\right)\\
		=&\left(\begin{array}{c}\0 \\ \I \end{array}\right),
	\end{aligned}
\end{equation}
\end{proof}
which seems quite obvious and intuitive if we assume that the matrix is symmetric.

\begin{lemma}
	Generalization of Lemma \ref{lemma1} in \ours.
	If $\X^h \subset \X^l$, the joint likelihood $\Lcal$ for $\y=[\vecrm({\tY^l})^T,\vecrm(\tY^h)^T]^T$ admits two independent separable likelihoods $\Lcal = \Lcal^l + \Lcal^r$, where
	\[
	  \Lcal^l = -\frac{1}{2}\vec{\tY^l}^T (\K^l \kron \S^l)^{-1} \vec{\tY^l} - \frac{1}{2} \log|\K^l \kron \S^l| - \frac{N^l D^l}{2} \log(2\pi),
	  \]
	\[ 
	  \Lcal^r = -\frac{1}{2} \vec{\tY^h- \tY^l \times \hat{\tW}}^T ( \K^r \kron \S^r )^{-1} \vec{\tY^h- \tY^l \times \hat{\tW}} - \frac{1}{2} \log|\K^r \kron \S^r| - \frac{N^h D^h}{2}  \log(2\pi),
	  \]
	  where $\hat{\tW} = [\E,\tW]$ is the original weight tensor concatenated with an selection matrix $\X^h=\E^T \X^l$. 
	\end{lemma}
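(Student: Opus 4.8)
The plan is to reuse the structure of the scalar proof (Lemma~\ref{lemma1: LAR}) but to carry the Kronecker output-covariance factors through every step. The cleanest route factorizes the joint density as $p(\tY^l,\tY^h)=p(\tY^l)\,p(\tY^h\mid\tY^l)$ and recognizes each factor as a high-order GP (HOGP) likelihood. Because $\X^h\subset\X^l$, the selection matrix $\E$ with $\E^T\X^l=\X^h$ exists, so the Tucker transformation of \Eqref{eq:tensor LAR} gives the exact identity $\vecrm(\tY^h)=(\E\kron\W)\vecrm(\tY^l)+\vecrm(\tY^r)$, where $\W=\bigotimes_{m=1}^M\W_m$ and $\tY^r\sim\Ncal(\0,\K^r\kron\S^r)$ is drawn independently of $\tY^l$ in the prior.

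First I would change variables from $(\tY^l,\tY^h)$ to $(\tY^l,\tY^r)$ via $\vecrm(\tY^r)=\vecrm(\tY^h)-(\E\kron\W)\vecrm(\tY^l)=\vecrm(\tY^h-\tY^l\times\hat{\tW})$. This map is block triangular with identity diagonal blocks, so its Jacobian determinant is $1$ and $p(\tY^l,\tY^h)=p(\tY^l,\tY^r)$. By the independence of the residual prior from the low-fidelity prior, $p(\tY^l,\tY^r)=p(\tY^l)\,p(\tY^r)$. The marginal $\vecrm(\tY^l)\sim\Ncal(\0,\K^l\kron\S^l)$ is an HOGP likelihood equal to $\Lcal^l$, and $\vecrm(\tY^r)\sim\Ncal(\0,\K^r\kron\S^r)$ is the residual HOGP likelihood equal to $\Lcal^r$, with the two normalizers of dimension $N^lD^l$ and $N^hD^h$ matching the stated $\log(2\pi)$ constants; taking logarithms yields $\Lcal=\Lcal^l+\Lcal^r$. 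The subset assumption is exactly what lets $\hat{\tW}=[\E,\W_1,\dots,\W_M]$ select the low-fidelity outputs at the high-fidelity nodes, so that the residual is well-defined on $\X^h$.

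Alternatively, I would verify the claim directly, mirroring the scalar computation. Using the block-inverse identity (Property~\ref{prop1: block inv trick}) together with the Kronecker mixed-product rule $(\A\kron\B)(\C\kron\D)=(\A\C)\kron(\B\D)$ and $(\A\kron\B)^{-1}=\A^{-1}\kron\B^{-1}$, I would expand $\y^T\bSigma^{-1}\y$ after substituting $\vecrm(\tY^h)=(\E\kron\W)\vecrm(\tY^l)+\vecrm(\tY^r)$ and show that the cross terms between $\tY^l$ and $\tY^r$ cancel, leaving the two quadratic forms. The main obstacle is the determinant: one must show that the Schur complement of the top-left block collapses to $\K^r\kron\S^r$, i.e. that $\bSigma_{22}-\bSigma_{21}\bSigma_{11}^{-1}\bSigma_{12}=\K^r\kron\S^r$, so that $|\bSigma|=|\K^l\kron\S^l|\,|\K^r\kron\S^r|$. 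This cancellation of the term $\K^l(\X^h,\X^h)\kron\W\S^l\W^T$ hinges on the subset identities $\K^l(\X^l,\X^h)=\K^l\E$ and $\K^l(\X^h,\X^h)=\E^T\K^l\E$, which are precisely where $\X^h\subset\X^l$ is used; carrying the $\bigotimes_m$ factors through this step, rather than any conceptual difficulty, is the part that demands care.
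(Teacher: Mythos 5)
Your proposal is correct, and your \emph{primary} argument is genuinely different from the paper's proof; your ``alternative'' direct verification is, in fact, exactly the route the paper takes. The paper starts from the joint covariance $\bSigma$ of Lemma \ref{lamma gar joint}, inverts it blockwise with the Sherman--Morrison/Schur-complement identity (Property \ref{prop1: block inv trick}), shows the Schur complement collapses to $\K^r\kron\S^r$ because $\K^l(\X^h,\X^l)(\K^l)^{-1}\K^l(\X^l,\X^h)=\K^l(\X^h,\X^h)$ under the subset assumption, and then expands $\y^T\bSigma^{-1}\y$ and $|\bSigma|$ term by term until the cross terms cancel. Your main route instead changes variables $(\tY^l,\tY^h)\mapsto(\tY^l,\tY^r)$ through the block-triangular map with identity diagonal blocks (unit Jacobian) and invokes the prior independence of $\tY^l$ and $\tY^r$, giving $p(\tY^l,\tY^h)=p(\tY^l)\,p(\tY^r)$ immediately, with each factor an HOGP density of the stated dimension. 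This is shorter, avoids forming $\bSigma^{-1}$ and $|\bSigma|$ altogether, and isolates precisely where $\X^h\subset\X^l$ is used (existence of the selection matrix $\E$ so the residual is well defined on $\X^h$). What the paper's heavier computation buys is the explicit block inverse of $\bSigma$ (Eq.~A.15), which is reused essentially verbatim in the subsequent derivation of the predictive posterior mean and covariance, and it verifies directly that the factorization is consistent with the joint Gaussian form of Lemma \ref{lamma gar joint}; under your route that consistency is implicit (the joint law is by construction the pushforward of the two independent priors), and the posterior would still require a separate conditioning computation. One cosmetic caution: for dimensions to match, the substitution should read $\vecrm(\tY^h)=(\E^T\kron\W)\vecrm(\tY^l)+\vecrm(\tY^r)$ with $\E^T=(\0,\ \I_{N^h})$ of size $N^h\times N^l$; you write $\E\kron\W$, inheriting an inconsistency that also appears in the paper itself, so no substantive issue.
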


\textit{Proof.}
Let the kernel matrix be partitioned into four blocks. We again make use of the matrix inversion of Sherman-Morrison formula in Property \ref{prop1: block inv trick}, with a slight modification as follows:
\[
	\left(\begin{array}{cc}
		\T & \U \\
		\V & \M
	\end{array}\right)^{-1}
	= \left(\begin{array}{cc}
		\T^{-1}+\T^{-1}\U\Q^{-1}\V\T^{-1} &  -\T^{-1}\U\Q^{-1}\\
		-\Q^{-1}\V\T^{-1} & \Q^{-1}\end{array}\right)
\]
where \[
	\Q =\M-\V \T^{-1}\U.\]
We begin this proof with the matrix $ \V \T^{-1}\U $ within Property \ref{prop1: block inv trick}, which gives us:
$$ \K^l(\X^h,\X^l){\K^l}(\X^l, \X^l)^{-1}\K^l(\X^l,\X^h) = \left(\0,\I\right)\K^l(\X^l,\X^h)= \K^l(\X^h,\X^h).$$
Therefore, the last part $\V \T^{-1}\U$ of matrix $ \Q $ is
\begin{equation}\label{A.10}
	\begin{aligned}
		&\V \T^{-1}\U \\
		=& \left[\K^l(\X^h,\X^l)\kron \W\S^l \right] 
		\left[{\K^l}(\X^l, \X^l)\kron \S^l\right]^{-1}
		\left[{\K^l(\X^l,\X^h)}\kron\S^l\W^T\right]\\
		=& \left[\K^l(\X^h,\X^l)\kron \W\S^l \right] 
		\left[{\K^l}(\X^l, \X^l)^{-1}\kron (\S^l)^{-1}\right]
		\left[{\K^l(\X^l,\X^h)}\kron\S^l\W^T\right]\\
		=& \left[\K^l(\X^h,\X^l){\K^l}(\X^l, \X^l)^{-1}\K^l(\X^l,\X^h)\right]\kron\left[(\bigotimes_{m=1}^M \W_m\S^l_m)(\bigotimes_{m=1}^M(\S^l_m)^{-1})(\bigotimes_{m=1}^M \S^l_m\W_m^T)\right]\\
		=& \left[\K^l(\X^h,\X^l){\K^l}(\X^l, \X^l)^{-1}\K^l(\X^l,\X^h)\right]\kron\left[\bigotimes_{m=1}^M(\W_m\S^l_m(\S^l_m)^{-1}\S^l_m\W_m^T)\right]\\
		=&\K^l(\X^h,\X^h)\kron\left[\bigotimes_{m=1}^M(\W_m\S^l_m\W_m^T)\right].
	\end{aligned}
\end{equation}
Substituting \Eqref{A.10} back into the matrix inversion, we can derive matrix $ \Q^{-1} $, $ -\T^{-1}\U\Q^{-1} $, $ \Q^{-1}\V\T^{-1} $, and $ \T^{-1}+\T^{-1}\U\Q^{-1}\V\T^{-1} $ as
\begin{equation}\label{Q^{-1}}
	\begin{aligned}
		&\Q^{-1} \\
		=& (\M-\V \T^{-1}\U)^{-1}\\
		=&\left(\K^l(\X^h,\X^h)\kron\W\S^l\W^T  +  \K^r(\X^h,\X^h )\kron\S^r - \K^l(\X^h,\X^h)\kron\W\S^l\W^T\right)^{-1}\\
		=&\left( \K^r(\X^h,\X^h )\kron\S^r\right)^{-1},
	\end{aligned}
\end{equation}
\begin{equation}
	\begin{aligned}
		&-\T^{-1}\U\Q^{-1}\\
		=& -\left[{\K^l}(\X^l, \X^l)^{-1}\kron(\S^l)^{-1}\right]
		\left[{\K^l(\X^l,\X^h)}\kron\S^l\W^T\right]
		\left[\K^r(\X^h,\X^h )^{-1}\kron(\S^r)^{-1}\right]\\
		=& -\left[{\K^l}(\X^l, \X^l)^{-1}\K^l(\X^l,\X^h)\K^r(\X^h,\X^h )^{-1}\right]\kron\left[(\S^l)^{-1}\S^l\W^T(\S^r)^{-1}\right]\\
		=& -\left[\left(\begin{array}{c}\0 \\ \I \end{array}\right)\K^r(\X^h,\X^h )^{-1}\right]\left[\W^T(\S^r)^{-1}\right]\\
		=&  -\left(\begin{array}{c}\0 \\ 
			\K^r(\X^h,\X^h )^{-1}\kron\W^T(\S^r)^{-1} \end{array}\right),
	\end{aligned}
\end{equation}
\begin{equation}
	\begin{aligned}
		&\Q^{-1}\V\T^{-1}\\
		=& -\left[\K^r(\X^h,\X^h )^{-1}(\S^r)^{-1}\right]
		\left[\K^l(\X^h,\X^l)\kron\W\S^l\right]
		\left[{\K^l}(\X^l, \X^l)^{-1}\kron(\S^l)^{-1}\right]\\
		=& -\left[\K^r(\X^h,\X^h )^{-1}\K^l(\X^h,\X^l){\K^l}(\X^l, \X^l)^{-1}\right]\kron\left[(\S^r)^{-1}\W\S^l(\S^l)^{-1}\right]\\
		=& -\left(\0,\quad \K^r(\X^h,\X^h )^{-1}\kron(\S^r)^{-1}\W\right),
	\end{aligned}
\end{equation}
and
\begin{equation}
	\begin{aligned}
		&\T^{-1}+\T^{-1}\U\Q^{-1}\V\T^{-1}\\
		=& \left[{\K^l}(\X^l, \X^l)^{-1}\kron(\S^l)^{-1}\right]
		+\left[{\K^l}(\X^l, \X^l)^{-1}\kron(\S^l)^{-1}\right]
		\left[{\K^l(\X^l,\X^h)}\kron\S^l\W^T\right]\\
		&\times\left[\K^r(\X^h,\X^h )^{-1}\kron(\S^r)^{-1}\right]
		\left[\K^l(\X^h,\X^l)\kron\W\S^l\right]
		\left[{\K^l}(\X^l, \X^l)^{-1}\kron(\S^l)^{-1}\right]\\
		=& \left[{\K^l}(\X^l, \X^l)^{-1}\kron(\S^l)^{-1}\right]+
		\left[{\K^l}(\X^l, \X^l)^{-1}{\K^l(\X^l,\X^h)}\K^r(\X^h,\X^h )^{-1}\K^l(\X^h,\X^l){\K^l}(\X^l, \X^l)^{-1}\right]\\
		&\kron\left[(\S^l)^{-1}\S^l(\S^r)^{-1}\S^l(\S^l)^{-1}\right]\\
		=&{\K^l}(\X^l, \X^l)^{-1}\kron (\S^l)^{-1}
		+\left(\begin{array}{cc}\0&\0\\ \0 & \K^r(\X^h,\X^h)^{-1}\kron\W^T\S^r\W\end{array}\right).
		\end{aligned}
\end{equation}
Putting all these elements together, we get the inversion of joint kernel matrix $\bSigma^{-1}=$
\begin{equation}\label{A.15}
	\left[
	\begin{array}{cc}
		(\K^l)^{-1}\kron{(\S^l)}^{-1} + \left(\begin{array}{cc}\0 &\0 \\ \0 & (\K^r)^{-1}\kron{\W}^T({\S}^r)^{-1}{\W}\end{array}\right) & -\left(\begin{array} {c} \0 \\ (\K^r)^{-1}\kron{\W}^T({\S}^r)^{-1}\end{array} \right) \\
		-\left(\0, (\K^r)^{-1}\kron({\S}^r)^{-1}\W\right) & (\K^r)^{-1}\kron({\S}^r)^{-1}
	\end{array}\right],
\end{equation}
where $ \S^l = \bigotimes_{m=1}^M \S^l_m $, $ \S^r = \bigotimes_{m=1}^M \S_m^r $, $ \W = \bigotimes_{m=1}^M \W_m $, $ \K^l = \K^l(\X^l, \X^l) $, and $ \K^r = \K^r(\X^h, \X^h) $ as defined in the main paper.
With the property in \Eqref{Assumption of Y}, and defining $\y=[\vecrm({\tY^l})^T,\vecrm(\tY^h)^T]^T $, we can substitute \Eqref{A.15} into the joint likelihood to derive the data fitting part of the joint likelihood
\begin{equation}\label{A.16}
	\begin{aligned}
		&\y^T \bSigma^{-1} \y\\
		=& \left(\vecrm({\tY^l})^T, \vecrm({\tY^l})^T(\E\kron\W^T)+\vecrm(\tY^r)^T\right)
		\bSigma^{-1}
		\left(\begin{array}{c}
			\vecrm({\tY^l}) \\ 
			(\E^T \kron \W) \vecrm({\tY^l}) + \vecrm({\tY^r}) 
		\end{array} \right)\\
		=& \vecrm({\tY^l})^T \left( {\K^l} \kron \S^l\right)^{-1} \vecrm({\tY^l})
		+\vecrm({\tY^l})^T \E \left( (\K^r)^{-1} \kron \W^T(\S^r)^{-1}\W\right)\E^T\vecrm({\tY^l})\\
		&- \vecrm(\tY^l)^T \left(\E \kron \W^T \right) \left( (\K^r)^{-1} \E^T \kron (\S^r)^{-1}\W \right) \vecrm({\tY^l})\\
		&- \vecrm(\tY^r)^T \left( (\K^r)^{-1} \E^T \kron (\S^r)^{-1}\W \right) \vecrm({\tY^l})\\
		&- \vecrm({\tY^l})^T\left( \E(\K^r)^{-1} \kron \W^T (\S^r)^{-1} \right) 
		\left[\left( \E^T \kron \W \right) \vecrm(\tY^l) + \vecrm(\tY^r) \right] \\
		&+ \vecrm({\tY^l})^T \left(\E \kron \W^T \right) \left( \K^r \kron \S^r \right)^{-1} 
		\left[\left( \E^T \kron \W \right) \vecrm(\tY^l) + \vecrm(\tY^r) \right] \\
		&+ \vecrm({\tY^r})^T \left( \K^r \kron \S^r \right)^{-1} 
		\left[\left( \E^T \kron \W \right) \vecrm(\tY^l) + \vecrm(\tY^r) \right] \\
		=& \vecrm({\tY^l})^T \left( {\K^l} \kron \S^l\right)^{-1} \vecrm({\tY^l}) + \vecrm({\tY^r})^T \left( \K^r \kron \S^r \right)^{-1} \vecrm(\tY^r) \\
		& - \vecrm(\tY^r)^T \left( (\K^r)^{-1} \E^T \kron (\S^r)^{-1}\W \right) \vecrm({\tY^l})
		 + \vecrm({\tY^r})^T \left( \K^r \kron \S^r \right)^{-1} 
		\left( \E^T \kron \W \right) \vecrm(\tY^l) \\
		=& \vecrm({\tY^l})^T \left( {\K^l} \kron \S^l\right)^{-1} \vecrm({\tY^l}) + \vecrm({\tY^r})^T \left( \K^r \kron \S^r \right)^{-1} \vecrm(\tY^r).
	\end{aligned}
\end{equation}
With the block matrix's determinant formula, we can also derive the determinant of joint kernel matrix,
\begin{equation}
	\begin{aligned}
		\left| \bSigma \right| = & \left| \K^l \kron \S^l \right| \times \left| \Q \right|
		= \left| \K^l \kron \S^l \right| \times \left| \K^r \kron \S^r \right|.
	\end{aligned}
\end{equation}
where we do not decompose them further with the purpose of forming to independent GPs for the low- and high-fidelity.
With the conclusion of \Eqref{A.16}, the full joint log-likelihood is
\begin{equation}
	\begin{aligned}
		&\log p(\tY^l,\tY^h) \\
		=& -\frac{1}{2}\y^T \bSigma^{-1}\y - \frac{1}{2} \log|\bSigma| - \frac{d^lN^l+d^hN^h}{2} \log(2\pi)\\
		=&\underbrace{-\frac{1}{2}\vecrm({\tY^l})^T\left(\K^l \kron \S^l\right)^{-1}\vecrm({\tY^l})-\frac{1}{2}\log\left| \K^l \kron \S^l \right|- \frac{N^ld^l}{2} \log(2\pi)}_{TGP\ for\ low-fidelity\ data} \\
		&\underbrace{-\frac{1}{2}\vecrm({\tY^r})^T\left(\K^r \kron \S^r\right)^{-1}\vecrm({\tY^r}) -\frac{1}{2}\log\left|\K^r \kron \S^r\right| - \frac{N^hd^h}{2} \log(2\pi)}_{TGP\ for\ residual\ information}\\
		=& \log p(\tY^l) + \log p(\tY^h |\tY^l)
	\end{aligned}
\end{equation}
The meaning of $ \S^l $, $\S^r$, $\W $, and $\K^r $ remain the same as defined in Eq.\ref{A.15}.

\subsection{Posterior distribution}

For the posterior distribution we compute the mean function and covariance matrix with the assumption that the low- and high-fidelity data have a very strict subset requirement, $ \X^h \subseteq \X^l $. With the conclusion in Lemma \ref{lemma2: K} and rule of block matrix multiplication, the mean function and covariance matrix have the following expression,
\begin{equation}
	\begin{aligned}
		&\vecrm(\tZ^h_*)\\
		&=\left(\k^l_* \kron \W\S^l, \k^l_*(\X^h) \kron \W\S^l\W^T + \k^r_* \kron \S^r \right)
		\bSigma^{-1}\left(\begin{array}{c}\vecrm({\tY^l}) \\ \vecrm({\tY^h})\end{array}\right)\\
		&=\left(\k^l_* \kron \W\S^l\right)\left(\K^l\kron\S^l\right)^{-1}\vecrm({\tY^l})
		 + \left(\k^l_* \kron \W\S^l\right)\left( \E(\K^r)^{-1}\E^T \kron \W^T(\S^r)^{-1}\W \right) \vecrm({\tY^l}) \\
		&\quad-\left(\k^l_*(\X^h) \kron \W\S^l\W^T\right)\left( \K^r\E^T \kron (\S^r)^{-1}\W \right)  \vecrm({\tY^l})\\
		&\quad - \left(\k^r_* \kron \S^r\right)\left( \K^r\E^T \kron (\S^r)^{-1}\W \right)  \vecrm({\tY^l})\\
		&\quad - \left(\k^l_* \kron \W\S^l\right) \left(\E(\K^r)^{-1}\kron\W^T(\S^r)^{-1}\right) \vecrm({\tY^h}) \\
		&\quad + \left(\k^l_*(\X^h) \kron \W\S^l\W^T\right)\left( \K^r \kron (\S^r)^{-1} \right)  \vecrm({\tY^h})
		 + \left(\k^r_* \kron \S^r\right)\left( \K^r \kron (\S^r)^{-1} \right)  \vecrm({\tY^h})\\
		&= \left(\k^l_* \kron \W\S^l\right)\left(\K^l\kron\S^l\right)^{-1}\vecrm({\tY^l}) - \left(\k^r_* \kron \S^r\right)\left( \K^r\E^T \kron (\S^r)^{-1}\W \right)  \vecrm({\tY^l})\\
		&\quad + \left(\k^r_* \kron \S^r\right)\left( \K^r \kron (\S^r)^{-1} \right) \left(\E^T \kron \W \right)\vecrm({\tY^l})
		 + \left(\k^r_* \kron \S^r\right)\left( \K^r \kron (\S^r)^{-1} \right)  \vecrm({\tY^r})\\
		&=\left(\k^l_*(\K^l)^{-1} \kron \W\right) \vecrm({\tY^l}) +  \left(\k^r_*(\K^r)^{-1} \kron \I_r\right) \vecrm({\tY^r}),
	\end{aligned}
\end{equation}
\begin{equation}
	\begin{aligned}
		&\S_*^h
		=\left( \k^l(\x_*, \x_*) \kron \W\S^l\W^T + \k^r(\x_*, \x_*) \kron \S^r \right) - \\
		&\left(\k^l_* \kron \W\S^l, \k^l_*(\X^h) \kron \W\S^l\W^T + \k^r_* \kron \S^r \right)
		\bSigma^{-1}
		\left(\begin{array}{c}
			(\k^l_*)^T \kron \S^l\W^T \\ 
			\k^l_*(\X^h)^T \kron \W^T \S^l \W + (\k^r_*)^T \kron \S^r
		\end{array}\right)\\
		&=\left( \k^l(\x_*, \x_*) \kron \W\S^l\W^T + \k^r(\x_*, \x_*) \kron \S^r \right) \\
		&\quad-\left(\k^l_* \kron \W\S^l\right)\left(\K^l\kron\S^l\right)^{-1}\left((\k^l_*)^T \kron \S^l\W^T\right) \\
		&\quad-\left(\k^l_* \kron\W\S^l\right) \left(\E(\K^l)^{-1}\E^T\kron(\S^l)^{-1}\right)\left((\k^l_*)^T \kron \S^l\W^T\right)\\
		&\quad+\left(\k^l_*(\X^h) \kron \W\S^l\W^T\right)\left( \K^r\E^T \kron (\S^r)^{-1}\W \right) \left((\k^l_*)^T \kron \S^l\W^T\right)\\
		&\quad+\left(\k^r_* \kron \S^r\right)\left( \K^r\E^T \kron (\S^r)^{-1}\W \right)\left((\k^l_*)^T \kron \S^l\W^T\right)\\
		&\quad+\left(\k^l_* \kron \W\S^l\right) \left(\E(\K^r)^{-1}\kron\W^T(\S^r)^{-1}\right) \left(\k^l_*(\X^h)^T \kron \W^T \S^l \W + (\k^r_*)^T \kron \S^r\right)\\
		&\quad- \left(\k^l_*(\X^h) \kron \W\S^l\W^T\right)\left( \K^r \kron (\S^r)^{-1} \right) \left(\k^l_*(\X^h)^T \kron \W^T \S^l \W + (\k^r_*)^T \kron \S^r\right)\\
		&\quad-\left(\k^r_* \kron \S^r\right)\left( \K^r \kron (\S^r)^{-1} \right)\left(\k^l_*(\X^h)^T \kron \W^T \S^l \W + (\k^r_*)^T \kron \S^r\right)\\
		&= \left( \k^l(\x_*, \x_*) \kron \W\S^l\W^T + \k^r(\x_*, \x_*) \kron \S^r \right) \\
		&\quad-\left(\k^l_* \kron \W\S^l\right)\left(\K^l\kron\S^l\right)^{-1}\left((\k^l_*)^T \kron \S^l\W^T\right) \\
		&\quad+\left(\k^r_* \kron \S^r\right)\left( \K^r\E^T \kron (\S^r)^{-1}\W \right)\left((\k^l_*)^T \kron \S^l\W^T\right)\\
		&\quad-\left(\k^r_* \kron \S^r\right)\left( \K^r \kron (\S^r)^{-1} \right)\left(\k^l_*(\X^h)^T \kron \W^T \S^l \W\right)\\
		&\quad-\left(\k^r_* \kron \S^r\right)\left( \K^r \kron (\S^r)^{-1} \right)\left((\k^r_*)^T \kron \S^r\right)\\
		&= \left(k^l_{**} - (\k^l_*)^T(\K^l)^{-1}\k^l_*\right) \kron \W\S^l\W^T +\left(k^r_{**} - (\k^r_*)^T(\K^r)^{-1}\k^r_*\right) \kron \S^r,
	\end{aligned}
\end{equation}
where the $ {\W}$, ${\S^l}$ and $ {\S}^r $ have the same meaning with the main paper.

\subsection{Joint Likelihood for Non-Subset Multi-Fidelity Data}
In the main paper and the subset section, we decompose the joint likelihood $ \log p(\tY^h, \tY^l) $ into two parts as 
\begin{equation*}
	\begin{aligned}
		&\log  p(\tY^l,\tY^h)
		=\log p(\tY^l) + \log \int p({\tY}^h | \hat{\tY}^l, \tY^l) p(\hat{\tY}^l | \tY^l ) d\hat{\tY}^l
	\end{aligned}
\end{equation*}
where $p({\tY}^h | \hat{\tY}^l, \tY^l)$ is the derived predictive posterior probability if the high-fidelity data are subset to the low-fidelity data.
\begin{equation}\label{Yh}
	\begin{aligned}
		&p({\tY}^h | \hat{\tY}^l, \tY^l) = 2\pi^{-\frac{N^hd^h}{2}}\times \left|{\K}^r\kron\S^r\right|^{-\frac{1}{2}}\\
		&\times\exp\left[-\frac{1}{2}\left[\left(\begin{array}{c}
			\vecrm({\check{\tY}^h})\\
			\vecrm({\hat{\tY}^h})
		\end{array}\right) - {\hat{\W}}
	\left(\begin{array}{c}
			\vecrm({\tY}^l)\\
			 {\vecrm({\hat{\tY}^l})}\end{array}\right)\right]^T({\K}^r\kron\S^r)^{-1}\left[\left(\begin{array}{c}
		\vecrm({\check{\tY}^h})\\
		\vecrm({\hat{\tY}^h})
		\end{array}\right) - {\hat{\W}}\left(\begin{array}{c}
			\vecrm({\tY}^l)\\
			 {\vecrm({\hat{\tY}^l})}\end{array}\right)\right] \right]
			\end{aligned}
\end{equation}
where we define $ \hat{\W} = \E \kron \bigotimes_{m=1}^M \W_m $. 
Based on the low-fidelity training data, we also have 
 $ p(\hat{\tY}^l|\tY^l) \sim \N(\bar{\tY}^l,\hat{\S}^l\kron\S^l) $ being a Gaussian.
\begin{equation}\label{hatY}
	\begin{aligned}
		&p(\hat{\tY}^l| \tY^l) \\
		=& 2\pi^{-\frac{N^md^l}{2}} \times \left|\hat{\S}^l\kron \S^l\right|^{-\frac{1}{2}} \times \exp\left[-\frac{1}{2}
		\left( {\vecrm({\hat{\tY}^l})}-\vecrm(\bar{\tY}^l)\right)^T
		(\hat{\S}^l\kron\S^l)^{-1}
		\left( {\vecrm({\hat{\tY}^l})}-\vecrm(\bar{\tY}^l)\right)\right],
	\end{aligned}
\end{equation}
where the $ \hat{\S}^l\kron\S^l $ is the posterior covariance matrix of the $ \hat{\tY}^l $. We can combine \Eqref{Yh} and \Eqref{hatY} to derive the integral part of the joint likelihood
\begin{equation}\label{main part of non-subset}
	\begin{aligned}
		&\log \int p({\tY}^h | \hat{\tY}^l, \tY^l) p(\hat{\tY}^l | \tY^l ) d\hat{\tY}^l\\
		=&-\frac{N^hd^h+N^md^l}{2}\log(2\pi)-\frac{1}{2}\log\left|\K^r\kron\S^r\right|-\frac{1}{2}\log\left|\hat{\S}^l\kron\S^l\right|\\
		&+\log \int \exp\left\lbrace-\frac{1}{2}\left[\left(\begin{array}{c}
			\vecrm({\check{\tY}^h})\\
			\vecrm(\hat{\tY}^h)
		\end{array}\right) - \left(\E_n^T \kron {\W}\right)\left(\begin{array}{c}
			\vecrm(\check{\tY}^l)\\
			 {\vecrm({\hat{\tY}^l})}\end{array}\right)\right]^T(\K^r\kron\S^r)^{-1} \right.\\
		& \quad \left[\left(\begin{array}{c}
			\vecrm({\check{\tY}^h})\\
			\vecrm(\hat{\tY}^h)
		\end{array}\right) - \left(\E_n^T \kron {\W}\right)\left(\begin{array}{c}
			\vecrm(\check{\tY}^l)\\
			 {\vecrm({\hat{\tY}^l})}\end{array}\right)\right]\\
		& \left. -\frac{1}{2}\left( {\vecrm({\hat{\tY}^l})^T}-\vecrm(\bar{\tY}^l)^T\right)(\hat{\S}^l\kron\S^l)^{-1}\left( {\vecrm({\hat{\tY}^l})}-\vecrm(\bar{\tY}^l)\right) \right\rbrace d \vecrm(\hat{\tY}^l),
	\end{aligned}
\end{equation}
where the $ \X^h = \E_n^T[\X^l, \hat{\X}^h] $.
Since we know that $ \hat{\X}^h = \hat{\E}^T\X^h $ and we assume that $ \check{\X}^h = \check{\E}^T\X^h $, we can derive 
\begin{equation}
	\label{A32}
	\begin{aligned}
		&\left(\begin{array}{c}
			\vecrm({\check{\tY}^h})\\
			\vecrm(\hat{\tY}^h)
		\end{array}\right) - (\E_n^T\kron{\W})\left(\begin{array}{c}
			\vecrm({\tY^l})\\
			 {\vecrm({\hat{\tY}^l})}\end{array}\right)\\
		&= \left(\begin{array}{c}
			\vecrm({\check{\tY}^h})\\
			\0
		\end{array}\right) - \tilde{\W}\left(\begin{array}{c}
			\vecrm(\check{\tY}^l)\\
			\0\end{array}\right)+\left(\begin{array}{c}
			\0\\
			\vecrm(\hat{\tY}^h)
		\end{array}\right) - \hat{\W}\left(\begin{array}{c}
			\0\\
			 {\vecrm({\hat{\tY}^l})}\end{array}\right)\\
		&= \left(\check{\E} \kron \I_h\right) \vecrm(\check{\tY}^h) 
		- \left(\check{\E} \kron \W \right) \vecrm(\check{\tY}^l)+ \left(\hat{\E} \kron \I_h\right) \vecrm(\check{\tY}^h) - \left(\hat{\E} \kron \W \right) \vecrm(\check{\tY}^l).
	\end{aligned}
\end{equation}
in which the $ \tilde{\W} = \I_{N^h} \kron \W $, and $ \check{\tY}^l $ denotes the corresponding part of the $ \check{\tY}^h $.
For convenience, we choose to compute the exponential part in \Eqref{main part of non-subset} as our first step. We try to decompose it into the subset part, \ie ($ \check{\tY}^h $ and $ \check{\tY}^l $) and the non-subset part, \ie ($ \hat{\tY}^h $ and $ \hat{\tY}^l $).
\Eqref{A32} will become handy for the later derivation.
Let's first consider the data fitting part by substituting \Eqref{A32} into \Eqref{main part of non-subset}, 

\begin{equation}
	\label{A33}
	\begin{aligned}
		&-\frac{1}{2}\left[\left(\begin{array}{c}
			\vecrm({\check{\tY}^h})\\
			\vecrm(\hat{\tY}^h)
		\end{array}\right) - \hat{\W}\left(\begin{array}{c}
			\vecrm(\check{\tY}^l)\\
			 {\vecrm({\hat{\tY}^l})}\end{array}\right)\right]^T(\K^r\kron\S^r)^{-1}\left[\left(\begin{array}{c}
			\vecrm({\check{\tY}^h})\\
			\vecrm(\hat{\tY}^h)
		\end{array}\right) - \hat{\W}\left(\begin{array}{c}
			\vecrm(\check{\tY}^l)\\
			 {\vecrm({\hat{\tY}^l})}\end{array}\right)\right]\\
		&=
		-\frac{1}{2}\left[ \vecrm(\check{\tY}^h)^T\left(\check{\E}^T \kron \I_h\right)  - \vecrm(\check{\tY}^l)^T\left(\check{\E}^T \kron \W^T\right)\right]
		(\K^r\kron\S^r)^{-1}
		\left[\left(\check{\E} \kron \I_h\right)\vecrm(\check{\tY}^h) - \left(\check{\E} \kron \W\right)\vecrm(\check{\tY}^h)\right]\\
		&-\left[ \vecrm(\check{\tY}^h)^T\left(\check{\E}^T \kron \I_h\right)  - \vecrm(\check{\tY}^l)^T\left(\check{\E}^T \kron \W^T\right)\right]
		(\K^r\kron\S^r)^{-1}
		\left[\left(\hat{\E} \kron \I_h\right)\vecrm(\hat{\tY}^h) - \left(\hat{\E} \kron \W\right)\vecrm(\hat{\tY}^l)\right]\\
		&-\frac{1}{2}\left[\vecrm(\hat{\tY}^h)^T\left(\hat{\E}^T \kron \I_h\right) - \vecrm(\hat{\tY}^l)^T\left(\hat{\E}^T \kron \W\right)\right]
		(\K^r\kron\S^r)^{-1}
		\left[\left(\hat{\E} \kron \I_h\right)\vecrm(\hat{\tY}^h) - \left(\hat{\E} \kron \W\right)\vecrm(\hat{\tY}^l)\right],
	\end{aligned}
\end{equation}
which gives us the decomposition as the subset part, the non-subset part, and the interaction part between them.
Now we can substitute \Eqref{A33} into the integral part in \Eqref{main part of non-subset},
\begin{equation}\label{log of non-subset}
	\begin{aligned}
		\log& \int \exp[-\frac{1}{2}\left[ \vecrm(\check{\tY}^h)^T\left(\check{\E}^T \kron \I_h\right)  - \vecrm(\check{\tY}^l)^T\left(\check{\E}^T \kron \W^T\right)\right]
		(\K^r\kron\S^r)^{-1}
		\left[\left(\check{\E} \kron \I_h\right)\vecrm(\check{\tY}^h) - \left(\check{\E} \kron \W\right)\vecrm(\check{\tY}^h)\right]\\
		&-\left[ \vecrm(\check{\tY}^h)^T\left(\check{\E}^T \kron \I_h\right)  - \vecrm(\check{\tY}^l)^T\left(\check{\E}^T \kron \W^T\right)\right]
		(\K^r\kron\S^r)^{-1}
		\left[\left(\hat{\E} \kron \I_h\right)\vecrm(\hat{\tY}^h) - \left(\hat{\E} \kron \W\right)\vecrm(\hat{\tY}^l)\right]\\
		&-\frac{1}{2}\left[\vecrm(\hat{\tY}^h)^T\left(\hat{\E}^T \kron \I_h\right) - \vecrm(\hat{\tY}^l)^T\left(\hat{\E}^T \kron \W\right)\right]
		(\K^r\kron\S^r)^{-1}
		\left[\left(\hat{\E} \kron \I_h\right)\vecrm(\hat{\tY}^h) - \left(\hat{\E} \kron \W\right)\vecrm(\hat{\tY}^l)\right]\\
		&-\frac{1}{2} {\vecrm(\hat{\tY}^l)^T}(\hat{\S}^l\kron\S^l)^{-1} {\vecrm(\hat{\tY}^l)}
		-\frac{1}{2}\vecrm(\bar{\tY}^l)^T(\hat{\S}^l\kron\S^l)^{-1}\vecrm(\bar{\tY}^l)
		+ {\vecrm(\hat{\tY}^l)^T}(\hat{\S}^l\kron\S^l)^{-1}\vecrm(\bar{\tY}^l) ]d \vecrm(\hat{\tY}^l)\\
		=&-\frac{1}{2}\left[ \vecrm(\check{\tY}^h)^T\left(\check{\E}^T \kron \I_h\right)  - \vecrm(\check{\tY}^l)^T\left(\check{\E}^T \kron \W^T\right)\right]
		(\K^r\kron\S^r)^{-1}
		\left[\left(\check{\E} \kron \I_h\right)\vecrm(\check{\tY}^h) - \left(\check{\E} \kron \W\right)\vecrm(\check{\tY}^h)\right]\\
		&-\left[ \vecrm(\check{\tY}^h)^T\left(\check{\E}^T \kron \I_h\right)  - \vecrm(\check{\tY}^l)^T\left(\check{\E}^T \kron \W^T\right)\right]
		(\K^r\kron\S^r)^{-1}
		\left(\hat{\E} \kron \I_h\right)\vecrm(\hat{\tY}^h)\\
		&-\frac{1}{2}\vecrm(\hat{\tY}^h)^T\left(\hat{\E}^T \kron \I_h\right)
		(\K^r\kron\S^r)^{-1}
		\left(\hat{\E} \kron \I_h\right)\vecrm(\hat{\tY}^h)
		-\frac{1}{2}\vecrm(\bar{\tY}^l)^T(\hat{\S}^l\kron\S^l)^{-1}\vecrm(\bar{\tY}^l)\\
		&+\log \int \exp[ \left[ \vecrm(\check{\tY}^h)^T\left(\check{\E}^T \kron \I_h\right)  - \vecrm(\check{\tY}^l)^T\left(\check{\E}^T \kron \W^T\right)\right]
		(\K^r\kron\S^r)^{-1}
		\left(\hat{\E} \kron \W\right)\vecrm(\hat{\tY}^l)\\
		&+\vecrm(\hat{\tY}^h)^T\left(\hat{\E}^T \kron \I_h\right)
		(\K^r\kron\S^r)^{-1}\left(\hat{\E} \kron \W\right)\vecrm(\hat{\tY}^l)\\
		&-\frac{1}{2}\vecrm(\hat{\tY}^l)^T\left(\hat{\E}^T \kron \W^T\right)
		(\K^r\kron\S^r)^{-1}\left(\hat{\E} \kron \W\right)\vecrm(\hat{\tY}^l)\\
		&-\frac{1}{2} {\vecrm(\hat{\tY}^l)^T}(\hat{\S}^l\kron\S^l)^{-1} {\vecrm(\hat{\tY}^l)}
		+\vecrm(\bar{\tY}^l)^T(\hat{\S}^l\kron\S^l)^{-1} {\vecrm(\hat{\tY}^l)} ]d \vecrm(\hat{\tY}^l)\\
		=&-\frac{1}{2}\bphi^T(\K^r\kron\S^r)^{-1}\bphi-\frac{1}{2}\vecrm(\bar{\tY}^l)^T(\hat{\S}^l\kron\S^l)^{-1}\vecrm(\bar{\tY}^l)\\
		&+\frac{1}{2}\left(\bPsi^T(\K^r\kron\S^r)^{-1}\bphi+(\hat{\S}^l\kron\S^l)^{-1}\vecrm(\bar{\tY}^l)\right)^T
		\left(\bPsi^T(\K^r\kron\S^r)^{-1}\bphi + (\hat{\S}^l\kron\S^l)^{-1}\right)^{-1}\\
		&\left(\bPsi^T(\K^r\kron\S^r)^{-1}\bphi+(\hat{\S}^l\kron\S^l)^{-1}\vecrm(\bar{\tY}^l)\right)\\
		&+\frac{N^md^l}{2}\log 2\pi + \frac{1}{2}\log \left|\bPsi^T(\K^r\kron\S^r)^{-1}\bPsi + (\hat{\S}^l\kron\S^l)^{-1}\right|\\
		=&\frac{N^md^l}{2}\log 2\pi +\frac{1}{2} \log \left|\bPsi^T(\K^r\kron\S^r)^{-1}\bPsi + (\hat{\S}^l\kron\S^l)^{-1}\right|\\
		&-\frac{1}{2}\bphi^T\underbrace{\left[ (\K^r\kron\S^r)^{-1}-(\K^r\kron\S^r)^{-1}\bPsi\left(\bPsi^T(\K^r\kron\S^r)^{-1}\bPsi + (\hat{\S}^l\kron\S^l)^{-1}\right)^{-1}\bPsi^T(\K^r\kron\S^r)^{-1}\right]}_{\text{part 1}}\bphi\\
		&-\frac{1}{2}\vecrm(\bar{\tY}^l)^T \underbrace{\left[(\hat{\S}^l\kron\S^l)^{-1}-(\hat{\S}^l\kron\S^l)^{-1}\left(\bPsi^T(\K^r\kron\S^r)^{-1}\bPsi + (\hat{\S}^l\kron\S^l)^{-1}\right)^{-1}(\hat{\S}^l\kron\S^l)^{-1}\right]}_{\text{part 2}} \vecrm(\bar{\tY}^l)\\
		&+\bphi^T \underbrace{(\K^r\kron\S^r)^{-1}\bPsi\left(\bPsi^T(\K^r\kron\S^r)^{-1}\bPsi + (\hat{\S}^l\kron\S^l)^{-1}\right)^{-1}(\hat{\S}^l\kron\S^l)^{-1}}_{\text{part 3}}\vecrm(\bar{\tY}^l)
	\end{aligned}
\end{equation}
where $ \bphi $ is defined by Eq. (\ref{eq: components of non-subset}) ,
\begin{equation}
	\label{eq: components of non-subset}
	\begin{aligned}
	&\bphi = \left((\vecrm({\check{\tY}^h})^T,\vecrm({\hat{\tY}^h})^T) - (\vecrm(\check{\tY}^l)^T,\0)\tilde{\W}^T\right)\\
	&\bPsi = \hat{\E}\kron\W.
	 \end{aligned}
\end{equation}
With Sherman-Morrison formula, we can further simplify part 1 in \Eqref{log of non-subset} as
\begin{equation}\label{f1}
	\begin{aligned}
		&(\K^r\kron\S^r)^{-1}-(\K^r\kron\S^r)^{-1}\bPsi\left(\bPsi^T(\K^r\kron\S^r)^{-1}\bPsi + (\hat{\S}^l\kron\S^l)^{-1}\right)^{-1}\bPsi^T(\K^r\kron\S^r)^{-1}\\
		=&\left(\K^r\kron\S^r+\bPsi(\hat{\S}^l\kron\S^l)\bPsi^T\right)^{-1},\\
	\end{aligned}
\end{equation}
part 2 in \Eqref{log of non-subset} as
\begin{equation}\label{f2}
	\begin{aligned}
		&(\hat{\S}^l\kron\S^l)^{-1}-(\hat{\S}^l\kron\S^l)^{-1}\left(\bPsi^T(\K^r\kron\S^r)^{-1}\bPsi + (\hat{\S}^l\kron\S^l)^{-1}\right)^{-1}(\hat{\S}^l\kron\S^l)^{-1}\\
		=&(\hat{\S}^l\kron\S^l)^{-1}-\left((\hat{\S}^l\kron\S^l)\bPsi^T(\K^r\kron\S^r)^{-1}\bPsi(\hat{\S}^l\kron\S^l) + (\hat{\S}^l\kron\S^l)\right)^{-1}\\
		=&(\hat{\S}^l\kron\S^l)^{-1}-(\hat{\S}^l\kron\S^l)^{-1} + \bPsi^T(\K^r\kron\S^r+\bPsi^T(\hat{\S}^l\kron\S^l)\bPsi^T)^{-1}\bPsi\\
		=&\bPsi^T(\K^r\kron\S^r+\bPsi(\hat{\S}^l\kron\S^l)\bPsi^T)^{-1}\bPsi,
	\end{aligned}
\end{equation}
and part 3 in \Eqref{log of non-subset} as
\begin{equation}\label{f3}
	\begin{aligned}
		&(\K^r\kron\S^r)^{-1}\bPsi\left(\bPsi^T(\K^r\kron\S^r)^{-1}\bPsi + (\hat{\S}^l\kron\S^l)^{-1}\right)^{-1}(\hat{\S}^l\kron\S^l)^{-1}\\
		=&(\K^r\kron\S^r)^{-1}\bPsi\left(\hat{\S}^l\kron\S^l - (\hat{\S}^l\kron\S^l)\bPsi^T(\K^r\kron\S^r+\bPsi(\hat{\S}^l\kron\S^l)\bPsi^T)^{-1}\bPsi(\hat{\S}^l\kron\S^l)\right)(\hat{\S}^l\kron\S^l)^{-1}\\
		=&(\K^r\kron\S^r)^{-1}\bPsi-(\K^r\kron\S^r)^{-1}\bPsi (\hat{\S}^l\kron\S^l)\bPsi^T(\K^r\kron\S^r+\bPsi(\hat{\S}^l\kron\S^l)\bPsi^T)^{-1}\bPsi\\
		=&(\K^r\kron\S^r)^{-1}\bPsi-(\K^r\kron\S^r)^{-1}\left((\K^r\kron\S^r)+\bPsi (\hat{\S}^l\kron\S^l)\bPsi^T-(\K^r\kron\S^r)\right)\\
		&(\K^r\kron\S^r+\bPsi(\hat{\S}^l\kron\S^l)\bPsi^T)^{-1}\bPsi\\
		=&(\K^r\kron\S^r)^{-1}\bPsi-(\K^r\kron\S^r)^{-1}\left(\I-(\K^r\kron\S^r)(\K^r\kron\S^r+\bPsi(\hat{\S}^l\kron\S^l)\bPsi^T)^{-1}\right)\bPsi\\
		=&(\K^r\kron\S^r)^{-1}\bPsi-(\K^r\kron\S^r)^{-1}\bPsi+\left(\K^r\kron\S^r+\bPsi(\hat{\S}^l\kron\S^l)\bPsi^T\right)^{-1}\bPsi\\
		=&\left(\K^r\kron\S^r+\bPsi(\hat{\S}^l\kron\S^l)\bPsi^T\right)^{-1}\bPsi.
	\end{aligned}
\end{equation}
With the simplifications for part 1, 2, and 3 we get in \Eqref{f1}, \Eqref{f2} and \Eqref{f3}, the integral part will be more compact by substituting \Eqref{f1}, \Eqref{f2} and \Eqref{f3} back to \Eqref{log of non-subset} which is equal to 
\begin{equation}
	\begin{aligned}
		&\frac{N^md^l}{2}\log 2\pi +\frac{1}{2} \log \left|\bPsi^T(\K^r\kron\S^r)^{-1}\bPsi + (\hat{\S}^l\kron\S^l)^{-1}\right|
		-\frac{1}{2}\bphi^T\left(\K^r\kron\S^r+\bPsi(\hat{\S}^l\kron\S^l)\bPsi^T\right)^{-1}\bphi\\
		&-\frac{1}{2}\vecrm(\bar{\tY}^l)^T\bPsi^T(\K^r\kron\S^r+\bPsi(\hat{\S}^l\kron\S^l)\bPsi^T)^{-1}\bPsi\vecrm(\bar{\tY}^l)\\
		&+\bphi^T\left(\K^r\kron\S^r+\bPsi(\hat{\S}^l\kron\S^l)\bPsi^T\right)^{-1}\bPsi\vecrm(\bar{\tY}^l)\\
		=&\frac{N^md^l}{2}\log 2\pi + \frac{1}{2}\log \left|\bPsi^T(\K^r\kron\S^r)^{-1}\bPsi + (\hat{\S}^l\kron\S^l)^{-1}\right|\\ &-\frac{1}{2}(\bphi-\bPsi\vecrm(\bar{\tY}^l))^T\left(\K^r\kron\S^r+\bPsi(\hat{\S}^l\kron\S^l)\bPsi^T\right)^{-1}(\bphi-\bPsi\vecrm(\bar{\tY}^l)).
	\end{aligned}
\end{equation}
The determinant part of \Eqref{main part of non-subset} of the matrix can also be decomposed in the following way,
\begin{equation}
	\begin{aligned}
		&-\frac{1}{2}\log \left|{\K}^r\kron\S^r\right|-\frac{1}{2}\log \left|\hat{\S}^l\kron \S^l\right|+\frac{1}{2}\log \left|\bPsi^T(\K^r\kron\S^r)^{-1}\bPsi + (\hat{\S}^l\kron\S^l)^{-1}\right|\\ 
		=&-\frac{1}{2}\log \left|{\K}^r\kron\S^r\right|-\frac{1}{2}\log \left|\hat{\S}^l\kron \S^l\right|\\
		&+\frac{1}{2}\log \left|{\K}^r\kron\S^r\right|+\frac{1}{2}\log \left|\hat{\S}^l\kron \S^l\right| -\frac{1}{2}\log \left|\K^r\kron\S^r+\bPsi(\hat{\S}^l\kron\S^l)\bPsi^T\right|\\
		=&-\frac{1}{2}\log \left|\K^r\kron\S^r+\bPsi(\hat{\S}^l\kron\S^l)\bPsi^T\right|.
	\end{aligned}
\end{equation}
Putting everything we have derived up to this point, the joint likelihood for the non-subset data is:
\begin{equation}
	\label{A41}
	\begin{aligned}
		&\log p(\tY^l, \tY^h) \\
		=&\log p(\tY^l)
		-\frac{N^hd^h}{2}\log(2\pi)-\frac{1}{2}\log\left|\K^r\kron\S^r+\bPsi(\hat{\S}^l\kron\S^l)\bPsi^T\right|\\ &-\frac{1}{2}(\bphi-\bPsi\vecrm(\bar{\tY}^l))^T\left(\K^r\kron\S^r+\bPsi(\hat{\S}^l\kron\S^l)\bPsi^T\right)^{-1}(\bphi-\bPsi\vecrm(\bar{\tY}^l))\\
	\end{aligned}
\end{equation}
where $ \bphi-\bPsi\vecrm(\bar{\tY}^l) = \left(\begin{array}{c}
	\vecrm(\check{\tY}^h)\\
	\vecrm(\hat{\tY}^h)	\end{array}\right)-\tilde{\W}\left(\begin{array}{c}\vecrm(\check{\tY}^l) \\ \vecrm(\bar{\tY}^l)
\end{array}\right) $, and $ \K^r\kron\S^r+\bPsi(\hat{\S}^l\kron\S^l)\bPsi^T = {\K}^r\kron\S^r+\hat{\E}\hat{\S}^l\hat{\E}^T\kron\W^T\S^l{\W} $.

\subsection{Posterior Distribution for Non-Subset Multi-Fidelity Data}
We then explore the posterior distribution of this non-subset data structure. For the first, we use the integration to express the posterior distribution,
\begin{equation}
	\begin{aligned}
		p\left(\tY^*|\tY^l,\tY^h\right)
		=&\int p(\tY^*, \hat{\tY}^{l}|\tY^h,\tY^l)d\hat{\tY}^l\\
		=&\int p(\tY^*|\tY^h,\tY^l,\hat{\tY}^{l})p(\hat{\tY}^l) d\hat{\tY}^l\\
	\end{aligned}
\end{equation}
We try to express the integral by different parts. 
Once $ \hat{\tY}^l $ is decided,
the predictive posterior $ p(\tY^*|\tY^h,\tY^l,\hat{\tY}^{l})$ can be described using the 
standard subset way, which is $ p(\tY^*|\tY^h,\tY^l,\hat{\tY}^{l}) \sim \N\left(\vecrm(\bar{\tZ}_*^h),\S_*^h\right) $, where the mean function and covariance matrix are
\begin{equation}\label{given haty posterior}
	\begin{aligned}
		\vecrm(\bar{\tZ}_*^h) = & \left(\k^l_*(\K^l)^{-1}\kron\W\right)\left(\begin{array}{c} \vecrm({\tY^l}) \\ \vecrm({\hat{\tY}^l})\end{array}\right)+\left(\k^r_*(\K^r)^{-1}\kron\I^h\right)\vecrm({\tY^r}),\\
		\S_*^h = 
		& \left(k^l_{**} - (\k^l_*)^T(\K^l)^{-1}\k^l_*\right)\kron\W\S^l\W^T+\left(k^r_{**} - (\k^r_*)^T(\K^r)^{-1}\k^r_*\right)\kron\S^r.
	\end{aligned}
\end{equation}
We further simplify the situation and introduce definitions:
 \begin{equation*}
 	\begin{aligned}
 		&\K_*^l = k^l_{**} - (\k^l_*)^T(\K^l)^{-1}\k^l_*,\quad \\ 
		&\K_*^r = k^r_{**} - (\k^r_*)^T(\K^r)^{-1}\k^r_*
	\end{aligned}
\end{equation*}
Which simplify \Eqref{given haty posterior} as
\[ 
	\S_*^h = \K_*^l\kron \W\S^l\W^T + \K_*^r \kron \S^r.
\]
At the same time, since $ \hat{\tY}^l $ is the sample from $ \tY^l $, so it also follows the posterior distribution in subset way, which means 
$ \hat{\tY}^l \sim \N\left( \vecrm(\bar{\tY}^l), \hat{\S}^l\kron\S^l\right) $ where the $ \vecrm(\bar{\tY}^l) $ and $ \S_*^l\kron\S^l $are 
\begin{equation*}
	\begin{aligned}
	\vecrm(\bar{\tY}^l) &= \left(\k^l_*(\K^l)^{-1}\kron\I^l\right)\vecrm({\tY^l}),\quad \\
	\hat{\S}^l\kron\S^l &= \left(k^l_{**} - (\k^l_*)^T(\K^l)^{-1}\k^l_*\right)\kron\S^l.
\end{aligned}
\end{equation*}
Therefore the posterior distribution of non-subset data structure is
\begin{equation}\label{post non}
	\begin{aligned}
		&p\left(\tY^*|\tY^l,\tY^h\right)\\
		=& \int p(\tY^*, \hat{\tY}^{l}|\tY^h,\tY^l)d\hat{\tY}^l \\
		=&\int p(\tY^*|\tY^h,\tY^l,\hat{\tY}^{l})p(\hat{\tY}^l) d\hat{\tY}^l\\
		=& \int 2\pi^{-\frac{N^{p}d^h}{2}}\times \left|\S_*^h\right|^{-\frac{1}{2}}\\
		 &\times \exp \left[-\frac{1}{2}
		\left(\vecrm(\tY^*)-\left(\k_*^l(\hat{\K}^l)^{-1}\kron\W\right)
		\left(\begin{array}{c}
			\vecrm({\tY}^l) \\ \vecrm(\hat{\tY}^l)
		\end{array}\right)
		-\left(\k_*^r(\K^r)^{-1}\kron\I^h\right)\vecrm(\tY^r)\right)^T \right.\\
		& \left.\quad (\S_*^h)^{-1}
		\left(\vecrm(\tY^*)-\left(\k_*^l(\hat{\K}^l)^{-1}\kron\W\right)
		\left(\begin{array}{c}
			\vecrm({\tY}^l) \\ \vecrm(\hat{\tY}^l)
		\end{array}\right)
		-\left(\k_*^r(\K^r)^{-1}\kron\I^h\right)\vecrm(\tY^r)\right)\right]\\
		&\times 2\pi^{-\frac{N^{m}d^l}{2}}\times \left|\hat{\S}^l\kron\S^l\right|^{-\frac{1}{2}}\times\exp[-\frac{1}{2} (\vecrm({\hat{\tY}^l}) -\vecrm(\bar{\tY}^l))^T(\hat{\S}^l\kron\S^l)^{-1}(\vecrm({\hat{\tY}^l}) -\vecrm(\bar{\tY}^l))] d\hat{\tY}^l\\
		=& 2\pi^{-\frac{N^pd^h+N^md^l}{2}} \times \left|\S_*^h\right|^{-\frac{1}{2}} \times \left|\hat{\S}^l\kron\S^l\right|^{-\frac{1}{2}}
		\times\exp\left[-\frac{1}{2}\tilde{\tY}^T(\S_*^h)^{-1}\tilde{\tY}-\frac{1}{2}\vecrm(\bar{\tY}^l)^T(\hat{\S}^l\kron\S^l)^{-1}\vecrm(\bar{\tY}^l)\right]\\
		&\times \int \exp[+\tilde{\tY}^T(\S_*^h)^{-1}\bGamma\vecrm(\hat{\tY}^l)+
		\vecrm(\bar{\tY}^l)^T(\hat{\S}^l\kron\S^l)^{-1}\vecrm({\hat{\tY}^l})\\
		&-\frac{1}{2}\vecrm(\hat{\tY}^l)^T\bGamma^T(\S_*^h)^{-1}\bGamma\vecrm(\hat{\tY}^l)-\frac{1}{2}\vecrm({\hat{\tY}^l})^T(\hat{\S}^l\kron\S^l)^{-1}\vecrm({\hat{\tY}^l})]d\hat{\tY}^l\\
		=& 2\pi^{-\frac{N^pd^h+N^md^l}{2}} \times \left|\S_*^h\right|^{-\frac{1}{2}} \times \left|\hat{\S}^l\kron\S^l\right|^{-\frac{1}{2}}
		\times \exp \left[-\frac{1}{2}\tilde{\tY}^T(\S_*^h)^{-1}\tilde{\tY}-\frac{1}{2}\vecrm(\bar{\tY}^l)^T(\hat{\S}^l\kron\S^l)^{-1}\vecrm(\bar{\tY}^l) \right]\\
		&\times 2\pi^{\frac{N^md^l}{2}} \times \left|\bGamma^T(\S_*^h)^{-1}\bGamma+(\hat{\S}^l\kron\S^l)^{-1}\right|^{-\frac{1}{2}}
		\times \exp \left[\frac{1}{2}\left(\tilde{\tY}^T(\S_*^h)^{-1}\bGamma+\vecrm(\bar{\tY}^l)^T(\hat{\S}^l\kron\S^l)^{-1}\right) \right.\\
		& \left. \left(\bGamma^T(\S_*^h)^{-1}\bGamma+(\hat{\S}^l\kron\S^l)^{-1}\right)^{-1}\left(\tilde{\tY}^T(\S_*^h)^{-1}\bGamma+\vecrm(\bar{\tY}^l)^T(\hat{\S}^l\kron\S^l)^{-1}\right)^T \right]\\
		=& 2\pi^{-\frac{N^pd^h}{2}} \times \left|\S_*^h\right|^{-\frac{1}{2}} \times \left|\hat{\S}^l\kron\S^l\right|^{-\frac{1}{2}} \times \left|\bGamma^T(\S_*^h)^{-1}\bGamma+(\hat{\S}^l\kron\S^l)^{-1}\right|^{-\frac{1}{2}}
		\times\exp\left[-\frac{1}{2}\tilde{\tY}^T(\S_*^h)^{-1}\tilde{\tY} \right.\\
		&-\frac{1}{2}\vecrm(\bar{\tY}^l)^T(\hat{\S}^l\kron\S^l)^{-1}\vecrm(\bar{\tY}^l) 
		+\frac{1}{2}\tilde{\tY}^T(\S_*^h)^{-1}\bGamma\left(\bGamma^T(\S_*^h)^{-1}\bGamma+(\hat{\S}^l\kron\S^l)^{-1}\right)^{-1}\bGamma^T(\S_*^h)^{-1}\tilde{\tY}\\
		&+\frac{1}{2}\vecrm(\bar{\tY}^l)^T(\hat{\S}^l\kron\S^l)^{-1}\left(\bGamma^T(\S_*^h)^{-1}\bGamma+(\hat{\S}^l\kron\S^l)^{-1}\right)^{-1}(\hat{\S}^l\kron\S^l)^{-1}\vecrm(\bar{\tY}^l)\\
		&\left.+\tilde{\tY}^T(\S_*^h)^{-1}\bGamma\left(\bGamma^T(\S_*^h)^{-1}\bGamma+(\hat{\S}^l\kron\S^l)^{-1}\right)^{-1}(\hat{\S}^l\kron\S^l)^{-1}\vecrm(\bar{\tY}^l)\right]\\
		=& 2\pi^{-\frac{N^pd^h}{2}} \times \underbrace{ \left|\S_*^h\right|^{-\frac{1}{2}} \times \left|\hat{\S}^l\kron\S^l\right|^{-\frac{1}{2}} \times \left|\bGamma^T(\S_*^h)^{-1}\bGamma+(\hat{\S}^l\kron\S^l)^{-1}\right|^{-\frac{1}{2}}}_{\text{part d}}\\
		&\times\exp \left[-\frac{1}{2}\tilde{\tY}^T \underbrace{ \left((\S_*^h)^{-1}-(\S_*^h)^{-1}\bGamma\left(\bGamma^T(\S_*^h)^{-1}\bGamma+(\hat{\S}^l\kron\S^l)^{-1}\right)^{-1}\bGamma^T(\S_*^h)^{-1}\right)}_{\text{part a}}\tilde{\tY} \right. \\ 
		&-\frac{1}{2}\vecrm(\bar{\tY}^l)^T \underbrace{ \left((\hat{\S}^l\kron\S^l)^{-1}-(\hat{\S}^l\kron\S^l)^{-1}\left(\bGamma^T(\S_*^h)^{-1}\bGamma+(\hat{\S}^l\kron\S^l)^{-1}\right)^{-1}(\hat{\S}^l\kron\S^l)^{-1}\right) }_{\text{part b}} \vecrm(\bar{\tY}^l)\\
		& \left.+\tilde{\tY}^T \underbrace{(\S_*^h)^{-1}\bGamma\left(\bGamma^T(\S_*^h)^{-1}\bGamma+(\hat{\S}^l\kron\S^l)^{-1}\right)^{-1}(\hat{\S}^l\kron\S^l)^{-1} }_{\text{part c}} \vecrm(\bar{\tY}^l)\right]\\
	\end{aligned}
\end{equation}
where 
$ \tilde{\tY} $ and $ \bGamma $ is defined by the following equation,
\begin{equation}
	\begin{aligned}
		&\tilde{\tY} = \left(\vecrm(\tY^*)-\left(\bk_*^l(\hat{\K}^l)^{-1}\kron\W\right)
		\left(\begin{array}{c}
			\vecrm({\tY}^l) \\ \0
		\end{array}\right)
		-\left(\bk_*^r(\K^r)^{-1}\kron\I^h\right)\left(\vecrm({\tY}^h) - \left(\begin{array}{c}
			\vecrm(\check{\tY}^l) \\ \0
		\end{array}\right) \right)\right),\\
		&\bGamma = \left([ \bk_{*}^r (\K^r)^{-1}\E_n^T- \bk_{*}^l(\hat{\K}^l)^{-1}]\kron\W\right)\E_m\kron\I^l.
	\end{aligned}	
\end{equation}

We then utilize the Sherman-Morrison formula to simplify part a, b, and c in \Eqref{post non} as follows.
For part a in \Eqref{post non},
\begin{equation}
	\begin{aligned}
		&(\S_*^h)^{-1}-(\S_*^h)^{-1}\bGamma\left(\bGamma^T(\S_*^h)^{-1}\bGamma+(\hat{\S}^l\kron\S^l)^{-1}\right)^{-1}\bGamma^T(\S_*^h)^{-1}\\
		=&\left(\S_*^h+\bGamma(\hat{\S}^l\kron\S^l)\bGamma^T\right)^{-1},\\
	\end{aligned}	
\end{equation}
for part b in \Eqref{post non}, 
\begin{equation}
	\begin{aligned}
		&(\hat{\S}^l\kron\S^l)^{-1}-(\hat{\S}^l\kron\S^l)^{-1}\left(\bGamma^T(\S_*^h)^{-1}\bGamma+(\hat{\S}^l\kron\S^l)^{-1}\right)^{-1}(\hat{\S}^l\kron\S^l)^{-1}\\
		=&(\hat{\S}^l\kron\S^l)^{-1}-\left((\hat{\S}^l\kron\S^l)\bGamma^T(\S_*^h)^{-1}\bGamma(\hat{\S}^l\kron\S^l)+(\hat{\S}^l\kron\S^l)\right)^{-1}\\
		=&(\hat{\S}^l\kron\S^l)^{-1}-\left((\hat{\S}^l\kron\S^l)^{-1} - \bGamma^T(\S_*^h + \bGamma(\hat{\S}^l\kron\S^l)\bGamma^T )^{-1}\bGamma\right)\\
		=&\bGamma^T(\S_*^h + \bGamma(\hat{\S}^l\kron\S^l)\bGamma^T )^{-1}\bGamma,\\
	\end{aligned}	
\end{equation}
and for part c in \Eqref{post non},
\begin{equation}
	\begin{aligned}
		&(\S_*^h)^{-1}\bGamma\left(\bGamma^T(\S_*^h)^{-1}\bGamma+(\hat{\S}^l\kron\S^l)^{-1}\right)^{-1}(\hat{\S}^l\kron\S^l)^{-1}\\
		=&(\S_*^h)^{-1}\bGamma
		\left((\hat{\S}^l\kron\S^l) - (\hat{\S}^l\kron\S^l)\bGamma^T(\S_*^h+\bGamma(\hat{\S}^l\kron\S^l)\bGamma^T)^{-1}\bGamma(\hat{\S}^l\kron\S^l)\right)
		(\hat{\S}^l\kron\S^l)^{-1}\\
		=&(\S_*^h)^{-1}\bGamma - (\S_*^h)^{-1}\bGamma(\hat{\S}^l\kron\S^l)\bGamma^T(\S_*^h+\bGamma(\hat{\S}^l\kron\S^l)\bGamma^T)^{-1}\bGamma\\
		=&(\S_*^h)^{-1}\bGamma - (\S_*^h)^{-1}\left(\S_*^h+\bGamma(\hat{\S}^l\kron\S^l)\bGamma^T-\S_*^h\right)(\S_*^h+\bGamma(\hat{\S}^l\kron\S^l)\bGamma^T)^{-1}\bGamma\\
		=&(\S_*^h)^{-1}\bGamma - (\S_*^h)^{-1} \left(\I-\S_*^h(\S_*^h+\bGamma(\hat{\S}^l\kron\S^l)\bGamma^T)^{-1}\right)\bGamma\\
		=&(\S_*^h)^{-1}\bGamma - (\S_*^h)^{-1}\bGamma -(\S_*^h+\bGamma(\hat{\S}^l\kron\S^l)\bGamma^T)^{-1}\bGamma\\
		=& -(\S_*^h+\bGamma(\hat{\S}^l\kron\S^l)\bGamma^T)^{-1}\bGamma.\\
	\end{aligned}	
\end{equation}

And the determinant (part d in \Eqref{post non}) can also use the Sherman-Morrison formula to derive a more compact version,
\begin{equation}
	\begin{aligned}
		&\left|\S_*^h\right|^{-\frac{1}{2}} \times \left|\hat{\S}^l\kron\S^l\right|^{-\frac{1}{2}} \times \left|\bGamma^T(\S_*^h)^{-1}\bGamma+(\hat{\S}^l\kron\S^l)^{-1}\right|^{-\frac{1}{2}}\\
		=&\left|\S_*^h\right|^{-\frac{1}{2}} \times \left|\hat{\S}^l\kron\S^l\right|^{-\frac{1}{2}} \times \left|(\hat{\S}^l\kron\S^l)^{-1}\right|^{-\frac{1}{2}} \times 
		\left|(\S_*^h)^{-1}\right|^{-\frac{1}{2}} \times
		\left|(\S_*^h)^{-1}+\bGamma(\hat{\S}^l\kron\S^l)^{-1}\bGamma^T\right|^{-\frac{1}{2}}\\
		=&\left|(\S_*^h)^{-1}+\bGamma(\hat{\S}^l\kron\S^l)^{-1}\bGamma^T\right|^{-\frac{1}{2}}
	\end{aligned}	
\end{equation}
Taking part a, b, c, and d back into  \Eqref{post non}, we have the compact form
\begin{equation}
	\begin{aligned}
		&p\left(\tY^*|\tY^l,\tY^h\right)\\
		=&2\pi^{-\frac{N^pd^h}{2}} \times \left|(\S_*^h)^{-1}+\bGamma(\hat{\S}^l\kron\S^l)^{-1}\bGamma^T\right|^{-\frac{1}{2}}
		\times\exp[-\frac{1}{2}\tilde{\tY}^T\left(\S_*^h+\bGamma(\hat{\S}^l\kron\S^l)\bGamma^T\right)^{-1}\tilde{\tY}\\
		&-\frac{1}{2}\vecrm(\bar{\tY}^l)^T\bGamma^T\left(\S_*^h+\bGamma(\hat{\S}^l\kron\S^l)\bGamma^T\right)^{-1}\bGamma\vecrm(\bar{\tY}^l)
		-\tilde{\tY}^T\left(\S_*^h+\bGamma(\hat{\S}^l\kron\S^l)\bGamma^T\right)^{-1}\bGamma\vecrm(\bar{\tY}^l)]\\
		=&2\pi^{-\frac{N^pd^h}{2}} \times \left|(\S_*^h)^{-1}+\bGamma(\hat{\S}^l\kron\S^l)^{-1}\bGamma^T\right|^{-\frac{1}{2}}\\
		&\times\exp[-\frac{1}{2}\left(\tilde{\tY}-\bGamma\vecrm(\bar{\tY}^l)\right)^T\left(\S_*^h+\bGamma(\hat{\S}^l\kron\S^l)\bGamma^T\right)^{-1}\left(\tilde{\tY}-\bGamma\vecrm(\bar{\tY}^l)\right)]\\
		=&2\pi^{-\frac{d^h}{2}}\times \left|\S_*^h+\bGamma\left(\hat{\S}^l\kron\S^l\right)\bGamma^T\right|^{-\frac{1}{2}}\\
		& \quad \times \exp\left[-\frac{1}{2}\left(\vecrm(\tZ^h_*)-\vecrm(\bar{\tZ})\right)^T
		\left(\S_*^h+\bGamma\left(\hat{\S}^l\kron\S^l\right)\bGamma^T\right)^{-1}\left(\vecrm(\tZ^h_*)-\vecrm(\bar{\tZ})\right)\right].
	\end{aligned}
\end{equation}
We can see the joint likelihood ends up with a elegant formulation about the low-fidelity TGP and residual TGP.

\subsection{CIGAR}
As we mentioned in the Section \ref{cigar}, we assume the output covariance matrixes $ \S_m^h $ and $ \S_m^l $ are identical matrixes and orthogonal weight matrixes, \ie $ \W_m^T\W_m = \I $.
Substituting these assumptions into \eqref{A41}, we get the simplified covariance matrix,
\begin{equation}
	\begin{aligned}
		&{\K}^r\kron\I^r+\hat{\E}\hat{\S}^l\hat{\E}^T\kron\W^T\I^l{\W}\\
		=&{\K}^r\kron\I^r+\hat{\E}\hat{\S}^l\hat{\E}^T\kron\W^T{\W}\\
		=&{\K}^r\kron\I^r+\hat{\E}\hat{\S}^l\hat{\E}^T\kron\I^h\\
		=&({\K}^r+\hat{\E}\hat{\S}^l\hat{\E}^T)\kron\I^h
	\end{aligned}
\end{equation}
where $\I^r$ is a identical matrix of size $d^r \times d^r$; the same rules apply to $\I^r$; and $ \I^r = \I^h $. 
The joint likelihood of non-subset data becomes 
\begin{equation}
	\begin{aligned}
		\log p(\tY^l, \tY^h) 
		=&\log p(\tY^l)
		-\frac{N^hd^h}{2}\log(2\pi)-\frac{1}{2}\log\left|({\K}^r+\hat{\E}\hat{\S}^l\hat{\E}^T)\kron\I^h\right|\\ &-\frac{1}{2}(\bphi-\bPsi\vecrm(\bar{\tY}^l))^T\left(({\K}^r+\hat{\E}\hat{\S}^l\hat{\E}^T)\kron\I^h\right)^{-1}(\bphi-\bPsi\vecrm(\bar{\tY}^l))\\
	\end{aligned}
\end{equation}
where $ \left(\bphi-\bPsi\vecrm(\bar{\tY}^l)\right) = \left(\begin{array}{c}
	\vecrm(\check{\tY}^h)\\
	\vecrm(\hat{\tY}^h)	\end{array}\right)-\tilde{\W}\left(\begin{array}{c}\vecrm(\check{\tY}^l) \\ \vecrm(\bar{\tY}^l)
\end{array}\right) $. 

We can see that the complexity of kernel matrix inversion is reduced to $ \Ocal((N^h)^3) $.

\subsection{$\tau$-Fidelity Autoregression Model}
As we mentioned in Section \ref{sec:Statement}, we can apply the AR to more levels of fidelity, so the GAR does. In this section, we try to expand the GAR into more levels of fidelity. Assuming the $ \tF^\tau(\x) = \tF^{\tau-1}(\x) \times_1 \W^{\tau-1}_1 \times_2 \cdots \times_M \W_M^{\tau-1} + \tF^{r}_\tau(\x) $, we can derive the joint covariance matrix, 
\[ \bSigma^\tau =  
 \left(\begin{array}{cc}
	\K^{\tau-1}(\X^{\tau-1}, \X^{\tau-1}) \kron \S^{\tau-1} &
	\K^{\tau-1}(\X^{\tau-1},\X^\tau) \kron \S^{\tau-1}(\W^{\tau-1})^T\\
	\K^{\tau-1}(\X^\tau,\X^{\tau-1})\kron \W^{\tau-1}\S^{\tau-1} & \K^{\tau-1}(\X^\tau,\X^\tau)\kron \W^{\tau-1}\S^{\tau-1}(\W^{\tau-1})^T + 
	\K^r_\tau(\X^\tau,\X^\tau)\kron \S_\tau^r
\end{array}
\right),  \]
where $ \S^{\tau-1} = \bigotimes_{m=1}^M\S^{\tau-1}_m $ and $ \W^{\tau-1} = \bigotimes_{m=1}^M\W^{\tau-1}_m $.\\
As same as the proof of GAR, we can derive the inversion of the joint covariance matrix,
$ (\bSigma^\tau)^{-1} =  $
$ \left[
\begin{array}{cc}
	(\K^{\tau-1})^{-1}\kron{(\S^{\tau-1})}^{-1} + \left(\begin{array}{cc}\0 &\0 \\ \0 & (\K^r_\tau)^{-1}\kron{\W^{\tau-1}}^T({\S}^r_\tau)^{-1}{\W^{\tau-1}}\end{array}\right) & -\left(\begin{array} {c} \0 \\ (\K^r_\tau)^{-1}\kron{\W^{\tau-1}}^T({\S}^r_\tau)^{-1}\end{array} \right) \\
	-\left(\0, (\K^r_\tau)^{-1}\kron({\S}^r_\tau)^{-1}\W^{\tau-1}\right) & (\K^r_\tau)^{-1}\kron({\S}^r_\tau)^{-1}
\end{array}\right]  $\\
Therefore, we have shown here that building an s-level TGP is equivalent to building s independent TGPs. We present the mean function and covariance matrix of the posterior distribution,
\begin{equation}
	\begin{aligned}
		\vecrm(\tZ^\tau_*) = & \left(\k^{\tau-1}_*(\K^{\tau-1})^{-1} \kron \W^{\tau-1} \right)\vecrm({\tY^{\tau-1}}) +  \left((\k^r_\tau)_*(\K^r_\tau)^{-1} \kron \I_r \right)\vecrm({\tY^r_\tau})\\
		\S_*^\tau = 
		& \left(k^{\tau-1}_{**} - (\k^{\tau-1}_*)^T(\K^{\tau-1})^{-1}\k^{\tau-1}_*\right) \kron \W^{\tau-1}\S^{\tau-1}(\W^{\tau-1})^T +\left((k^r_\tau)_{**} - (\k^r_\tau)_*^T(\K^r_\tau)^{-1}(\k^r_\tau)_*\right) \kron \S^r_\tau.\\
		\\
	\end{aligned}
\end{equation}

\section{Summary of the SOTA methods}
\label{appe: sota}
We compare and conclude the capability and complexity of the SOTA methods, \ours, and \ourss in Table~\ref{tab:caption}. 

\begin{table}[]
  \centering
  \caption{Comparison of SOTA multi-fidelity fusion for high-diemsnosional problems} 
  \label{tab:caption}
  \begin{tabular}{ r|c|c|l } 
    \hline
    Model & Arbitrary outputs? & Non-subset data? & Complexity \\
    \hline
    NAR \citep{perdikaris2017nonlinear} & Yes & No & $\Ocal(\sum_i(N^i)^3)$ \\
    ResGP\citep{xing2021residual} & No & No & $\Ocal(\sum_i(N^i)^3)$ \\
    MF-BNN \citep{li2020deep} & Yes & Yes &  $\Ocal( \sum_i(N^i) (A_i^2+ \omega)   )$* \\
    DC \citep{xing2021deep} & Yes & No & $\Ocal(\sum_i(N^i)^3)$ \\
    AR \citep{kennedy2000predicting} & No & No & $\Ocal(\sum_i(N^i d^i)^3)$ \\
    GAR  & Yes & Yes & $\Ocal(\sum_i \sum_{m=1}^M (d^i_m)^3+(N^i)^3)$ \\
    CIGAR & Yes & Yes & $\Ocal(\sum_i (N^i)^3)$ \\
\hline
\multicolumn{4}{l}{\small *$A_i$ is the total weight size of NN for i-th fidelity and $\omega$ is the number of all parameters} \\
  \end{tabular}
\end{table}%

\section{Implementation and Complexity}
We now present the training and prediction algorithm for \ours and \ourss using tensor algebra so that the full covariance matrix is never assembled or explicitly computed to improve computational efficiency.
We use a normal TGP as example, given the dataset $ (\X, \tY) $, $ \vecrm(\tY) \sim \N\left(\0,\K(\X, \X)\kron\left(\bigotimes_{m=1}^M\S_m\right)\right) $. The inference needs to estimate all the covariance matrix $ \bigotimes_{m=1}^M\S_m $ and $ \K(\X, \X) $. For compactness, we use $ \S $ and $ \K $ to denote $ \bigotimes_{m=1}^M\S_m $ and $ \K(\X, \X) $, and $ \bSigma = \K\kron\S + \epsilon^{-1}\I $. We estimate parameters by minimizing the negative log likelihood of the model,
\[
\mathcal{L} = \frac{Nd}{2}\log (2\pi) +  \frac{1}{2}\log\left|\bSigma\right|+\frac{1}{2}\vecrm(\tY)^T\bSigma^{-1}\vecrm(\tY).
\]
However, since the $ \S $ is a matrix of size $ Nd \times Nd $, when the size of outputs is large, it will be unable to compute the inversion of $ \K\kron\S $. So for the TGP, we exploit the Kronecker product in $ \K\kron\S $ to calculate the negative log-likelihood efficiently.
Firstly, we use eigendecomposition to denote the joint kernel matrix, $ \K = \U^T \text{diag}({\lambda}) \U $ and $ \S_m = \U^T_m \text{diag}({\lambda_m}) \U_m $. Then we use $ \bSigma $ to denote the joint kernel matrix, $ \bSigma = \K\kron\S + \epsilon^{-1}\I = \left(\U^T \text{diag}({\lambda}) \U\right) \kron \left( \U^T_1 \text{diag}({\lambda_1}) \U_1 \right) \kron \cdots \kron \left( \U^T_M \text{diag}({\lambda_M}) \U_M \right) + \epsilon^{-1}\I $. With the Kronecker product property, we can have that
\begin{equation}
	\bSigma = \P^T\Lambda\P + \epsilon^{-1}\I
\end{equation}
where $ \P = \U \kron \U_1 \kron \cdots \kron \U_M $ and $ \Lambda = \text{diag}(\lambda \kron \lambda_1 \kron \cdots \kron \lambda_M ) $ since $ \U $ and $ \U_m $ is eigenvectors and orthogonal, so $ \P^T\P = \P\P^T = \I $. Therefore, we can have that 
\begin{equation}
	\begin{aligned}
	\log\left|\bSigma\right| 
	&= \log\left|\P^T\Lambda\P + \epsilon^{-1}\I\right| 
	= \log \left|\P^T(\Lambda + \epsilon^{-1}\I)\P\right|
	= \log \left|\Lambda + \epsilon^{-1}\I\right|.
	\end{aligned}
\end{equation}
Therefore, we only need to compute $ Nd $ diagonal elements to calculate part of the negative log-likelihood.

After that, we compute the $ \vecrm(\tY)^T\bSigma^{-1}\vecrm(\tY) $ part in the negative log likelihood. First, we have $ \tA = \lambda \circ \lambda_1 \circ \cdots \circ \lambda_M + \epsilon^{-1} \mathbbm{1} $, where $ \mathbbm{1} $ is a tensor of full ones and $ \circ $ is the Kruskal operator. Then we have
\begin{equation}
	\begin{aligned}
		\vecrm(\tY)^T\bSigma^{-1}\vecrm(\tY)
		&= \vecrm(\tY)^T\bSigma^{-\frac{1}{2}}\bSigma^{-\frac{1}{2}}\vecrm(\tY)\\
		&= \vecrm(\tY)^T\P^T(\Lambda + \epsilon^{-1}\I)^{-\frac{1}{2}}\P
		\P(\Lambda + \epsilon^{-1}\I)^{-\frac{1}{2}}\P^T\vecrm(\tY)\\
		&= \eta^T\eta,
	\end{aligned}
\end{equation}
where $ \eta = \P(\Lambda + \epsilon^{-1}\I)^{-\frac{1}{2}}\P^T\vecrm(\tY) $.
Since $ \P $ is a Kronecker product matrix, we can apply the property of Tucker operator~\citep{kolda2006multilinear} to compute $ \b $.
\begin{equation}
	\begin{aligned}
		&\tT_1 = \tY \times_1 \U^T \times_2 \U_1^T \times_3 \cdots \times_{M+1}\ \U_M^T\\
		&\tT_2 = \tT_1 \odot \tA^{\cdot-\frac{1}{2}}\\
		&\tT_3 = \tT_2 \times_1 \U \times_2 \U_1 \times_3 \cdots \times_{M+1}\ \U_M\\
		&\eta = \vecrm(\tT_3)
	\end{aligned}
\end{equation}
where $ \odot $ means element-wise product, and $ (\cdot)^{\cdot-\frac{1}{2}} $ means take power of $ -\frac{1}{2} $ element wisely. Therefore the complexity of negative log likelihood is $ \Ocal(\sum_{m=1}^M (d_m)^3+(N)^3)$.\\
Based on the above conclusions, we can also calculate the \ours more efficiently. According to Lemma \ref{lemma3: GAR}, the joint likelihood admits two separable likelihoods $ \Lcal^l $ and $ \Lcal^r $. For each of these two, we can use the tricks to reduce the complexity to $ \Ocal(\sum_{m=1}^M (d_m^l)^3+(N^l)^3) + \Ocal(\sum_{m=1}^M (d_m^r)^3+(N^r)^3) $. Since, 
\begin{equation}
	\begin{aligned}
		&\log \left|\K^l\kron\S^l\right| = \log \left|\Lambda^l+\epsilon^{-1}\I^l\right|,\\
		&\vecrm(\tY^l)^T(\K^l\kron\S^l)^{-1}\vecrm(\tY^l) = (\eta^l)^T\eta^l;
	\end{aligned}
\quad
	\begin{aligned}
		&\log \left|\K^r\kron\S^r\right| = \log \left|\Lambda^r+\epsilon^{-1}\I^r\right|,\\
		&\vecrm(\tY^r)^T(\K^r\kron\S^r)^{-1}\vecrm(\tY^r) = (\eta^r)^T\eta^r,
	\end{aligned}
\end{equation}
in which $ \eta^h $, $ \eta^l $ and $ \Lambda^h $, $ \Lambda^l $ are low-fidelity data and residuals corresponding vectors and eigenvalues.
Therefore, the joint log-likelihood will be,
\begin{equation}
	\begin{aligned}
		\Lcal &= \Lcal^l + \Lcal^r\\
		&= \text{const} -\frac{1}{2} \log \left|\Lambda^l+\epsilon^{-1}\I^l\right| -\frac{1}{2}(\eta^l)^T\eta^l
		-\frac{1}{2} \log \left|\Lambda^r+\epsilon^{-1}\I^r\right| -\frac{1}{2}(\eta^r)^T\eta^r
	\end{aligned}
\end{equation}

Given a new input $ \x_* $, the prediction of the output tensorized as $ \vecrm(\tZ^h_*) $ is a conditional Gaussian distribution
$\vecrm(\tZ_*) \sim \mathcal{N}(\vecrm(\bar{\tZ}_*), {\S_*})$, where
\begin{equation}
	\label{eq gar post}
	\begin{aligned}
		\vecrm({\bar{\tZ}_*})
		&=\left( \k_* \left(\K\right)^{-1} \kron \I \right)\vecrm({\tY})\\
		\S_*
		&=  \left(k_{**} - (\k_*)^T \left(\K\right)^{-1} \k_* \right) \kron \S.
	\end{aligned}
\end{equation}
We can use the Tucker operator to compute the predictive mean $ \vecrm(\bar{\tZ}_*) $ and $ {\S_*} $ in a more efficient way. Using the eigendecomposition of kernel matrix, we can derive that
$ {\S_*} = k_{**} \kron \S - \L\L^T $, where $ \L = ((\k_*)^T \left(\K\right)^{-1}\U\kron\U_1\kron\cdots\kron\U_M)(\Lambda(\Lambda+\epsilon^{-1}\I)^{-\frac{1}{2}}) $. Therefore, the $ \text{diag}({\S_*}) = k_{**}\kron\text{diag}(\S) - \text{diag}(\L\L^T) $. We can also use tensor algebra to calculate the predictive covariance matrix
\[ \text{diag}({\S_*}) = \vecrm(\tM), \] where $ \tM = k_{**}(\text{diag}(\S_1)\circ\cdots\circ\text{diag}(\S_M))+\left((\lambda \circ \lambda_1 \circ \cdots \circ \lambda_M)\odot\tA^{\cdot-\frac{1}{2}}\right)^{\cdot2}\times_1(\k_*\K^{-1}\U)^{\cdot2}\times_2(\U_1)^{\cdot2}\times_3\cdots\times_{M+1}(\U_M)^{\cdot2} $. Therefore, we can also compute the predictive covariance matrix $ {\S_*^h} $ in \ours efficiently.
\begin{equation}
	\begin{aligned}
		\text{diag}({\S_*^h}) = \vecrm(\tM^l) + \vecrm(\tM^r)
	\end{aligned}
\end{equation}
where the $ \vecrm(\tM^l) $ and $ \vecrm(\tM^r) $ are vectors for low-fidelity and residual data. When we calculate the $ \vecrm(\tM^l) $, we need to be careful that the output kernel matrix should be $ \W\S^l\W^T $.

\section{Experiment in Detail}
\subsection{Canonical PDEs}
\label{appe pdes}
We consider three canonical PDEs: Poisson's equation, the heat equation, and Burger's equation, 
These PDEs have crucial roles in scientific and technological applications~\citesupp{chapra2010numerical,chung2010computational,burdzy2004heat}.
They offer common simulation scenarios, such as high-dimensional spatial-temporal field outputs, nonlinearities, and discontinuities, and are frequently used as benchmark issues for surrogate models~\citep{xing2021deep,tuo2014surrogate,efe2003proper,raissi2017machine}. 
$ x $ and $ y $ denote the spatial coordinates, and $ t $ specifies the time coordinate, which contradicts the notation in the main paper. This notation in the appendix serves merely to make the information clear; it has no bearing on or connections to the main article.

\noindent \textbf{Burgers' equation} is regarded as a standard nonlinear hyperbolic PDE; it is commonly used to represent a variety of physical phenomena, including fluid dynamics~\citep{chung2010computational}, nonlinear acoustics~\citep{sugimoto1991burgers}, and traffic flows~\citep{nagel1996particle}. It serves as a benchmark test case for several numerical solvers and surrogate models~\citep{kutluay1999numerical,shah2017reduced,raissi2017physics} since it can generate discontinuities (shock waves) based on a normal conservation equation.
The viscous version of this equation is given by 
$$\frac{\partial u}{\partial t} + u \frac{\partial u}{\partial x} = v \frac{\partial^2 u}{\partial x^2},$$
 where $u$ indicates volume, $x$ represents a spatial location, $t$ indicates the time, and $v$ denotes the viscosity. We set $x\in[0,1]$\cmt{(in $\si{\meter}$)}, $t \in [0,3]$\cmt{ (in $\si{\second}$)}, and $u(x,0)=\sin(x\pi/2)$ with homogeneous Dirichlet boundary conditions.  We uniformly sampled viscosities $v \in [0.001,0.1]$\cmt{ (in $ \si[inter-unit-product = \ensuremath{{}\cdot{}}] {\milli\pascal\second}$)} as the input parameter to generate the solution field. 

In the space and time domains, the problem is solved using finite elements with hat functions and backward Euler, respectively. For the first (lowest-fidelity) solution, the spatial-temporal domain is discretized into $16\times16$ regular rectangular mesh. Higher-fidelity solvers double the number of nodes in each dimension of the mesh, \eg $32\times32$ for the second fidelity and $64\times64$ for the third fidelity.
The result fields (\ie outputs) are calculated using a $128by128$ regular spatial-temporal mesh.%

\noindent \textbf{Poisson's equation} is a typical elliptic PDE in mechanical engineering and physics for modeling potential fields, such as gravitational and electrostatic fields~\citep{chapra2010numerical}. Written as
\[
\frac{\partial^{2} u}{\partial x^{2}}+\frac{\partial^{2} u}{\partial y^{2}}=0.
\]
It is a generalization of Laplace's equation~\citep{persides1973laplace}.
Despite its simplicity, Poisson's equation is commonly encountered in physics and is regularly used as a fundamental test case for surrogate models~\citep{tuo2014surrogate,lagarisSept./1998artificial}. 
In our experiments, we impose Dirichlet boundary conditions on a 2D spatial domain with $\textbf{x} \in [0,1] \times [0,1]$. The input parameters consist of the constant values of the four borders and the center of the rectangular domain, which vary from $0.1$ to $0.9$ each. 
We sample the input parameters equally in order to create the matching potential fields as outputs. Using the finite difference approach with a first-order center differencing scheme and regular rectangular meshes, the PDE is solved. For the coarsest level solution, we utilized an $8\times8$ mesh. The improved solver employs a finer mesh with twice as many nodes in each dimension. The resultant potential fields are estimated using a spatial-temporal regular grid of $ 32 \times 32 $ cells.

\noindent \textbf{Heat equation} is a fundamental PDE that defines the time-dependent evolution of heat fluxes. Despite having been established in 1822 to describe just heat fluxes, the heat equation is prevalent in many scientific domains, including probability theory~\citep{spitzer1964electrostatic,burdzy2004heat} and financial mathematics~\citep{black1973pricing}. Consequently, it is commonly utilized as a stand-in model. 
This is the heat equation: 
$$
\frac{\partial}{\partial x}\left(k \frac{\partial T}{\partial x}\right)+\frac{\partial}{\partial y}\left(k \frac{\partial T}{\partial y}\right)+\frac{\partial}{\partial z}\left(k \frac{\partial T}{\partial z}\right)+q_{V}=\rho c_{p} \frac{\partial T}{\partial t}
$$
where
$k$ is the materials conductivity
$q_{V}$ is the rate at which energy is generated per unit volume of the medium
$\rho$ is the density
and
$c_{p}$ is the specific heat capacity.
The input parameters are the flux rate of the left boundary at $x=0$ (ranging from 0 to 1), the flux rate of the right boundary at $x=1$ (ranging from  -1 to 0), and the thermal conductivity (ranging from 0.01 to 0.1).

We establish a 2D spatial-temporal domain $x\in[0,1]$, $t \in [0,5]$ with the Neumann boundary condition at$x=0$ and $x=1$, and $u(x,0)=H(x-0.25)-H(x-0.75)$, where $H(\cdot)$ is the Heaviside step function.

The equation is solved using the finite difference in space and backward Euler in time domains. The spatial-temporal domain is discretized into a $16\times16$ regular rectangular mesh for the first (lowest) fidelity solver. A refined solver uses a $32\times32$ mesh for the second fidelity. The result fields are computed on a $100\times100$ spatial-temporal grid.

The equation is solved using a finite difference in the spatial domain and reverse Euler in the temporal domain. The spatial-temporal domain is discretized into an $ 8\times8 $ regular rectangular mesh for the first (least accurate) solution. The second fidelity of an improved solver's mesh is a $ 32\times32 $ grid. On a $ 100\times100 $ spatial-temporal grid, the result fields are calculated.
\subsection{Multi-Fidelity Fusion for Canonical PDEs}
\label{appe: PDEs}
\begin{figure}[]
	\centering
	\begin{subfigure}[b]{0.32\linewidth}
		\includegraphics[width=1\textwidth]{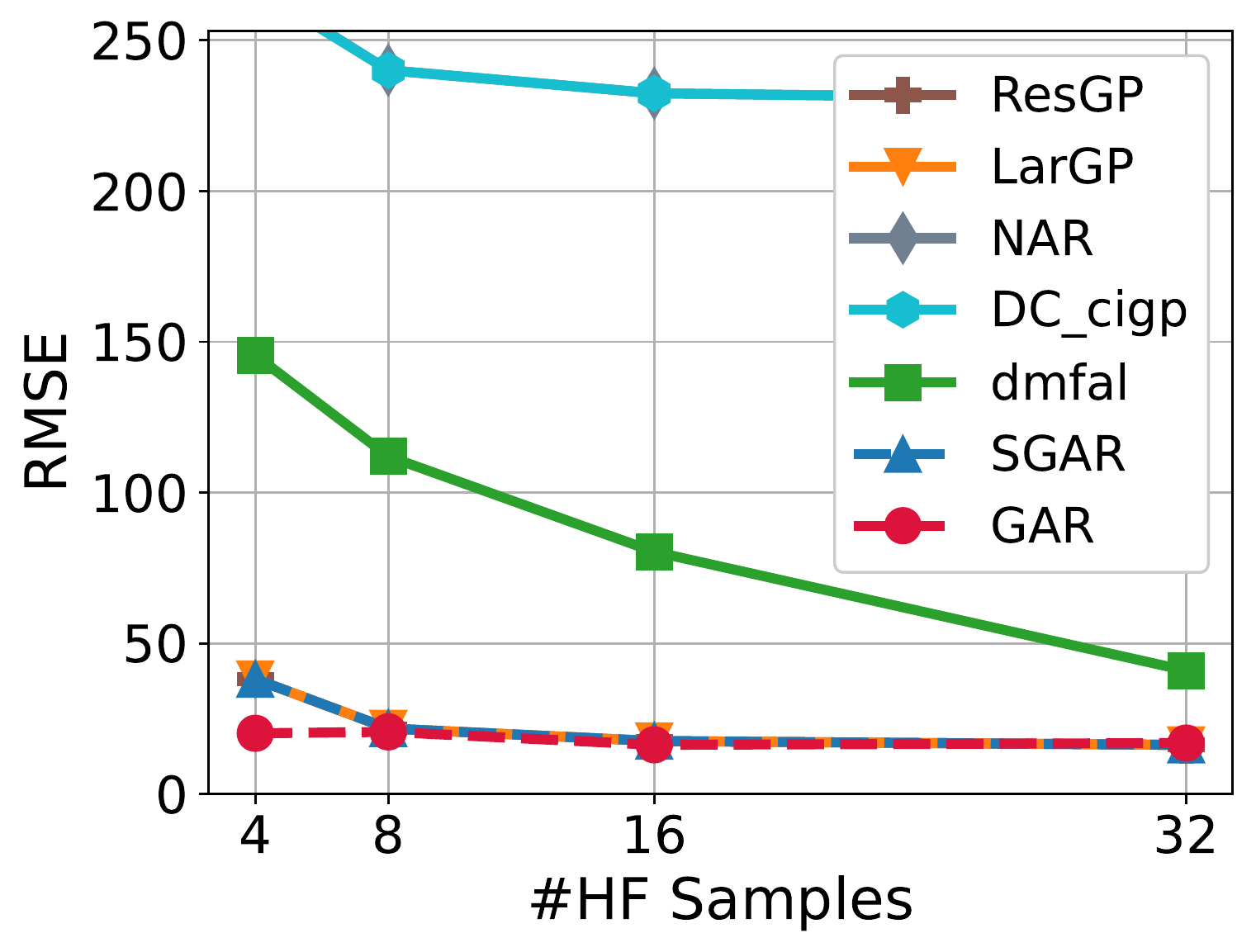}
		\caption{Poisson's}
	\end{subfigure}
	\begin{subfigure}[b]{0.32\linewidth}
		\includegraphics[width=1\textwidth]{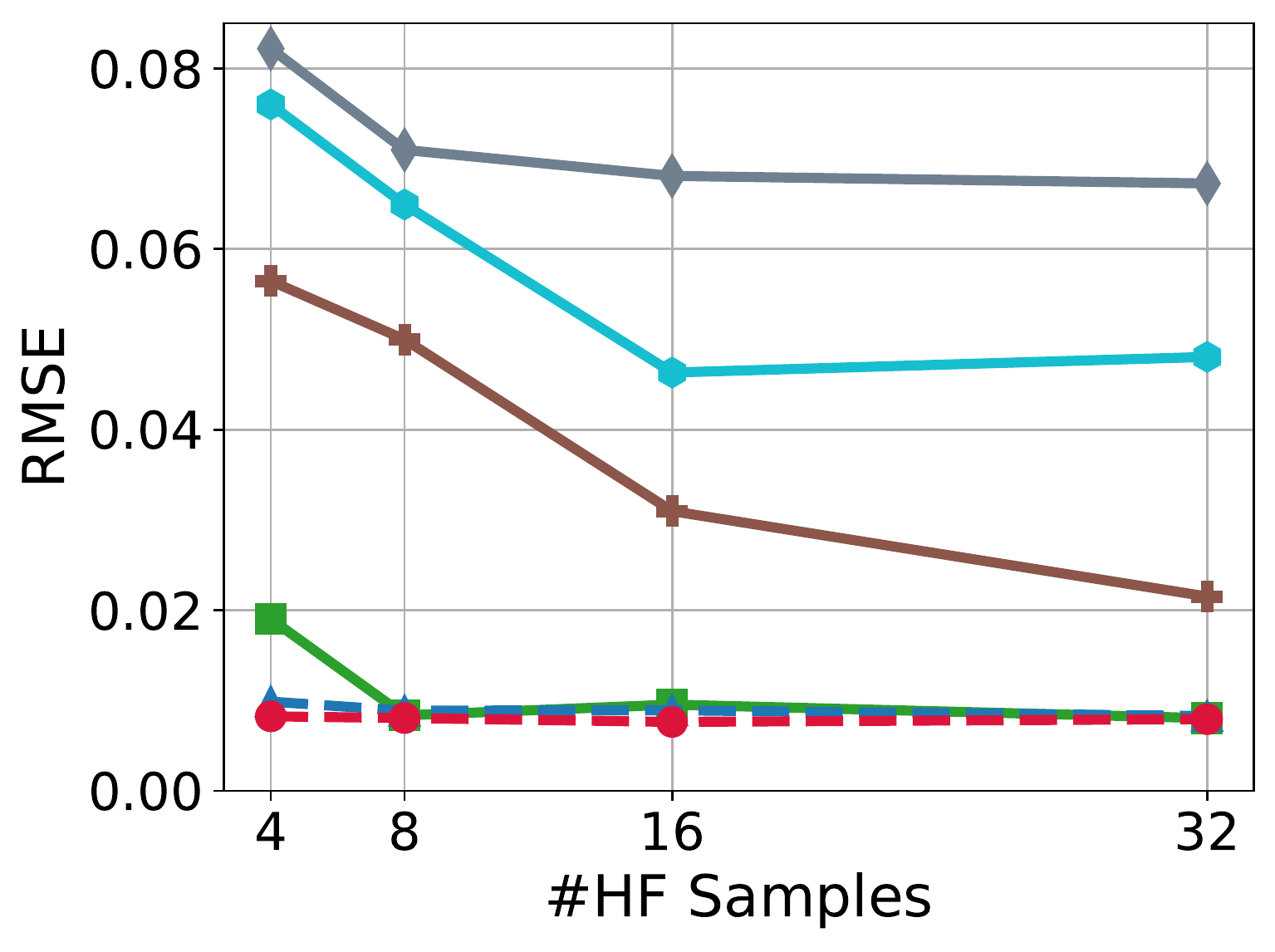}
		\caption{Burger's}
	\end{subfigure}
    \begin{subfigure}[b]{0.32\linewidth}
		\centering
		\includegraphics[width=1\textwidth]{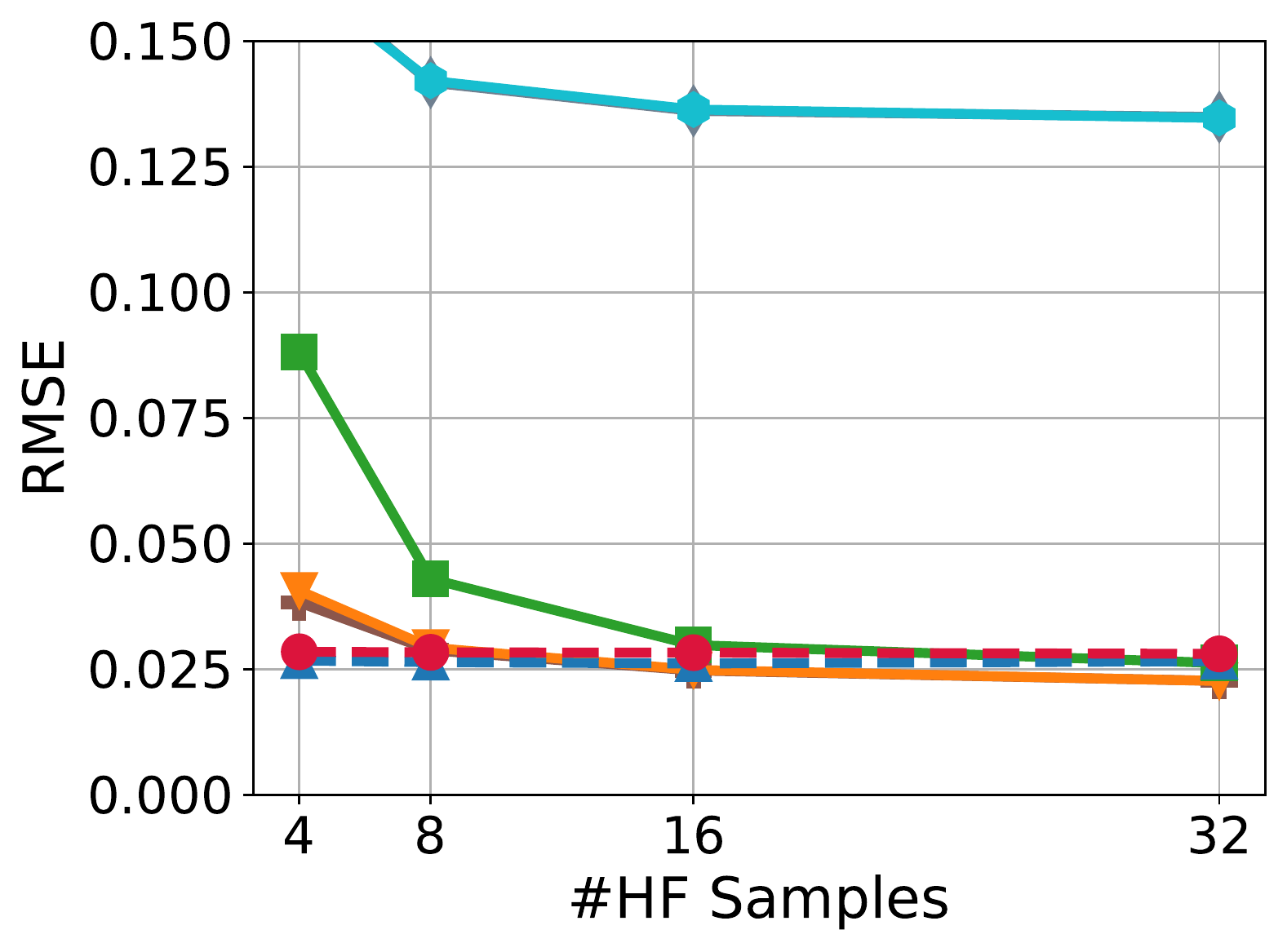}
		\caption{Heat's}
	\end{subfigure}
	\caption{RMSE against increasing number of high-fidelity training samples with training samples increased using Sobol sequence and aligned (interpolated) outputs.}
	\label{fig3}
\end{figure}

\begin{figure}[]
	\centering
	\begin{subfigure}[b]{0.32\linewidth}
		\includegraphics[width=1\textwidth]{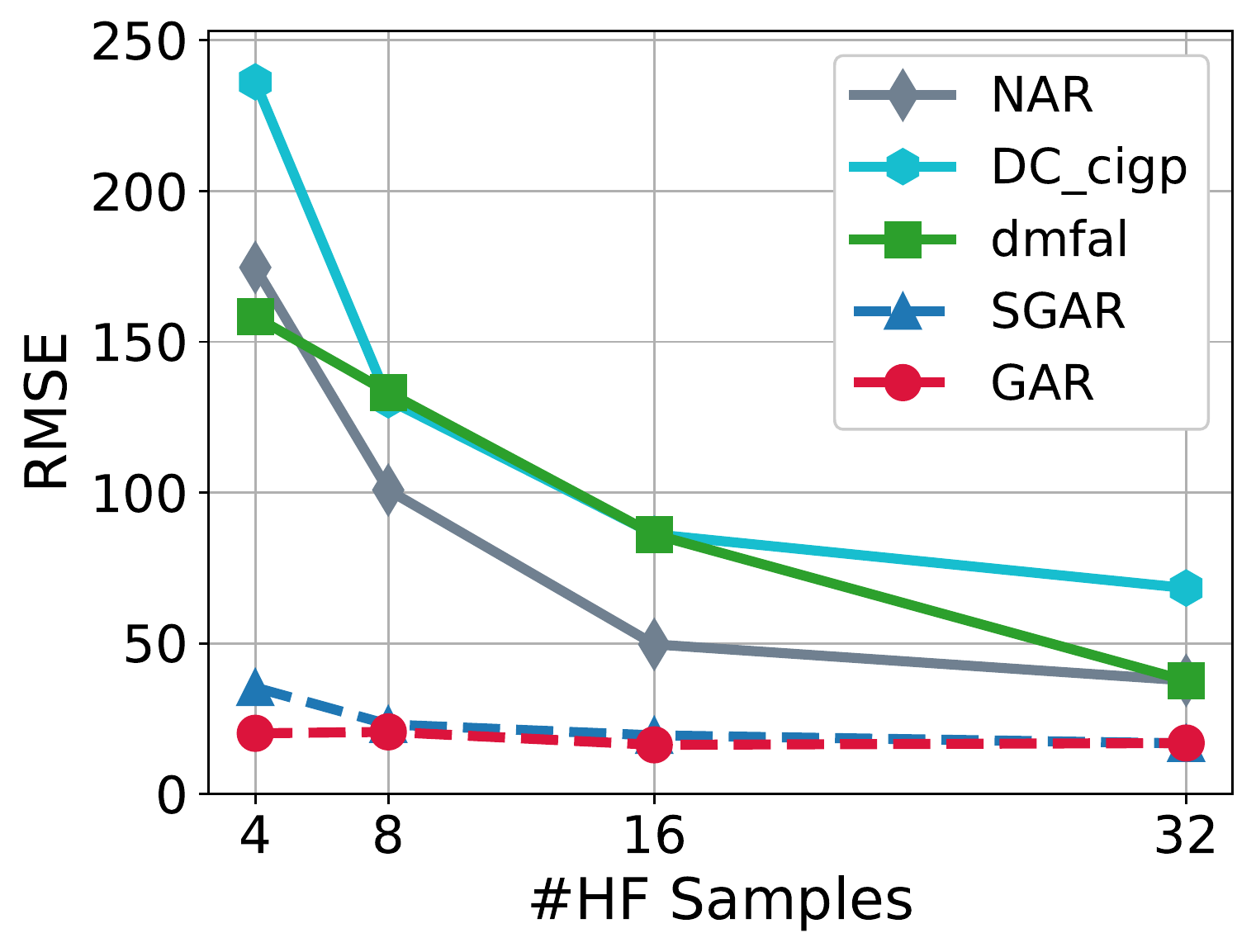}
		\caption{Poisson's}
	\end{subfigure}
	\begin{subfigure}[b]{0.32\linewidth}
		\includegraphics[width=1\textwidth]{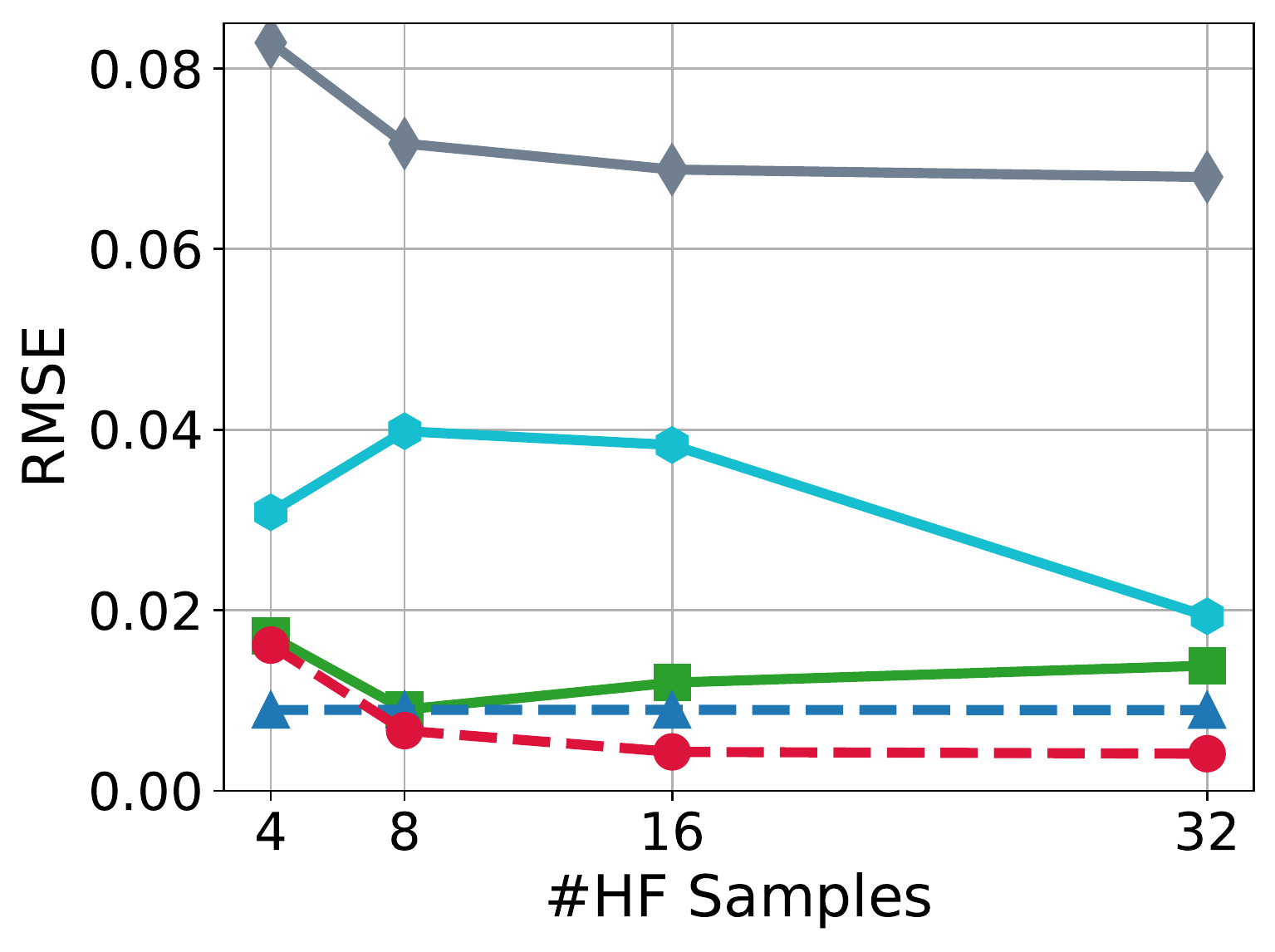}
		\caption{Burger's}
	\end{subfigure}
	\begin{subfigure}[b]{0.32\linewidth}
		\includegraphics[width=1\textwidth]{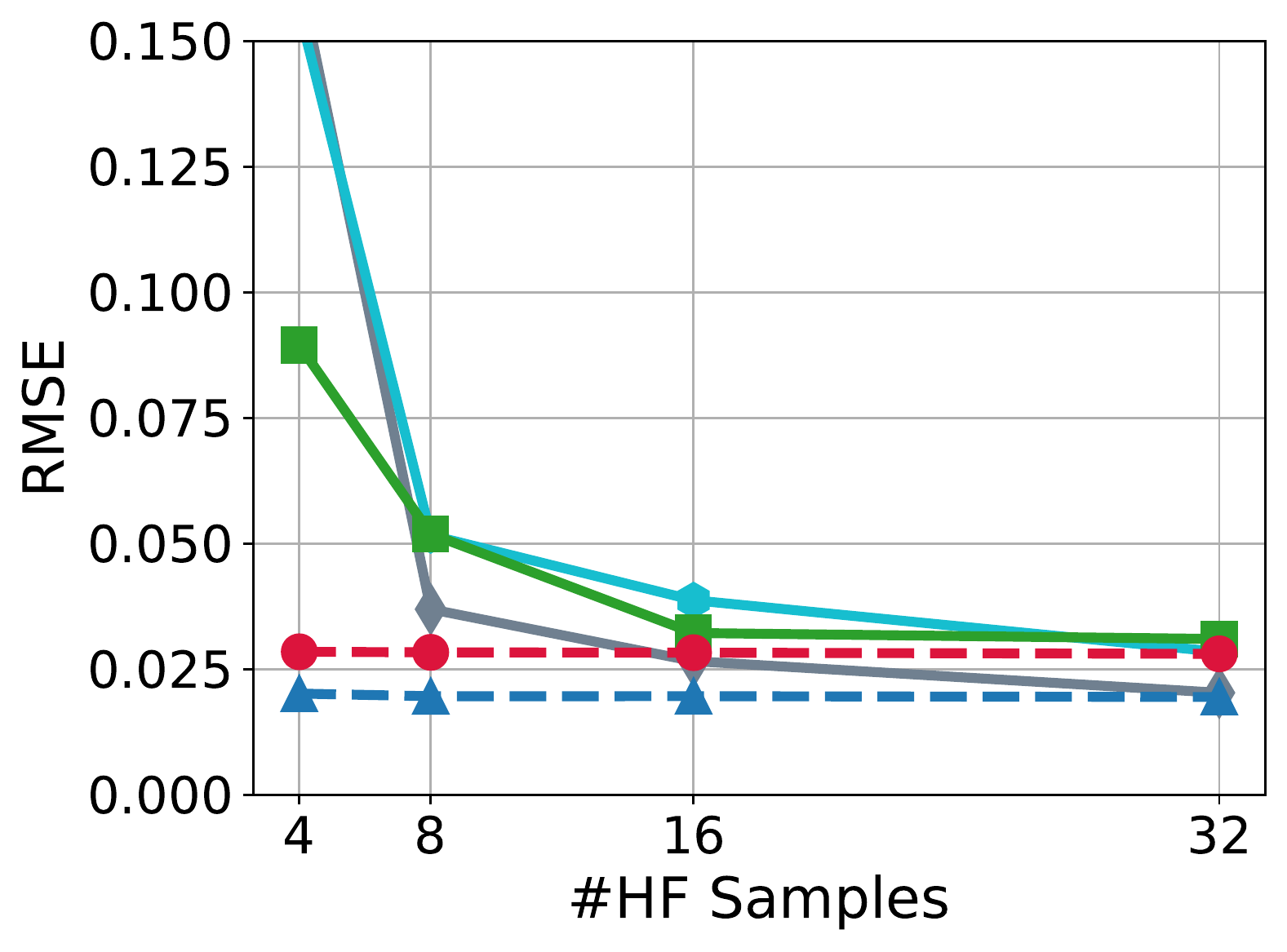}
		\caption{Heat's}
	\end{subfigure}
	\caption{RMSE against increasing number of high-fidelity training samples with training samples increased using Sobol sequence and unaligned  outputs.}
	\label{fig4}
\end{figure}
We use the same experimental setup as in Section \ref{pdes} for these experiments with the only difference being that the training data is generated using a Sobol sequence.
We generated 256 data samples for testing and 32 samples for training. We increased the number of high-fidelity training data gradually from 4 to 32 with the high-fidelity training data fixed to 32.
\Figref{fig3} and \Figref{fig4} show the RMSE statistical results for aligned outputs using interpolated and original unaligned outputs.
\ours and \ourss outperform the competitors with a large margin with scarce high-fidelity training data as in the main paper.
Similarly, the advantage of \ours and \ourss are more obvious when dealing with non-aligned outputs, where \ours and \ourss demonstrate a 5x reduction in RMSE with 4 and 8 high-fidelity training samples, surpassing the competitors by a wide margin.

\subsection{Multi-Fidelity Fusion for Topology Optimization}
\label{appe: sec exp topop}
We use GAR in a topology structure optimization problem, where the output is the best topology structure (in terms of maximum mechanical metrics like stiffness) of a layout of materials, such as alloy and concrete, given some design parameters like external force and angle.
Topology structure optimization is a significant approach in mechanical designs, such as airfoils and slab bridges, especially with recent 3D printing processes in which material is deposited in minute quantities. 
However, it is well known that topology optimization is computationally intensive due to the gradient-based optimization and simulations of the mechanical characteristics involved. A high-fidelity solution, which necessitates a huge discretization mesh and imposes a significant computing overhead in space and time, makes matters worse.

Utilizing data-driven ways to aid in the process by offering the appropriate structures~\citep{xing2020shared,li2020deep} is subsequently gaining popularity. 
Here, we investigate the topology optimization of a cantilever beam (shown in the appendix).  We employ the rapid implementation~\cite{Andreassen2011} to carry out density-based topology optimization by reducing compliance $C$ subject to volume limitations $V \leq \bar{V}$.

The SIMP scheme~\cite{bendsoe_topology_2004} is used to convert continuous density measurements to discrete, optimal topologies. 
We set the position of point load $P 1$, the angle of point load $P 2$, and the filter radius $P 3$~\cite{BRUNS20013443} as system input. 
We solve this challenge for low-fidelity with a $40\times80$ regular mesh and high-fidelity with a $40\times80$ regular mesh. This experiment only includes techniques that can process arbitrary outputs.

\begin{figure}
	\centering
    \includegraphics[width=0.5\linewidth]{./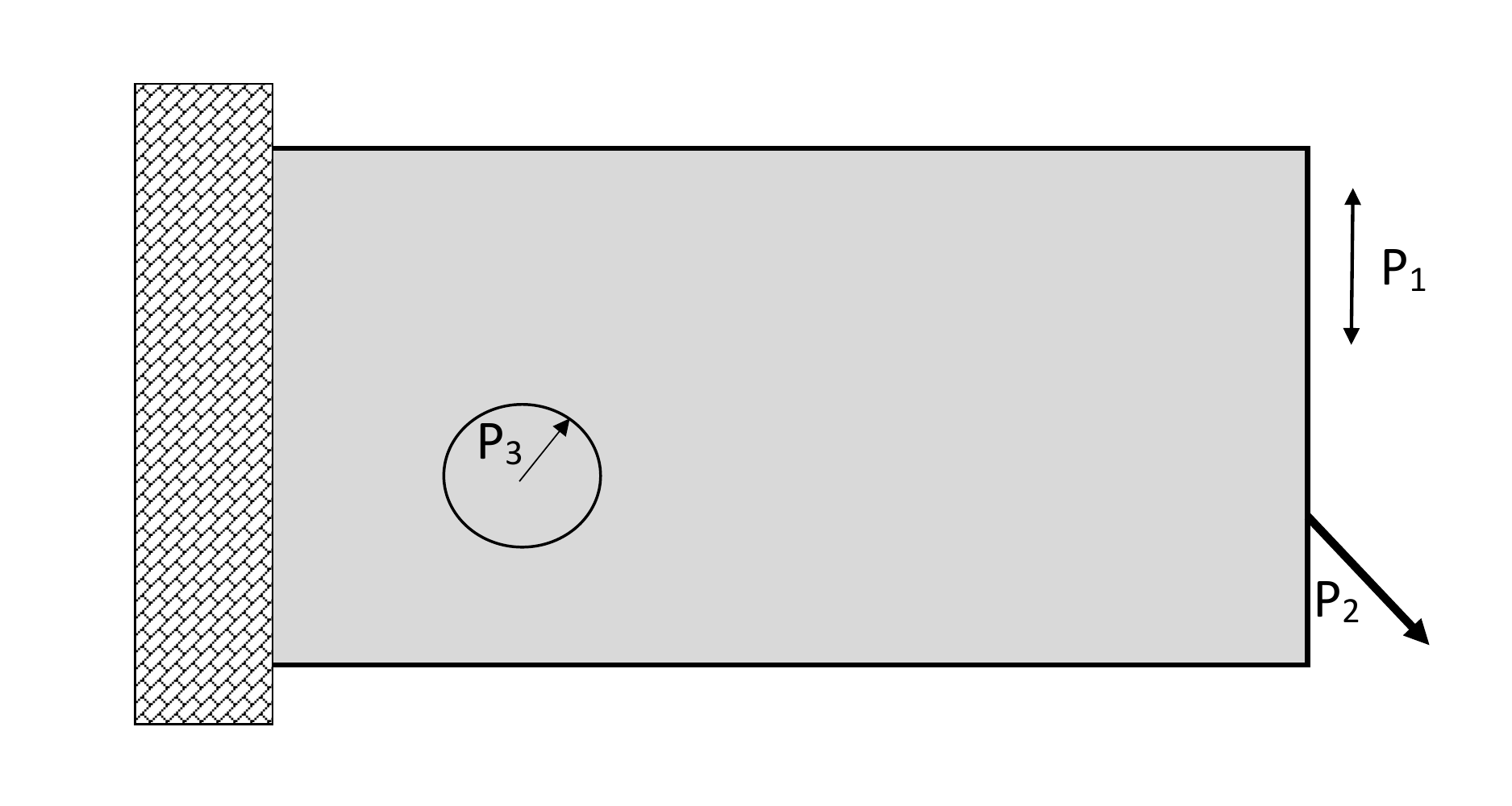}
	\caption{Geometry, boundary conditions, and simulation parameters for cantilever beam}
	\label{fig:topo_geom}
\end{figure}

\begin{figure}[h]
	\centering
	\begin{subfigure}[b]{0.42\linewidth}
		\includegraphics[width=1\textwidth]{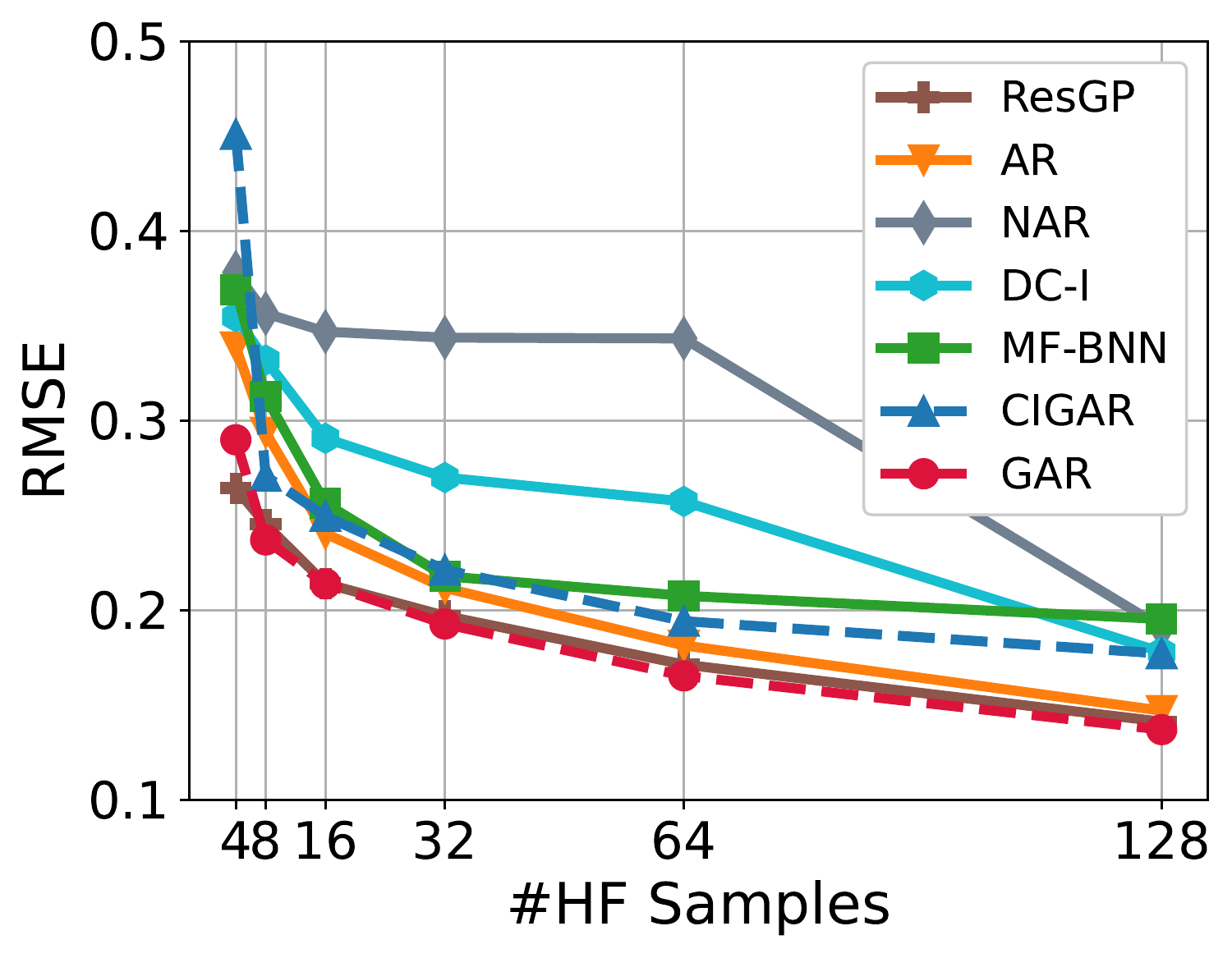}
		\caption{Aligned outputs}
		\label{Top 128 interp}
	\end{subfigure}
	\begin{subfigure}[b]{0.42\linewidth}
		\includegraphics[width=1\textwidth]{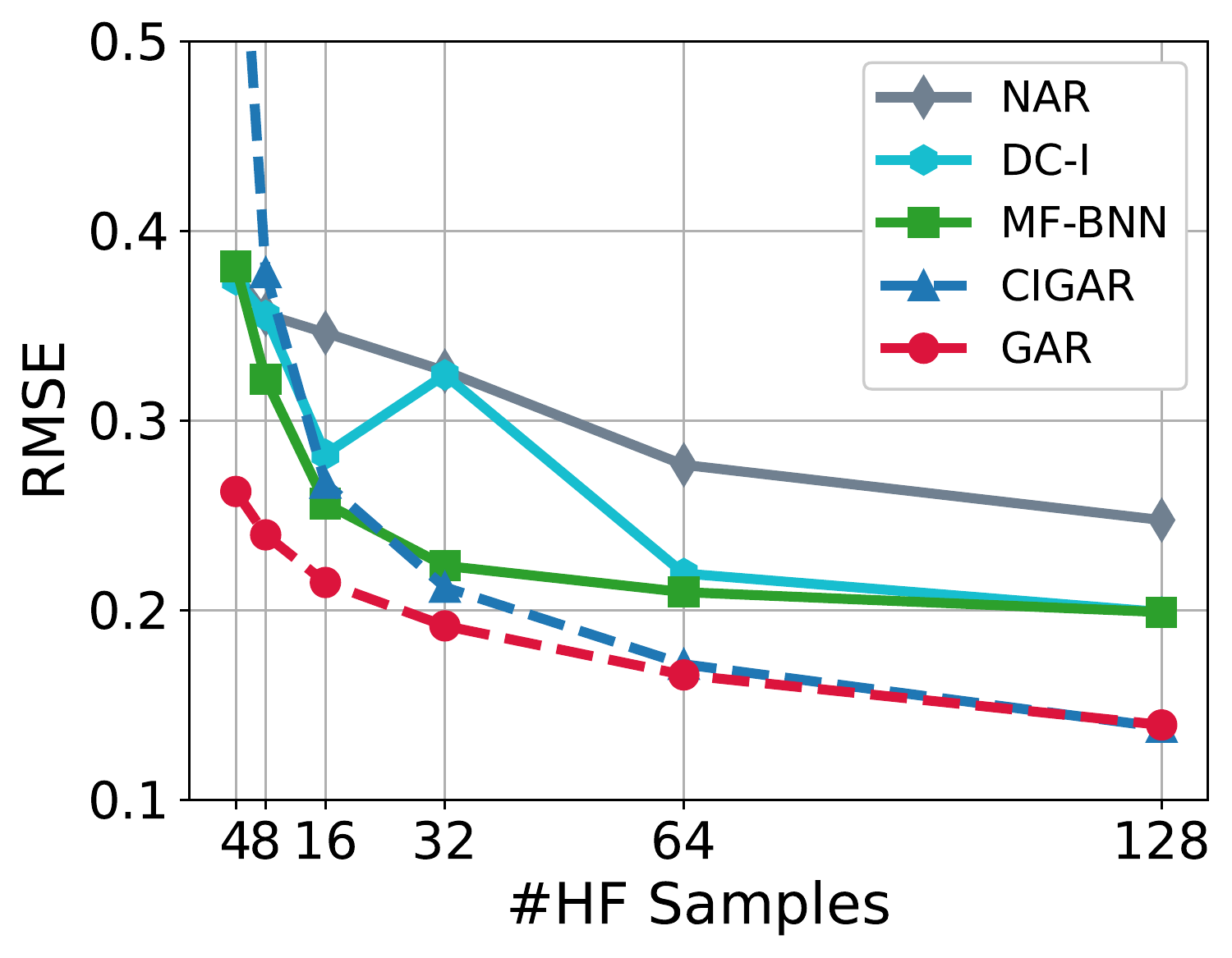}
		\caption{{Raw outputs}}
		\label{Top 128 raw}
	\end{subfigure}
	\caption{RMSE against increasing number of high-fidelity training samples for topology optimization using Sobol sequence.}
	\label{fig: topop}
\end{figure}

As with the early experiments, we generate 128 testing samples and 64 training samples using a Sobol sequence to approximately assess the ance in active learning. 
The results are shown in Figure \ref{fig: topop}.
We can see that all available methods show similar performance for both raw outputs that are not aligned by interpolation and the aligned outputs.
Nevertheless, \ours consistently outperforms the competitors with a clear margin.
\ourss, in contrast, performs better for the raw outputs.

\subsection{Multi-Fidelity Fusion for Solid Oxide Fuel Cell}
\label{appe:sofc}

\begin{figure}[h!]
	\centering
	\includegraphics[width=0.65\textwidth]{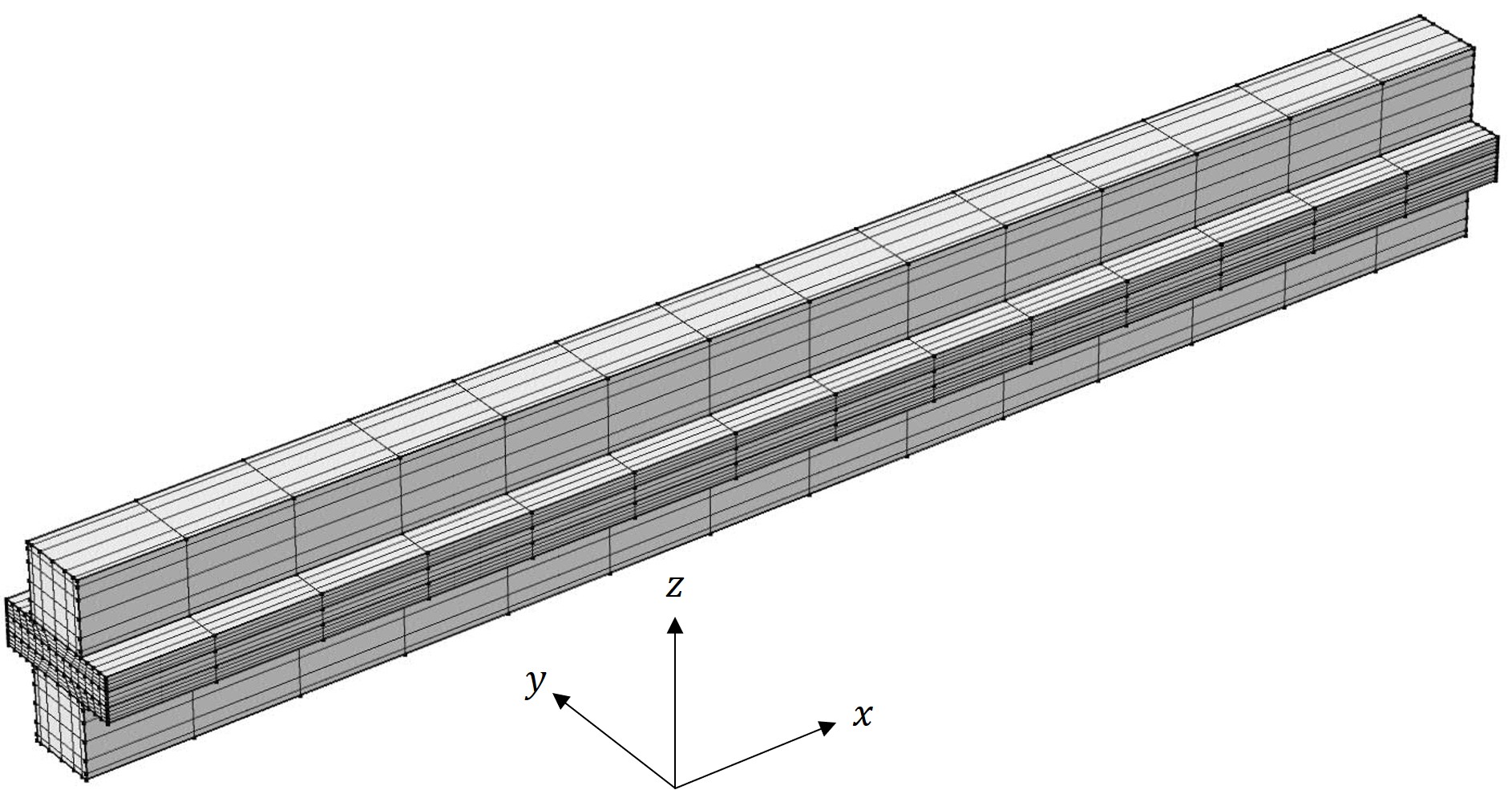}
	\caption{The cathode is at the top of the computational domain for the SOFC example, which consists of gas channels, electrodes, and electrolyte. The layers are, from top to bottom, a channel, an electrode, an electrolyte, an electrode, and a channel. The dimensions of the channel are (x * y * z) 1 cm * 0.5 mm * 0.5 mm, the dimensions of the electrode are 1 cm * 1 mm * 0.1 mm, and the dimensions of the electrolyte are 1 cm * 1 mm * 0.1 mm. The cathode intake is placed at $ x=1 $ cm while the anode inlet is located at $ x=0 $ cm.}
	\label{fig:sofc} 
\end{figure}

In this test problem, a steady-state 3-D solid oxide fuel cell model is considered. Fig.\ref{fig:sofc} illustrates the geometry. 
The model incorporates electronic and ionic charge balances (Ohm's law), flow distribution in gas channels (Navier-Stokes equations), flow in porous electrodes (Brinkman equation), and gas-phase mass balances in both gas channels and porous electrodes (Maxwell-Stefan diffusion and convection). Butler-Volmer charge transfer kinetics is assumed for reactions in the anode
 ($\mbox{H}_2+\mbox{O}^{2-}\rightarrow \mbox{H}_2\mbox{O}+2\mbox{e}^{-}$)
and cathode ($\mbox{O}_2+4\mbox{e}^{-}\rightarrow 2\mbox{O}^{2-}$).
The cell functions in a potentiostat manner (constant cell voltage). COMSOL Multiphysics\footnote{\url{https://www.comsol.com/model/current-density-distribution-in-a-solid-oxide-fuel-cell-514}} (Application ID: 514), which uses the finite-element approach, was used to solve the model.
 
The assumed inputs are the electrode porosities $\epsilon\in[0.4,0.85]$, the cell voltage $E_c\in[0.2,0.85]$ V, the temperature $T\in[973,1273]$ K, and the channel pressure $P\in[0.5,2.5]$ atm. A Sobol sequence is used to choose 60 inputs within the ranges specified for the low-fidelity and high-fidelity simulations. 40 high-fidelity test points are chosen at random (from the ranges above) to complete the test. The low-fidelity F1 model used 3164 mapped elements and relative tolerance of 0.1, while the high-fidelity model employed 37064 elements and relative tolerance of 0.001. Additionally, the COMSOL model employs a V cycle geometric multigrid. The quantities of interest are profiles of electrolyte current density (A m$-2$) and ionic potential (V) in the $x-z$ plane centered on the channels (Fig. \ref{fig:sofc}). In both instances, $d=100\times50=5000$ points are captured, and both profiles are vectorized to provide the training and test outputs.
\begin{figure}[h]
	\centering
	\begin{subfigure}[b]{0.32\linewidth}
			\includegraphics[width=1\textwidth]{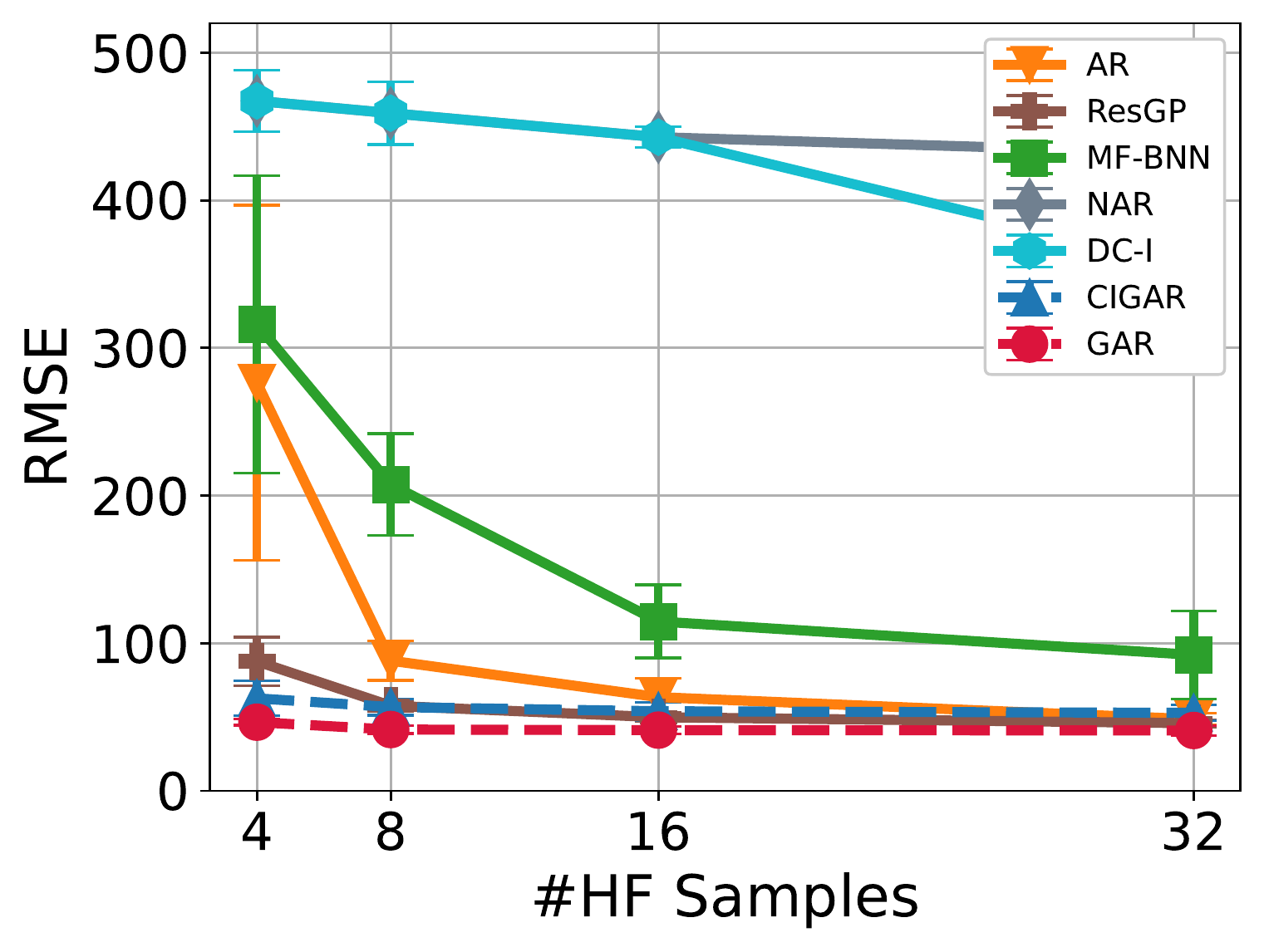}
		\end{subfigure}
	\begin{subfigure}[b]{0.32\linewidth}
			\includegraphics[width=1\textwidth]{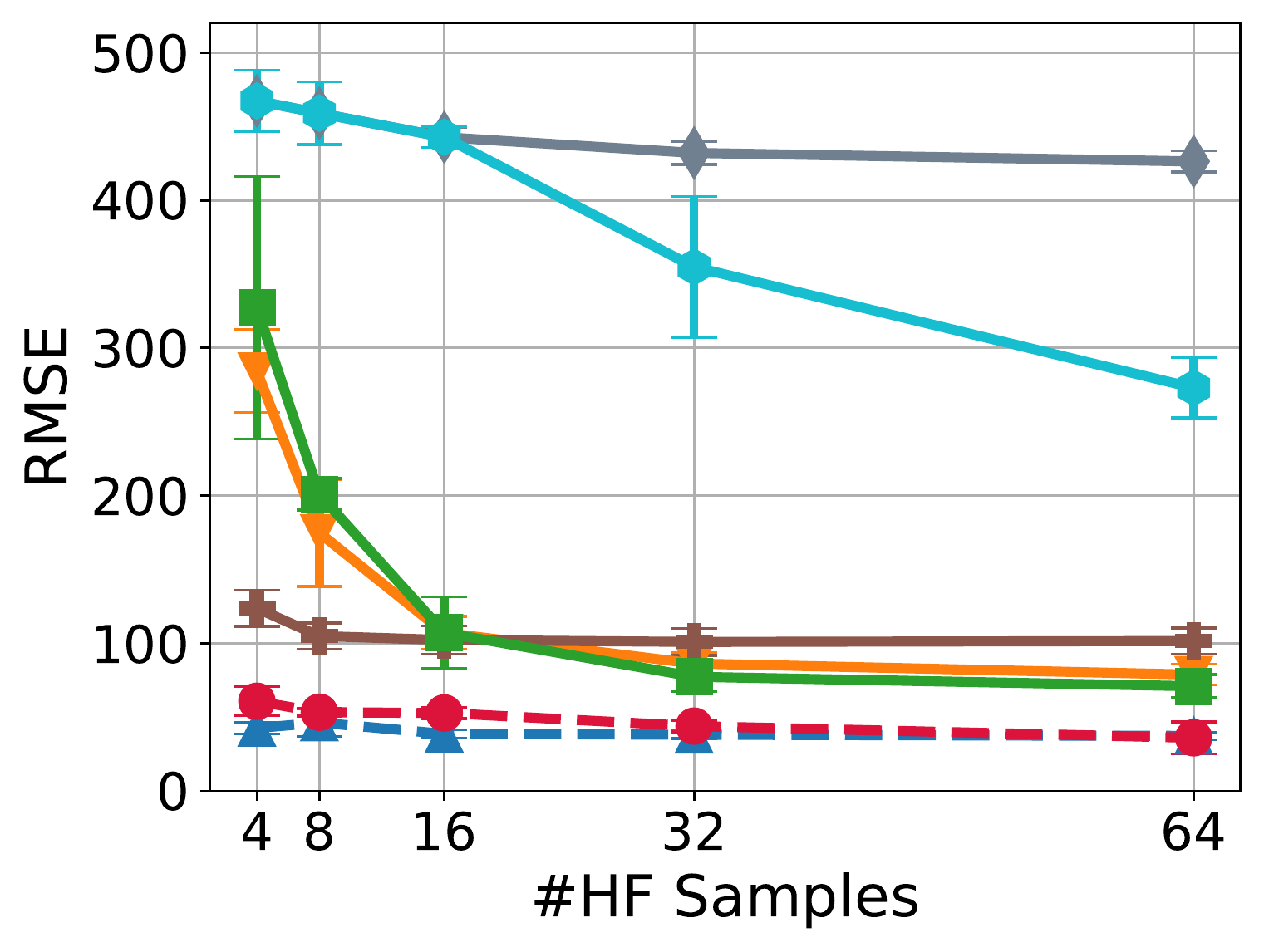}
		\end{subfigure}
	\begin{subfigure}[b]{0.32\linewidth}
			\includegraphics[width=1\textwidth]{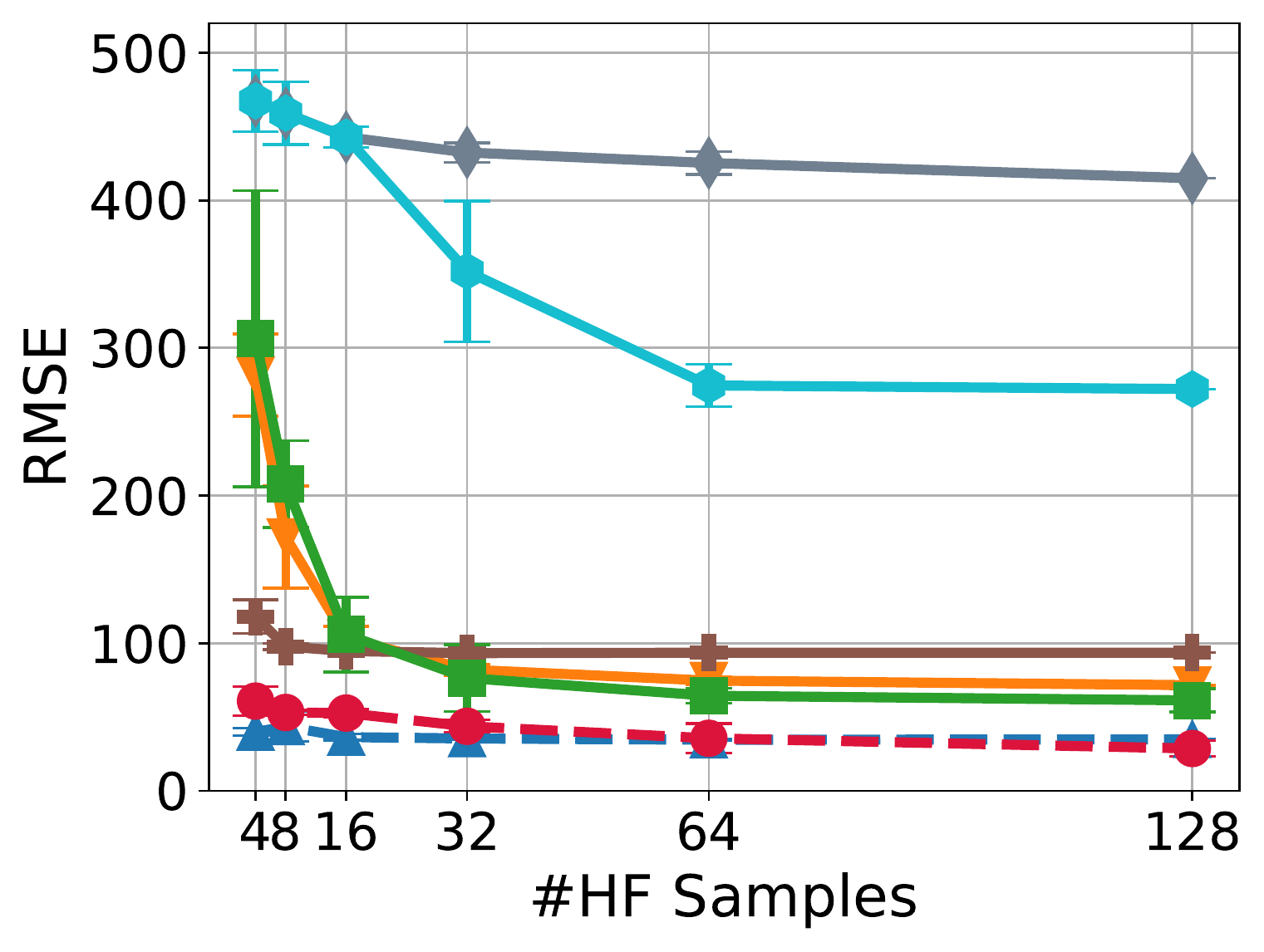}
		\end{subfigure}
	\caption{RMSE against increasing number of high-fidelity training samples for SOFC with low-fidelity training sample number fixed to \{32,64,128\}.}
	\label{fig: topop rmse2}
\end{figure}

We add the classic experiment where the number of low-fidelity training samples was fixed to \{32,64,128\} and the high-fidelity training samples are gradually increased from 4 to \{32,64,128\}.
The outputs are aligned using interpolation, and the experimental results are shown in \Figref{fig: topop rmse2}.
We can see that the GAR and CIGAR methods always perform better than the other methods, especially when only a few high-fidelity training data are used. This is consistent with the previous experiment.
We can also see that AR also performs well indicating that these data are not highly nonlinear and complex, making it relatively easy to solve.
However, both AR and MF-BNN converge to a higher error whereas \ours and \ourss converge to a lower error bound.

\begin{figure}[h]
	\centering
	\begin{subfigure}[b]{0.32\linewidth}
		\includegraphics[width=1\textwidth]{./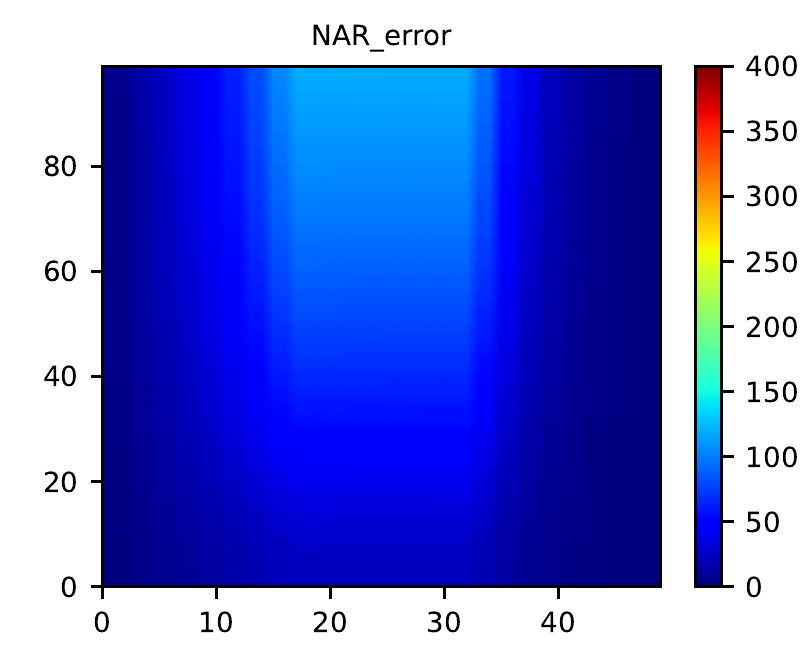}
		\caption{NAR}
	\end{subfigure}
	\begin{subfigure}[b]{0.32\linewidth}
		\includegraphics[width=1\textwidth]{./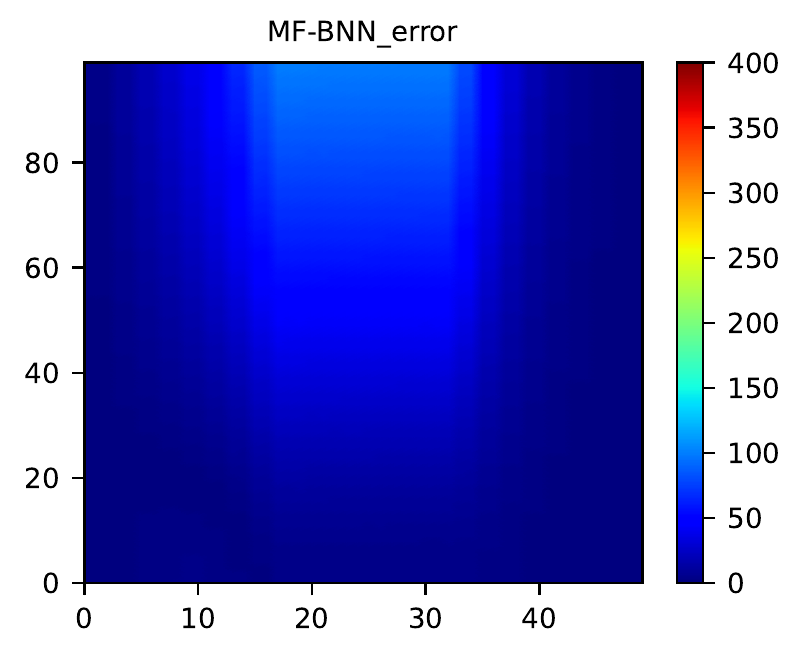}
		\caption{MF-BNN}
	\end{subfigure}
	\begin{subfigure}[b]{0.32\linewidth}
		\includegraphics[width=1\textwidth]{./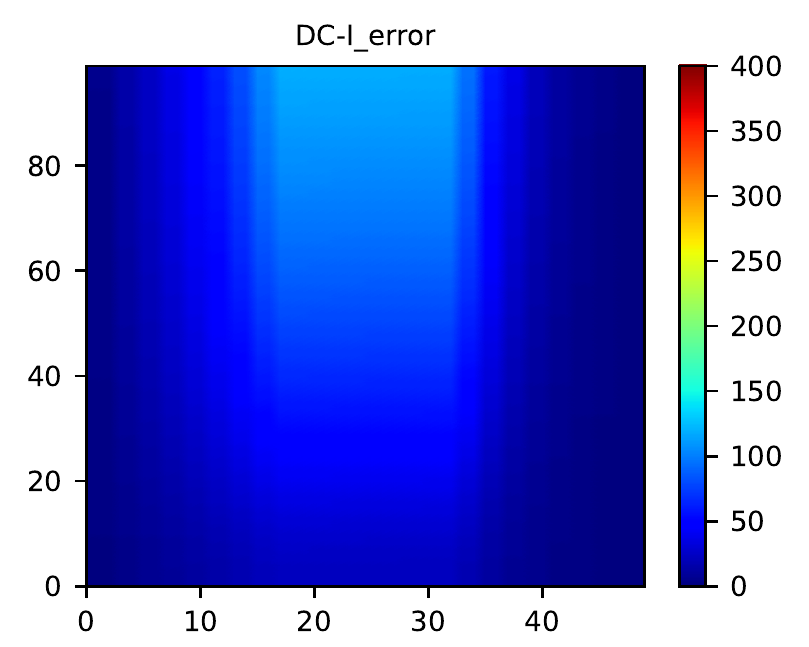}
		\caption{DC}
	\end{subfigure}
	\caption{RMSE fields of ECD for 128 testing samples, using 32 low-fidelity and 16 high-fidelity training samples.}
	\label{fig: 16 Y1 SOFC error}
\end{figure}
\begin{figure}[h]
	\centering
	\begin{subfigure}[b]{0.32\linewidth}
		\includegraphics[width=1\textwidth]{./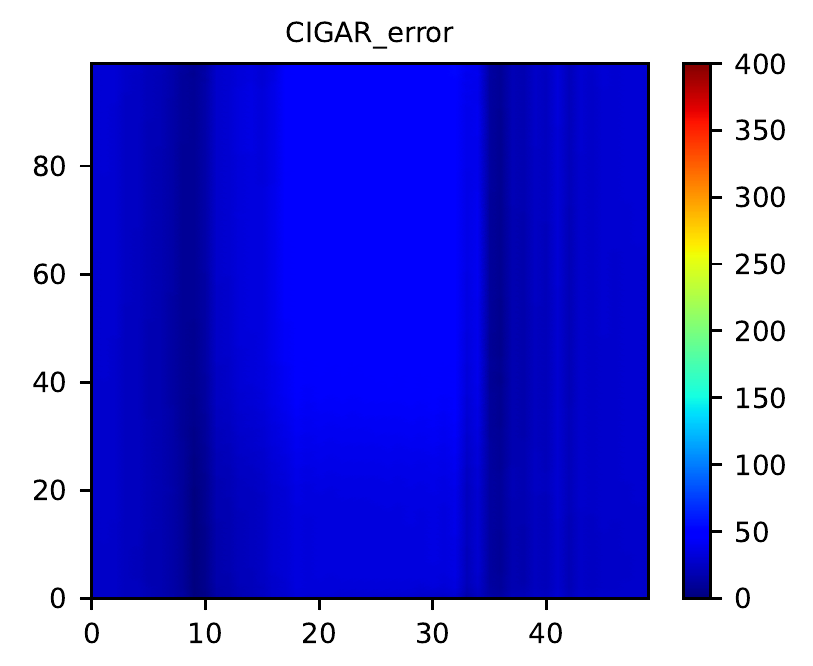}
		\caption{CIGAR}
	\end{subfigure}
	\begin{subfigure}[b]{0.32\linewidth}
		\includegraphics[width=1\textwidth]{./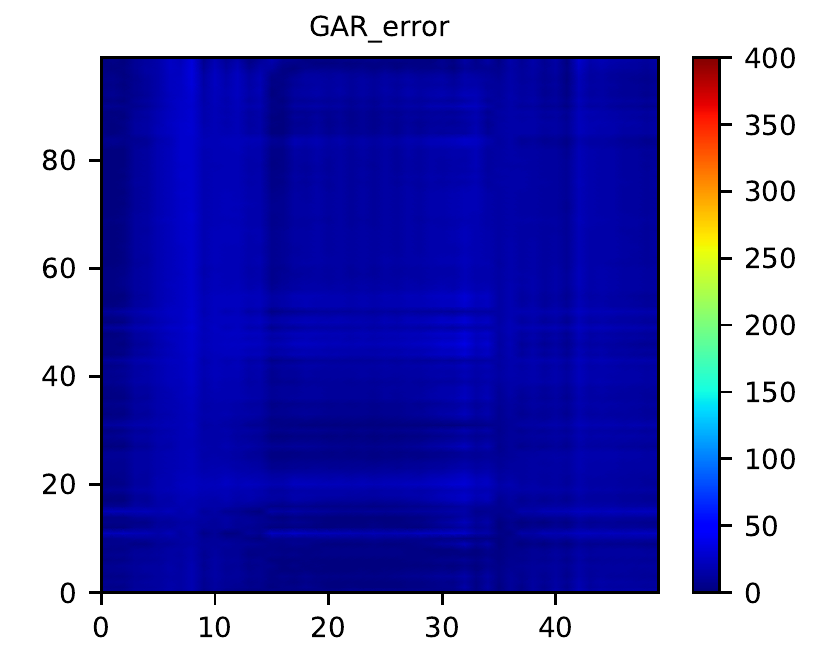}
		\caption{GAR}
	\end{subfigure}
	\caption{RMSE fields of ECD for 128 testing samples, using 32 low-fidelity and {\bf 4} high-fidelity training samples.}
	\label{fig: 4 Y1 SOFC error}
\end{figure}

To investigate the prediction error in detail, we define the average RMSE field $\tZ^{(\mathrm{AEF})}$ by 
\[
	\tZ^{(\mathrm{AEF})} = \sqrt{\frac{1}{N}\sum_{i=1}^N (\tZ_i - \tilde{\tZ}_{i})^2},
\]
where $\tilde{\tZ}_{i}$ is the prediction, $\tZ_i$ is the ground true value, and the square root is element-wise operation.
\Figref{fig: 16 Y1 SOFC error} shows the average RMSE field of NAR, MF-BNN, and DC methods on the ECD in SOFC data with 32 low-fidelity training samples, 16 high-fidelity training samples, and 128 test samples.
To highlight the advantage of \ours and \ourss,
\Figref{fig: 4 Y1 SOFC error} shows the average RMSE field of the same setup with \textbf{only 4 high-fidelity training samples}.
It can be seen clearly that \ours and \ourss have a smaller error field even with only 4 high-fidelity training samples compared to NAR, MF-BNN, and DC with 16 high-fidelity training samples. 
Also note that \ours seems to have some checkerboard artifacts, which might be caused by the over-parameterization using a full transfer matrix. We leave this issue to our further work to resolve.
\ourss have fewer checkerboard artifacts with the price of a slight increase in the RMSE.

\begin{figure}[h]
	\begin{subfigure}[b]{0.32\linewidth}
		\includegraphics[width=1\textwidth]{./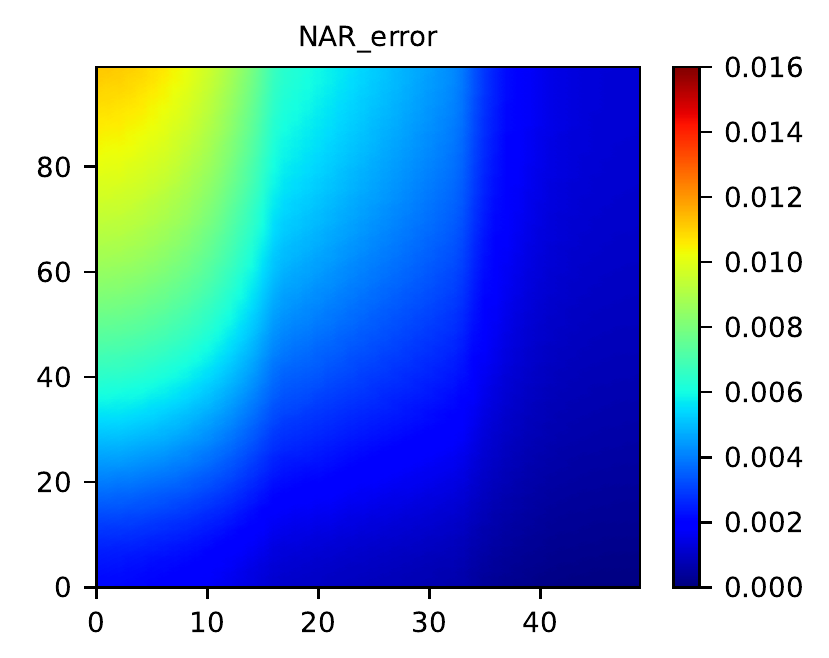}
		\caption{NAR}
	\end{subfigure}
	\begin{subfigure}[b]{0.32\linewidth}
		\includegraphics[width=1\textwidth]{./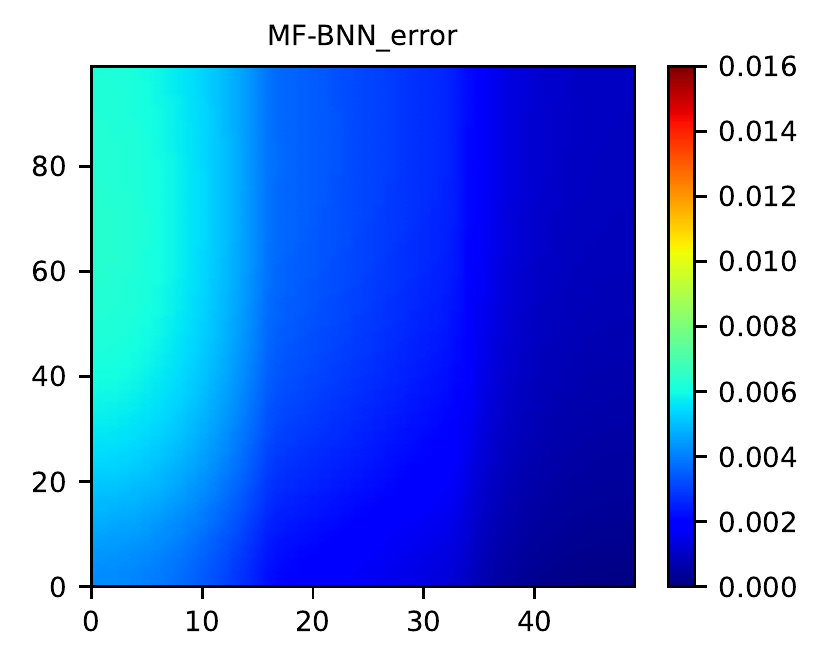}
		\caption{MF-BNN}
	\end{subfigure}
	\begin{subfigure}[b]{0.32\linewidth}
		\includegraphics[width=1\textwidth]{./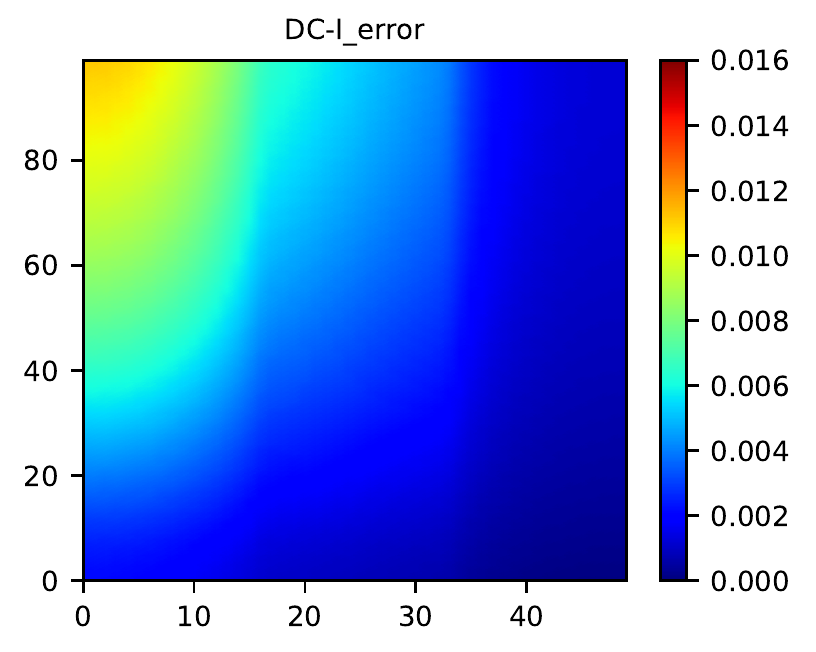}
		\caption{DC}
	\end{subfigure}
	\caption{RMSE fields of IP for 128 testing samples, using 32 low-fidelity and 16 high-fidelity training samples.}
	\label{fig: 16 Y2 SOFC error}
\end{figure}

\begin{figure}[h]
	\centering
	\begin{subfigure}[b]{0.32\linewidth}
		\includegraphics[width=1\textwidth]{./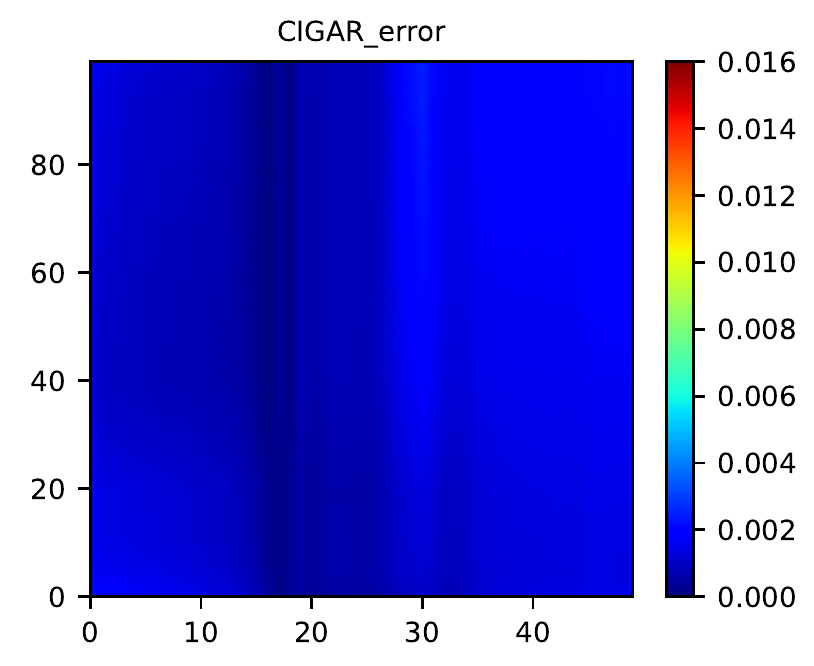}
		\caption{CIGAR}
	\end{subfigure}
	\begin{subfigure}[b]{0.32\linewidth}
		\includegraphics[width=1\textwidth]{./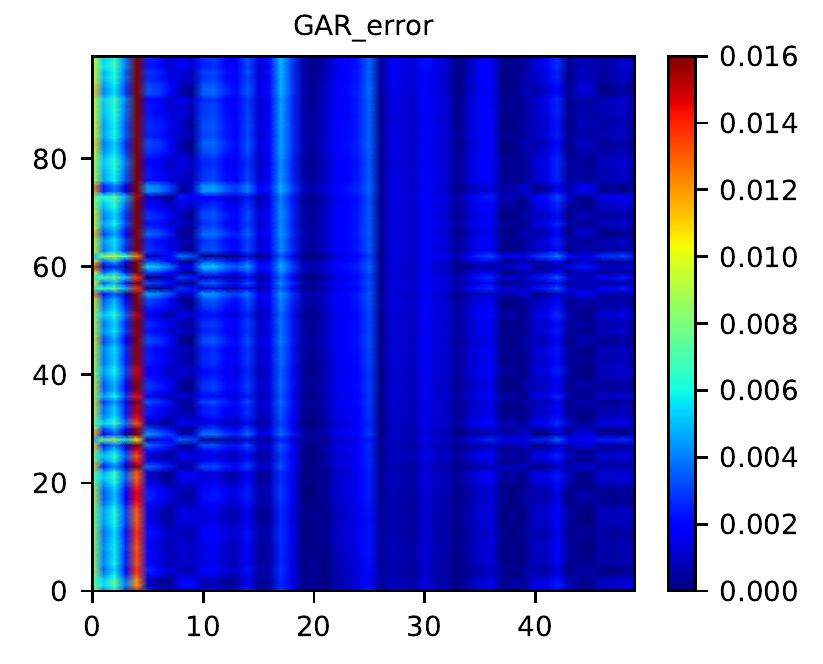}
		\caption{GAR}
	\end{subfigure}
	\caption{RMSE fields of IP for 128 testing samples, using 32 low-fidelity and {\bf 4} high-fidelity training samples.}
	\label{fig: 4 Y2 SOFC error}
\end{figure}
In \Figref{fig: 16 Y2 SOFC error} and \Figref{fig: 4 Y2 SOFC error}, similar to the previous experimental setup, we draw the average RMSE with 128 testing samples on the IP fields from the SOFC dataset. The NAR, MF-BNN and DC are trained with 16 high-fidelity samples, while \ours and \ourss are trained with only 4 high-fidelity samples. We can see that our methods outperform other methods by a clear margin.
However, the checkerboard artifact is even worse for \ours in this case, whereas \ourss successfully reduces such an artifact with also low error.

\begin{figure}[!htbp]
	\centering
	\includegraphics[width=0.49\textwidth,trim={8cm 0 8cm 0},clip]{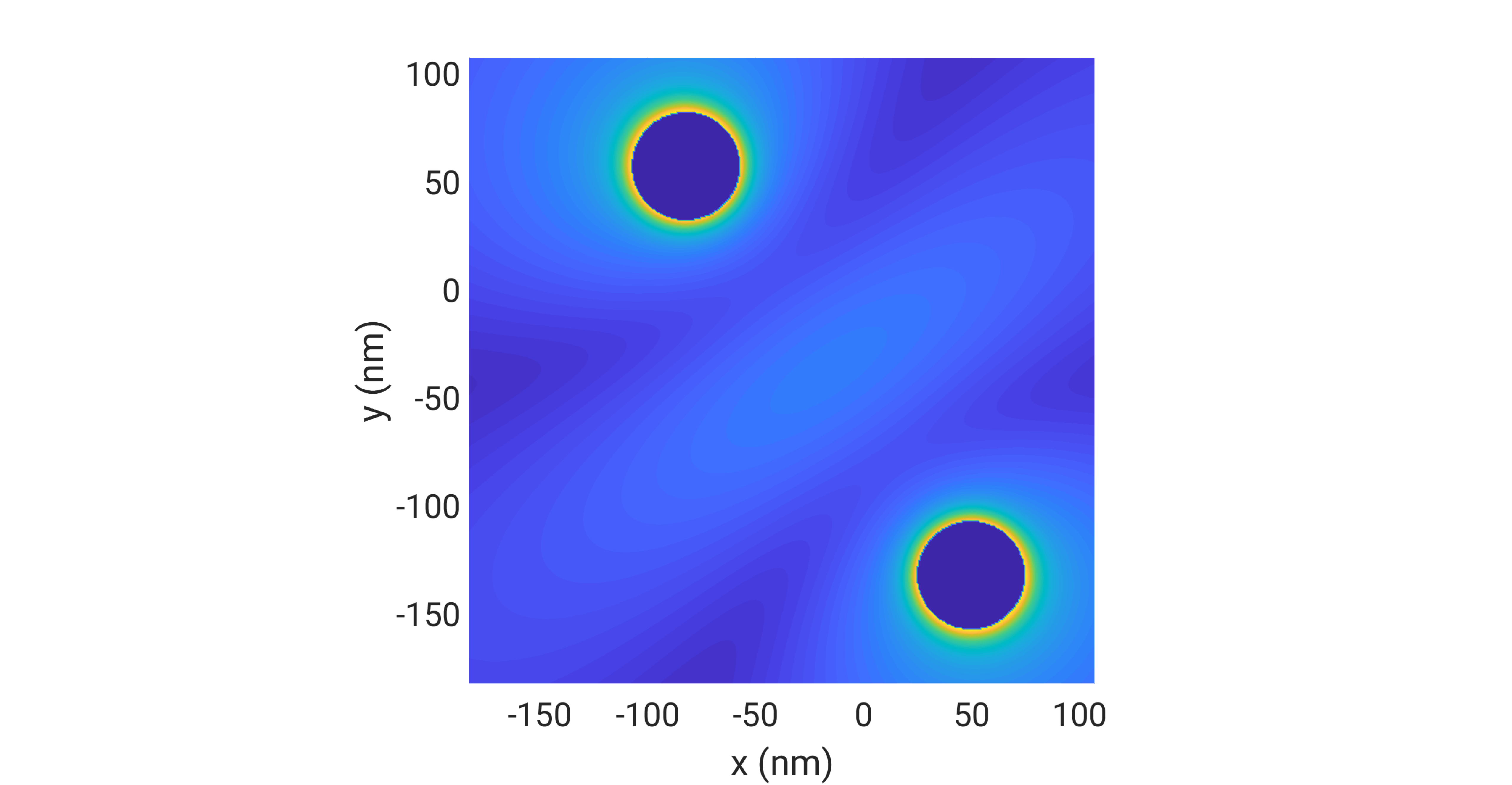}
	\includegraphics[width=0.49\textwidth,trim={8cm 0 8cm 0},clip]{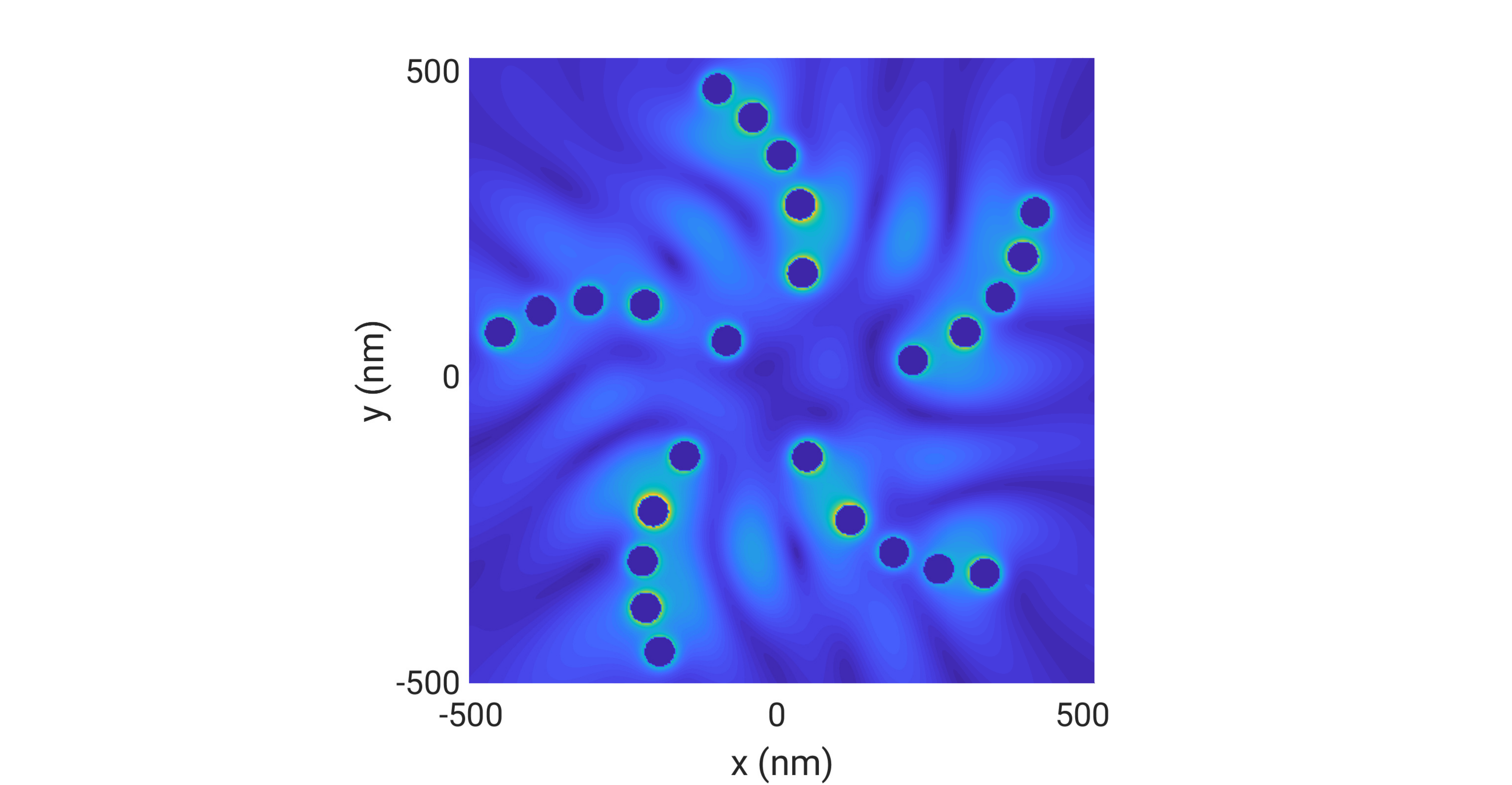}
	\includegraphics[width=0.49\textwidth,trim={8cm 0 8cm 0},clip]{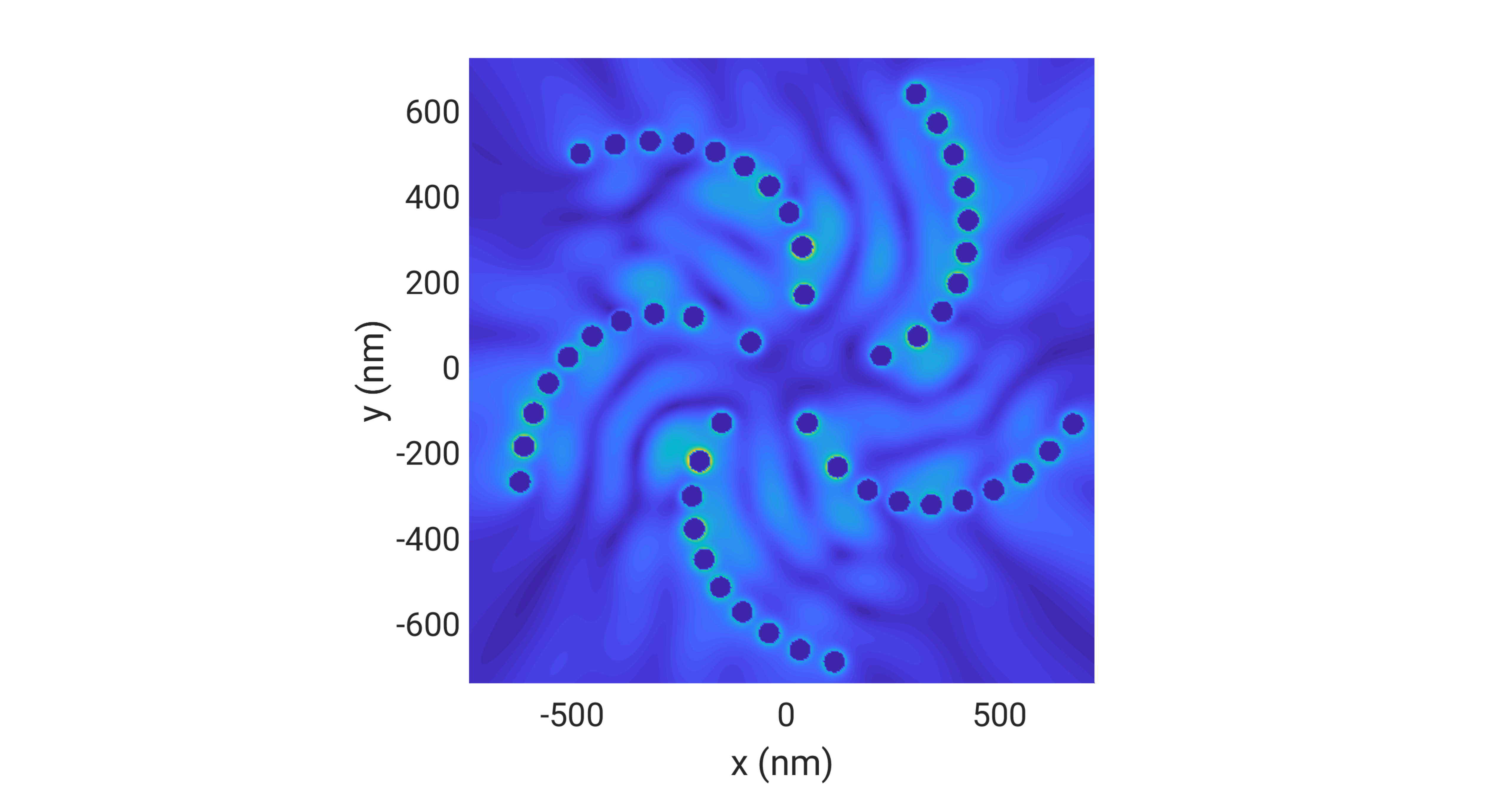}
	\includegraphics[width=0.49\textwidth,trim={8cm 0 8cm 0},clip]{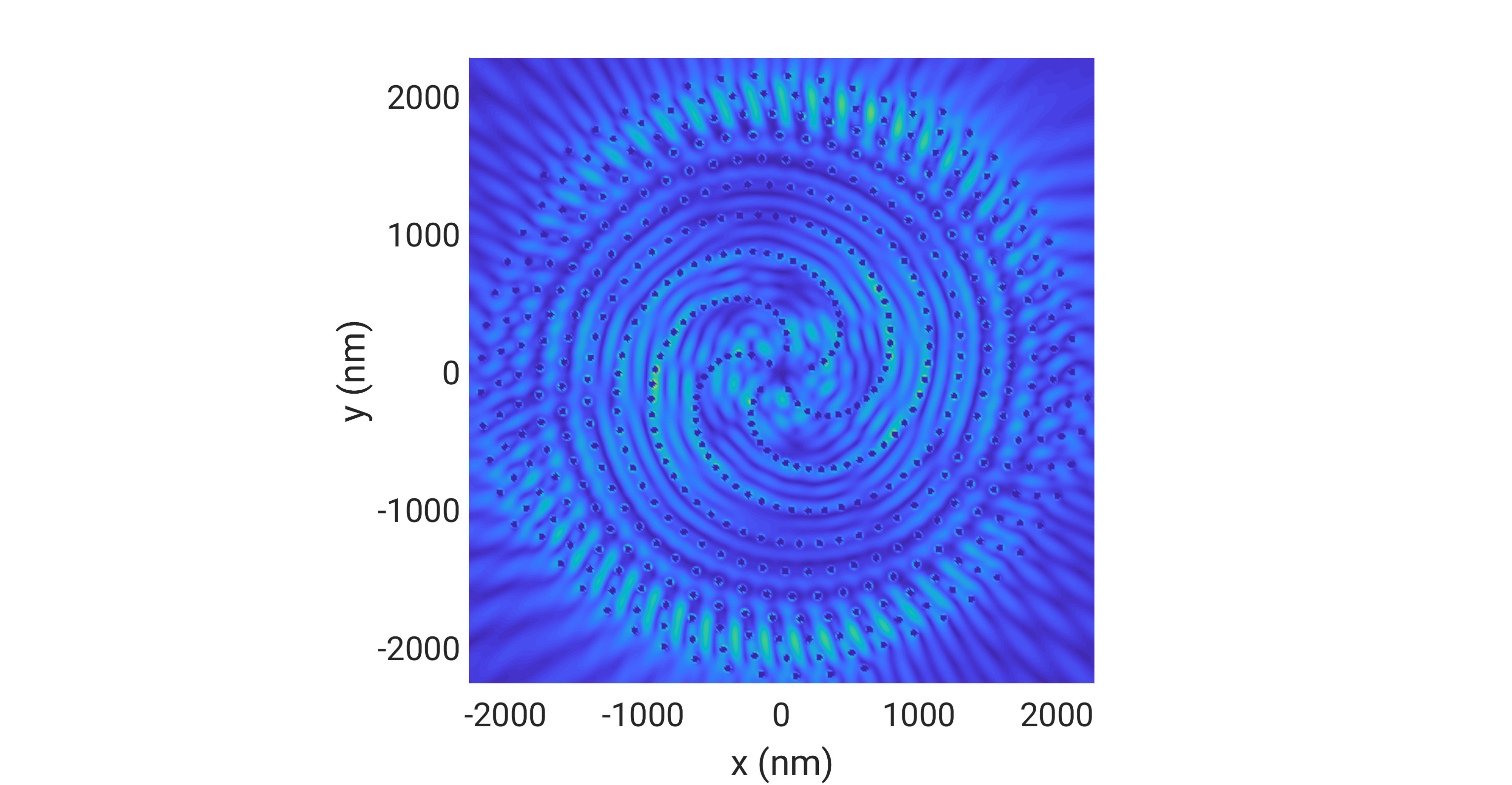}
	\caption{Sample configurations of Vogel spirals with $\{2,25,50,500\}$ particles.}
\label{vogel}
\end{figure} 
\subsection{Plasmonic Nanoparticle Arrays Simulations}
\label{appe: palas}

In the final example, we calculate the extinction and scattering efficiencies $Q_{ext}$ and $Q_{sc}$ for plasmonic systems with varying numbers of scatterers using the Coupled Dipole Approximation (CDA) approach. CDA is a method for mimicking the optical response of an array of similar, non-magnetic metallic nanoparticles with dimensions far smaller than the wavelength of light (here 25 nm).
$Q_{ext}$ and $Q_{sc}$ are defined as the QoIs in this document. We construct surrogate models for efficiency with up to three fidelities using our proposed method. We examine particle arrays resulting from Vogel spirals. Since the number of interactions of incident waves from particles influences the magnetic field, the number of nanoparticles in a plasmonic array has a substantial effect on the local extinction field caused by plasmonic arrays. The configurations of Vogel spirals with particle numbers in the set $\{2,25,50\}$ that define fidelities F1 through F3 are depicted in Fig. \ref{vogel}.
$\lambda\in[200,800]$ nm, $\alpha_{vs}\in[0,2\pi]$ rad, and $a_{vs}\in(1,1500)$ are determined to be the parameter space. These are, respectively, the incidence wavelength, the divergence angle, and the scaling factor. A Sobol sequence is utilized to choose inputs. The computing time requires to execute CDA increases exponentially as the number of nanoparticles increases. Consequently, the proposed sampling approach results in significant reductions in computational costs.

The response of a plasmonic array to electromagnetic radiation is calculable using the solution of the local electric fields, $\E_{loc}(\r_j)$, for each nano-sphere. Considering $N$ metallic particles defined by the same volumetric polarizability $\alpha(\omega)$ and situated at vector coordinates $\r_i$, it is possible to calculate the local field $\E_{loc}(\r_j)$ by solving~\citep{guerin2006effective} the corresponding linear equation.

\begin{equation}\label{eqFoldyLax}
\E_{loc}(\mathbf r_i)=\mathbf E_0({\mathbf r_i})-\frac{\alpha k^2}{\epsilon_0} \sum_{j=1,j\neq i}^{N} \mathbf{\tilde{G}}_{ij} \mathbf E_{loc}(\mathbf r_j)
\end{equation}
in which $\mathbf E_0(\r_i)$ is the incident field, $k$ is the wave number in the background medium, $\epsilon_0$ denotes the dielectric permittivity of vacuum ($\epsilon_0 = 1$ in the CGS unit system), and $\mathbf{\tilde{G}}_{ij}$ is constructed from $3\times3$ blocks of the overall $3N\times3N$ Green's matrices for the $i$th and $j$th particles. $\mathbf{\tilde{G}}_{ij}$ is a zero matrix when $j = i$, and
otherwise calculated as
\begin{equation}\label{Gij}\tilde{\bf G}_{ij}=\frac{\exp(ikr_{ij})}{r_{ij}}\left\{{\bf I}-\widehat{\r}_{ij}\widehat{\r}_{ij}^T-\left[\frac{1}{ikr_{ij}}+\frac{1}{(kr_{ij})^2}({\bf I}-3\widehat{\r}_{ij}\widehat{\r}_{ij}^T)\right]\right\}
\end{equation}
where $\widehat{\r}_{ij}$ denotes the unit position vector from particles $j$ to $i$ and $r_{ij}=|\r_{ij}|$. 
By solving Eqs.~\ref{eqFoldyLax} and \ref{Gij}, the total local fields $\mathbf E_{loc}(\r_i)$, and as a result the scattering and extinction cross-sections, are computed. Details of the numerical solution can be found in \citep{razi2019optimization}.

$Q_{ext}$ and $Q_{sc}$ are derived by normalizing the scattering and extinction cross-sections relative to the array's entire projected area. We considered the Vogel spiral class of particle arrays, which is described by~\citep{christofi2016probing}
\begin{equation}\label{eqVSa}
\rho_n = \sqrt{n}a_{vs} \quad \mbox{and}\quad 
\theta_n = n\alpha_{vs},
\end{equation}
where $\rho_n$ and $\theta_n$ represent the radial distance and polar angle of the $n$-th particle in a Vogel spiral array, respectively. Therefore, the Vogel spiral configuration may be uniquely defined by the incidence wavelength $\lambda$, the divergence angle $\alpha_{vs}$, the scaling factor $a_{vs}$, and the number of particles $n$.

\begin{figure}[]
	\centering
	\begin{subfigure}[b]{0.32\linewidth}
		\includegraphics[width=1\textwidth]{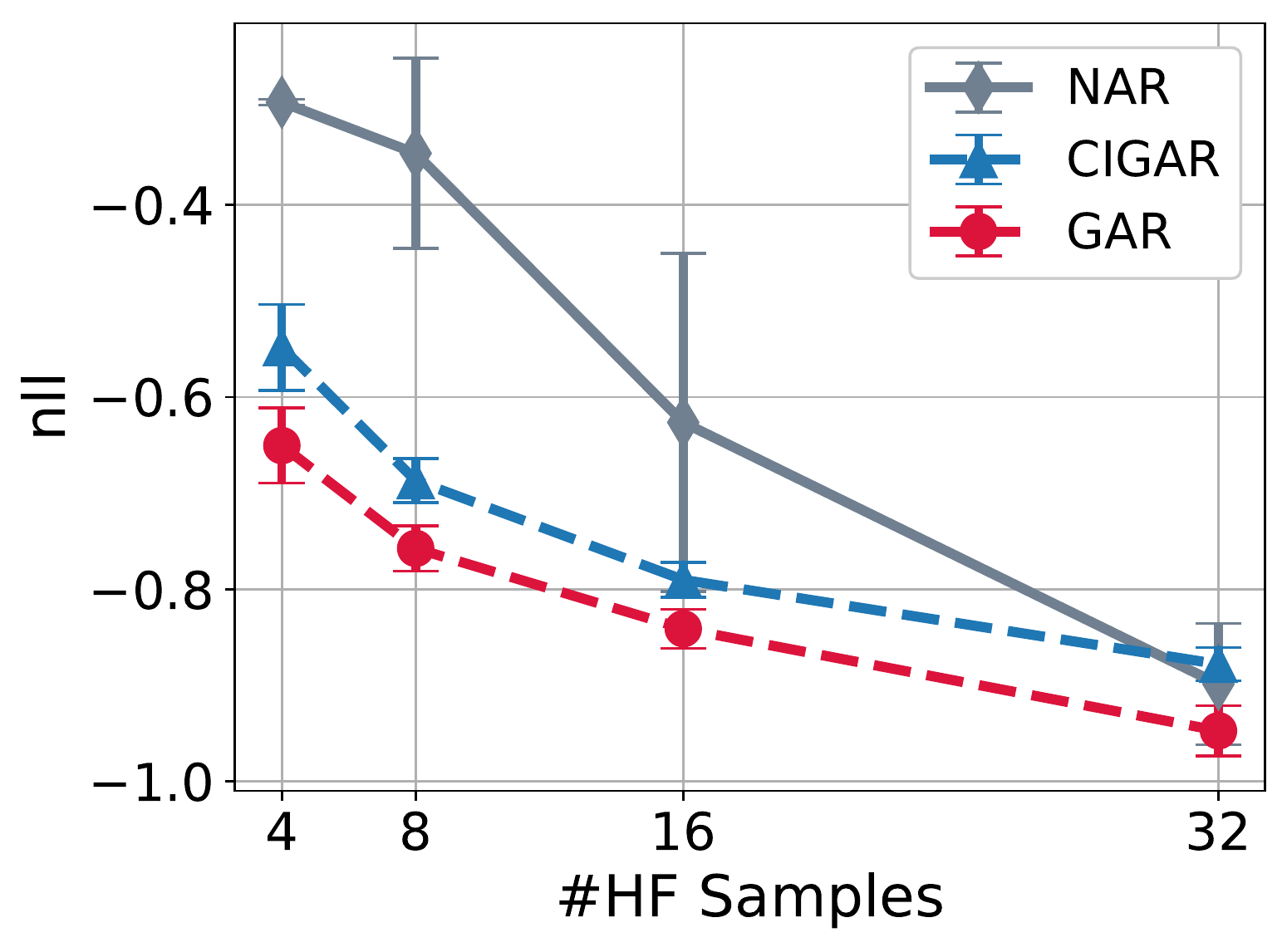}
	\end{subfigure}
	\begin{subfigure}[b]{0.32\linewidth}
		\includegraphics[width=1\textwidth]{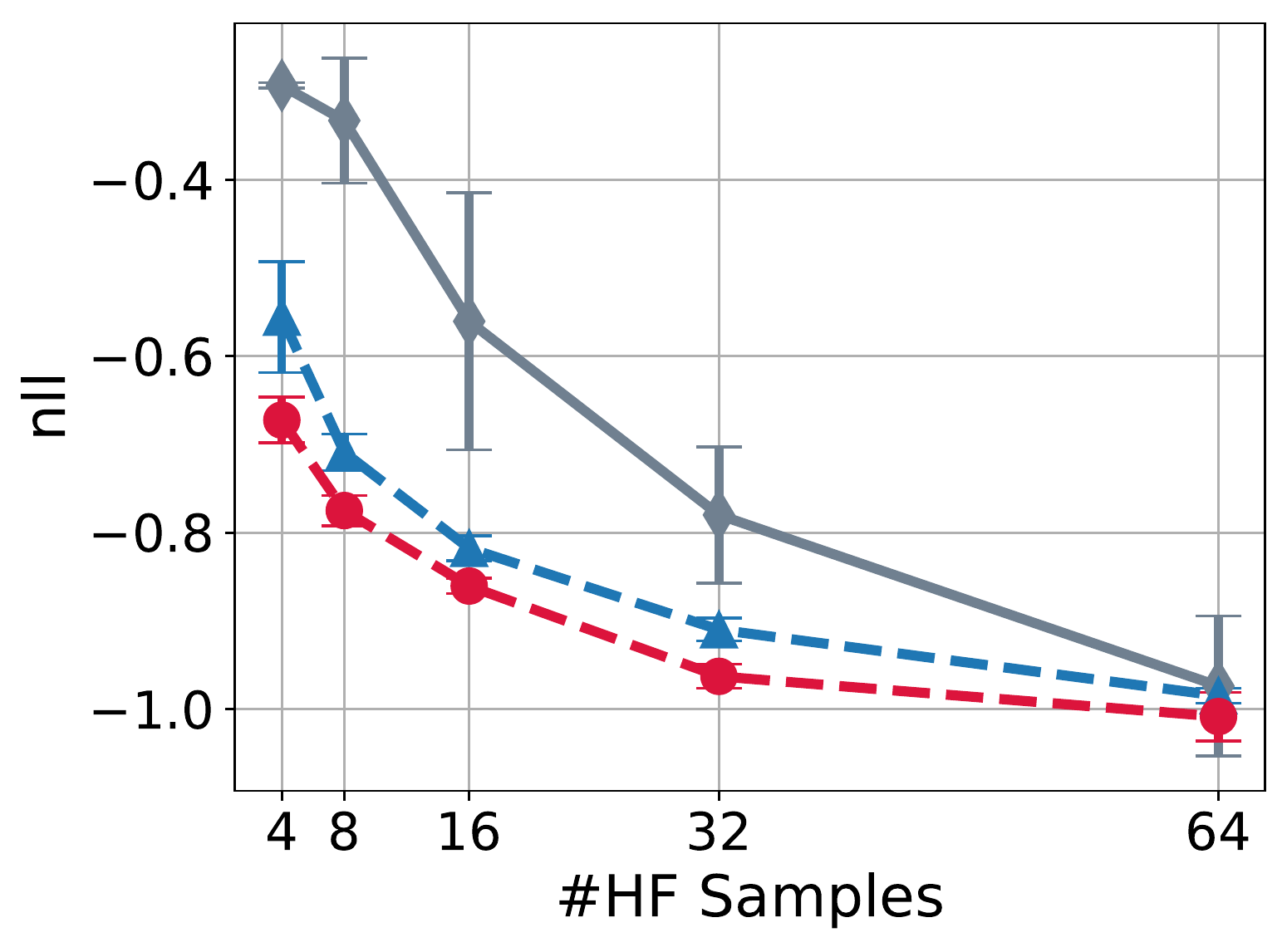}
	\end{subfigure}
	\begin{subfigure}[b]{0.32\linewidth}
		\includegraphics[width=1\textwidth]{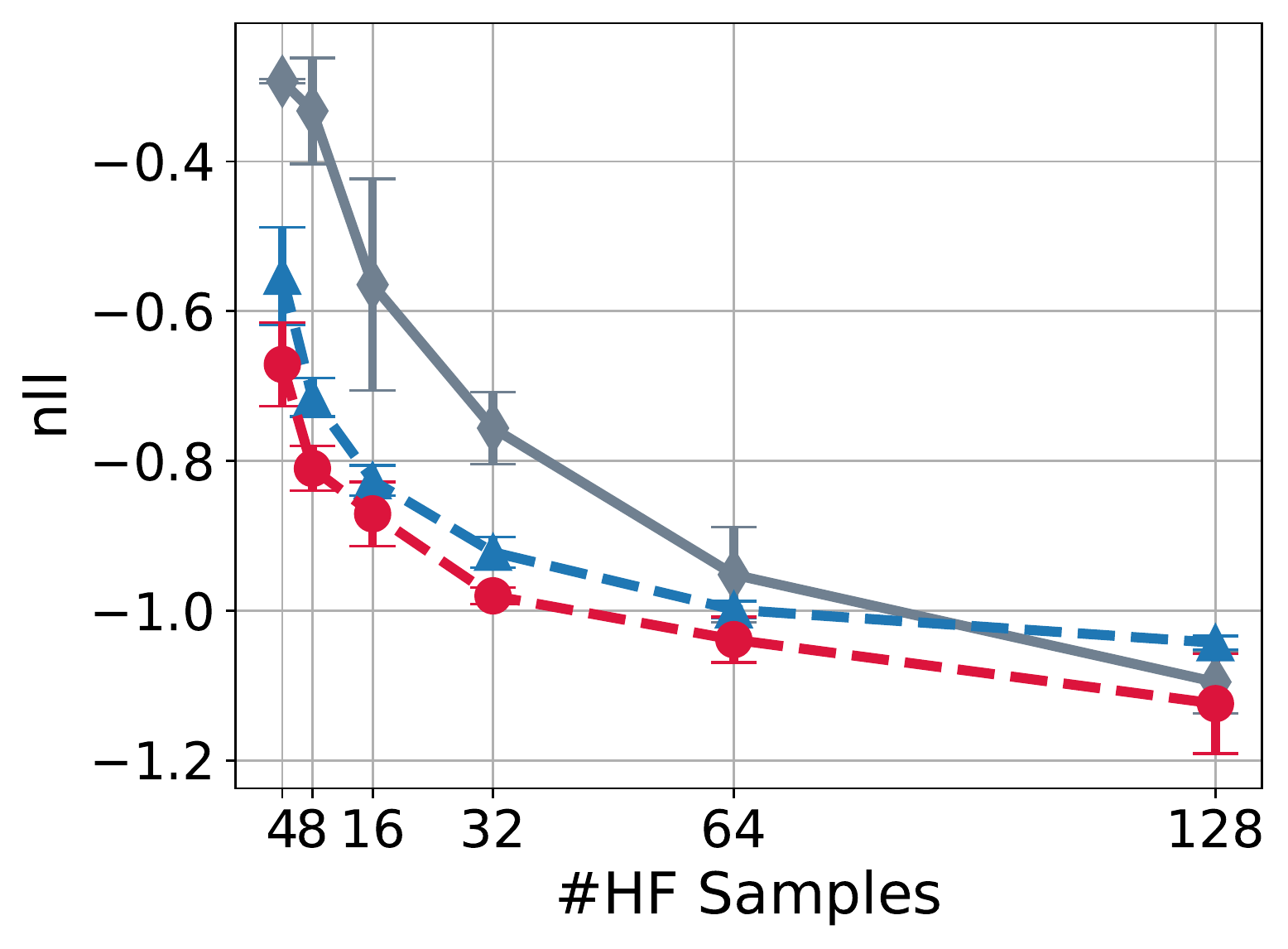}
	\end{subfigure}
	\caption{{NLL  with low-fidelity training sample number fixed to \{32,64,128\} for topology structure predictions.}}
	\label{fig: top nll}
\end{figure}

\begin{figure}[]
	\centering
	\begin{subfigure}[b]{0.32\linewidth}
		\includegraphics[width=1\textwidth]{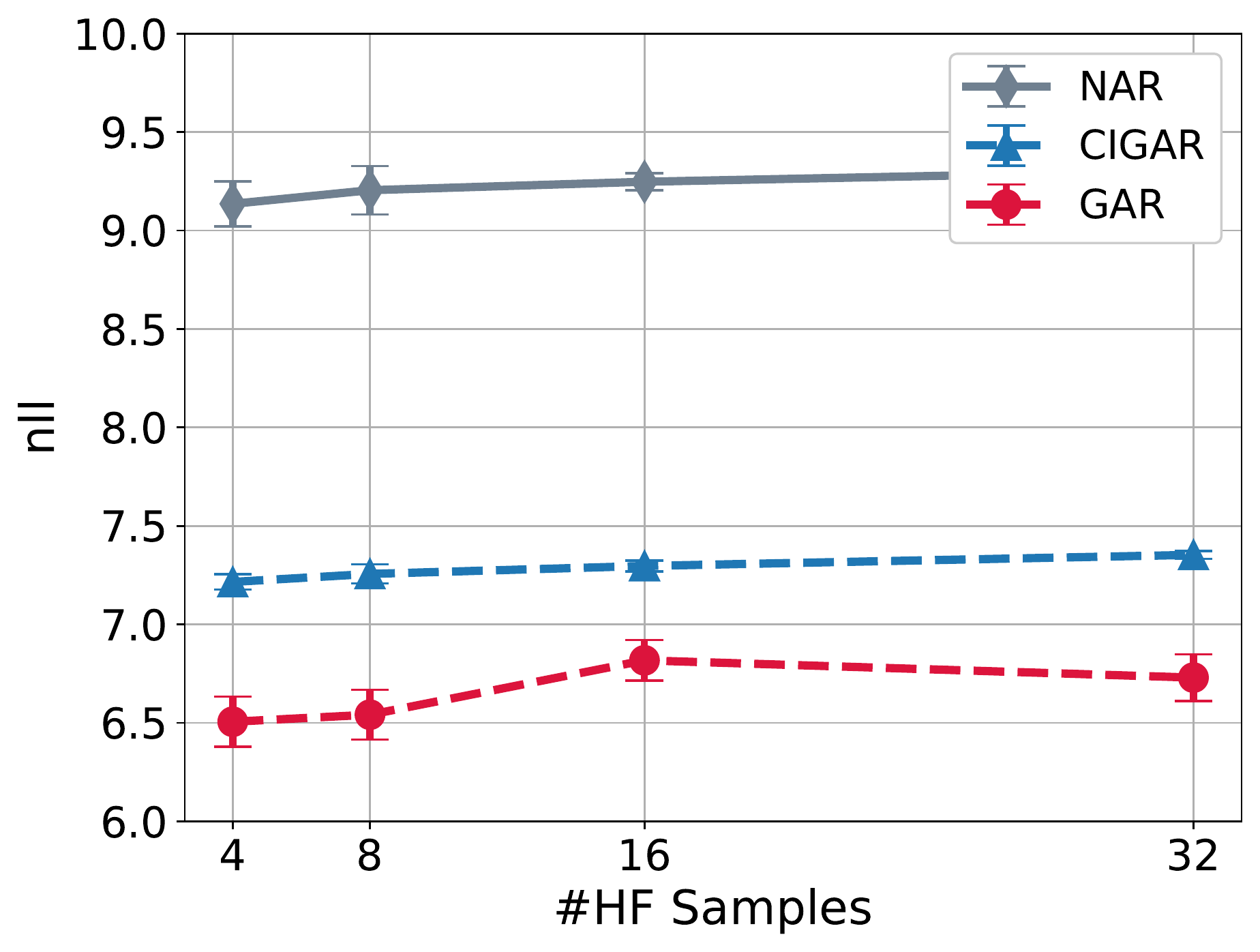}
		\caption{ECD+IP}
	\end{subfigure}
	\begin{subfigure}[b]{0.32\linewidth}
		\includegraphics[width=1\textwidth]{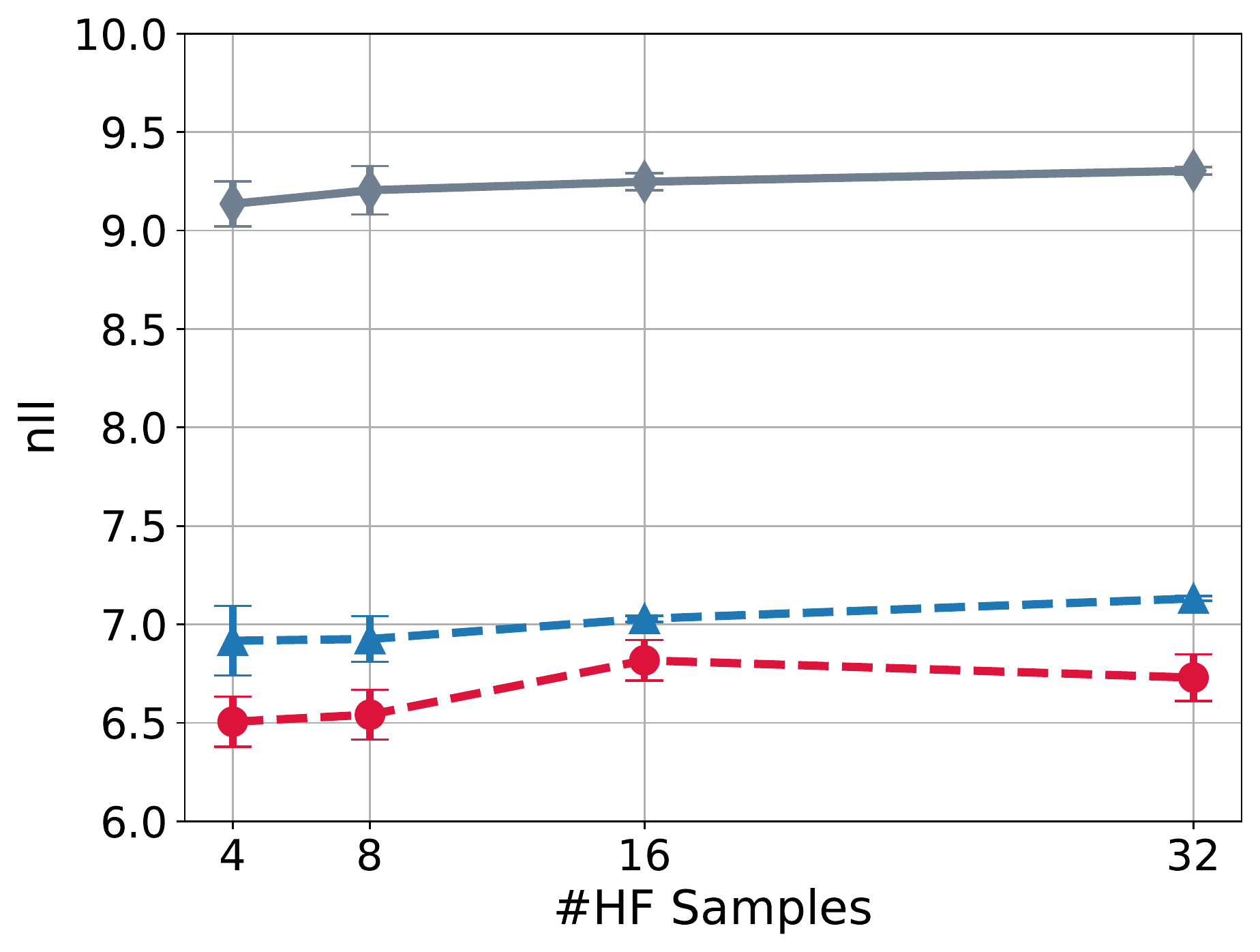}
		\caption{ECD}
	\end{subfigure}
	\begin{subfigure}[b]{0.32\linewidth}
		\includegraphics[width=1\textwidth]{./figure_rebuttal/nll_pic/false//SOFC/SOFC_MF_y1y2}
		\caption{IP}
	\end{subfigure}
	\caption{{NLL for SOFC with low-fidelity training sample number fixed to 32.}}
	\label{fig: SOFC nll}
\end{figure}

\subsection{Metircs for the Predictive Uncertainty}
{
Despite that RMSE has been used as a standard metric for evaluating the performance of a multi-fidelity fusion algorithm \citep{xing2021residual,xing2021deep,li2020deep,perdikaris2017nonlinear}, a metric that considers the predictive uncertainty is also important~\citep{wu2022multi}, particularly when the downstream applications rely heavily on the quality of the predictive confidence, \eg in MFBO~\citep{li2020multi}.
To assess the proposed method more comprehensively, we evaluate the quality of the predictive posterior using the most commonly used metric, negative-log-likelihood (nll).} 

{
We repreduce Figs. \ref{fig: topop rmse} and \ref{fig: sofc rmse} using exactly the same experimental setups but with the nll metric, and the results are shown in Figs. \ref{fig: top nll} and \ref{fig: SOFC nll}.
Note that the nll of DC and MF-BNN is every poor, probably due to our implementations, and cannot be fitted into the figures. Thus they are not shown in the figures.
Also note that some figures show negative nll. This is because our computation of the nll omits the constant term. Nevertheless, this modification does not affect the comparison.
We can see that for the topology structure posterior in  Fig.~\ref{fig: top nll}, the results are consistent with the conclusion drawn on the RMSE results. 
Since the CIGAR ignores the inter-output correlations, it will overestimate the covariance determinant, leading to higher nll than GAR. The NAR starts with poor performance with a small number of training data. 
It consistently improves with increasing number of training data and end up with similar perform as \ours and \ourss.
Similarly, the SOFC results are consistent with the conclusion for the RMSE results.
However, all methods demonstrated do not improve their performance significantly with more training data. 
This is caused by the calculations of the nll and the data itself. More specifically, in the ECD and IP fields, there are a few spatial locations where the recorded values are almost constant (caused by the Dirichlet boundary conditions). In this case, the nll will be dominated by the logarithm of variance and becomes less informative for the quality of the predictive variance. We thus see that the nll in \Figref{fig: SOFC nll} fluctuates around the same values no matter how many training points are used. 
We leave investigating the uncertainty metric using more advance metric (\eg \citep{wu2021quantifying}) more in depth in the future considering the scope of this work.
}

\end{document}